\documentclass{article}
\usepackage{amsmath}
\usepackage{graphicx}
\usepackage{enumerate}
\usepackage{enumitem}
\usepackage{url}

\usepackage[margin=1in]{geometry}

\usepackage{amssymb, amsthm,mathtools}

\usepackage{authblk}
\newtheorem{theorem}{Theorem}[section]
\newtheorem{proposition}[theorem]{Proposition}
\newtheorem{assumption}{Assumption}

\newtheorem{corollary}[theorem]{Corollary}
\newtheorem{lemma}[theorem]{Lemma}
\newtheorem{definition}[theorem]{Definition}

\newtheorem{conjecture*}{Conjecture}
\usepackage{algorithmic}
\usepackage{algorithm}

\usepackage[colorlinks]{hyperref}

\usepackage{url}
\usepackage{subcaption}
\usepackage{makecell}
\usepackage{setspace}
\usepackage{booktabs}
\usepackage{wrapfig}

\newcommand{\lfm}{\texttt{LFM}}

\newcommand{\calP}{\mathcal{P}}

\newcommand{\calN}{\mathcal{N}}

\newcommand{\calD}{\mathcal{D}}

\newcommand{\E}{\mathbb{E}}
\newcommand{\R}{\mathbb{R}}
\newcommand{\W}[0]{\mathcal{W}_2}

\begin{document}

\title{Local Flow Matching Generative Models}

\author[1]{Chen Xu}
\author[2]{Xiuyuan Cheng}
\author[1]{Yao Xie}
\affil[1]{{\small H. Milton Stewart School of Industrial and Systems Engineering, Georgia Institute of Technology.}}
\affil[2]{{\small Department of Mathematics, Duke University}}

\date{\vspace{-20pt}}

\maketitle

\begin{abstract}
Flow Matching (FM) is a simulation-free method for learning a continuous, invertible flow that interpolates between two distributions, and in particular generates data from noise. Inspired by the variational nature of the diffusion process as a gradient flow, we introduce a stepwise FM model, Local Flow Matching (LFM), which sequentially learns a sequence of FM submodels, each matching a diffusion process up to the time-step size in the data-to-noise direction. In each step, the two distributions to be interpolated by the sub-flow model are closer than those in the full-flow matching model, which interpolates data to noise distributions, enabling smaller models with more efficient training. This variational perspective also allows us to prove a theoretical generation guarantee for the proposed flow model in terms of the $\chi^2$-divergence between the generated and true data distributions, leveraging the contraction property of the diffusion process. In practice, the stepwise structure of LFM is naturally amenable to model distillation, and various distillation techniques can be applied to accelerate generation. We empirically demonstrate that LFM achieves competitive generative performance compared to FM on unconditional generation of tabular and image datasets, and on conditional generation of robotic manipulation policies.
\end{abstract}

\section{Introduction}

Generative modeling has revolutionized machine learning by enabling the creation of realistic synthetic data across various domains. 
Earlier generative models, such as Generative Adversarial Networks (GAN)  \cite{goodfellow2014generative}, Variational Autoencoders (VAE)  \cite{Kingma2014}, and Normalizing Flows  \cite{nflow_review} have achieved impressive results but face challenges such as mode collapse and training instability  \cite{saxena2021generative}. 
Recently, diffusion models  \cite{ho2020denoising,song2021scorebased} and closely related flow-based models  \cite{lipman2023flow,albergo2023building,albergo2023stochastic,fan2022variational,xu2023normalizing} have emerged as competitive alternatives to these earlier approaches (e.g.,  Generative Adversarial Networks (GAN)  \cite{goodfellow2014generative},
Variational Autoencoders (VAE)  \cite{Kingma2014} and Normalizing Flows  \cite{nflow_review}), offering notable advantages in stability, diversity, and scalability.

Diffusion models have been successfully applied across various domains,  including audio synthesis  \cite{kong2021diffwave}, text-to-image generation  \cite{rombach2022high}, and imitation learning for robotics  \cite{chi2023diffusionpolicy}, among others. A key advantage of score-based diffusion models is their {\it simulation-free} training, in which the training objective is formulated as a ``score matching'' loss, defined as the squared $L^2$ loss averaged over data samples. This formulation, under a Stochastic Differential Equation (SDE) formulation  \cite{song2021scorebased}, enables the training of a continuous-time score function parametrized by a neural network, even in high-dimensional settings with large datasets. 

Compared to the SDE generation in diffusion models, the Ordinary Differential Equation (ODE) generation of a trained diffusion or flow model is deterministic and typically uses fewer time steps. Leveraging the ODE formulation, Flow Matching (FM) models  \cite{lipman2023flow,albergo2023building,liu2022rectified} introduced a simulation-free training of flow models by regressing vector fields using a squared-$L^2$ ``matching'' loss. These models have achieved state-of-the-art performance across various tasks,
including text-to-image generation  \cite{esser2024scaling}, 
humanoid robot control  \cite{rouxel2024flow}, 
audio generation  \cite{guan2024lafma}, 
and discrete data applications, such as code generation \cite{gat2024discrete}. 
The simulation-free training of flow models significantly reduces the computational challenges faced by earlier Continuous Normalizing Flow (CNF) models that relied on likelihood-based training \cite{grathwohl2018ffjord}. Beyond the efficient generation enabled by ODE flow models, recent advances in {\it model distillation}  \cite{salimans2022progressive,liu2023flow,song2023consistency} 
-- a process where a complex or computationally intensive ``teacher model'' is used to train a simpler or smaller ``student model'' while preserving much of the teacher model's performance-- 
have further accelerated the generation of large generative models (both diffusion and flow types), enabling high-quality results with an extremely small number of steps in some cases.

Recently, a flow-based generative network, called JKO-iFlow, was proposed  \cite{xu2023normalizing}, inspired by the Jordan-Kinderlehrer-Otto (JKO) scheme, which leverages stepwise training to mimic the discrete-time dynamics of the Wasserstein gradient flow. By stacking residual blocks sequentially, JKO-iFlow enables efficient block-wise training, reducing memory usage and addressing the challenges of end-to-end training. The stepwise approach has also been utilized in the literature to achieve computational advantages, including block-wise training of ResNet within the GAN framework  \cite{johnson2019framework} and flow-based generative models implementing discrete-time gradient descent in Wasserstein space based on the JKO scheme  \cite{
mokrov2021large,xu2023normalizing,vidal2023taming}. 
However, JKO-iFlow is not simulation-free, which raises the key research question motivating this work: 

\

Can we develop {\it simulation-free} iterative flow models that retain the benefits of block-wise iterative training, are scalable to high-dimensional data, and are amenable to theoretical analysis with established performance guarantees?

\

In this paper, we propose a simulation-free flow-based generative model called ``Local Flow Matching'' (\lfm{}), which can be viewed as an iterative block-wise version of FM. We refer to previous FM approaches as {\it global} FMs. The proposed \lfm{} trains a sequence of small (sub-)flow models that, when concatenated, transport invertibly between data and noise distributions.
In each sub-flow, any FM model can be plugged in. Unlike global FM, which directly interpolates between noise and data distributions that may differ significantly,  \lfm{} decomposes this task into smaller, incremental steps. Each step interpolates between distributions that are closer to each other (hence ``local''), as illustrated in Figure \ref{fig:lfm_illu}.
Since the source and target distributions in each step are not too far apart, \lfm{} enables the use of smaller models, potentially leading to faster convergence during training. This reduces memory requirements and computational costs without compromising model quality. Beyond its efficiency in training from scratch, the \lfm{} framework is compatible with various distillation techniques. Empirically, we found that our model outperforms global FM models after distillation, with practical advantages.

Specifically, in each step of \lfm{}, we train a sub-flow model to interpolate between $(p_{n-1}, p_n^*)$, where $p_n^*$ is obtained by evolving $p_{n-1}$ along the diffusion process for a time duration corresponding to the step size. The forward process (data-to-noise) begins from $p_0$, the data distribution, and ends at some $p_N$, which is close to the normal distribution. The reverse process (noise-to-data) leverages the invertibility of each sub-flow model to generate data from noise by ODE integration. By constructing each step to use the (marginal distribution of) a diffusion process as a ``target,'' we establish a theoretical foundation for the generation guarantee of \lfm{} by connecting to the diffusion theory.

In summary, the contributions of the work are as follows:

\begin{itemize}

\vspace{2pt}

    \item 
{\it Stepwise Flow-Matching framework}: 
We propose a new flow-based generative model, which trains a consecutive series of FM sub-models to generate data from a normal distribution, approximately following the diffusion process in the data-to-noise direction. The decomposition of the global flow into small steps results in distributions closer to each other at each step, enabling smaller sub-models and faster training convergence. The concatenation of all sub-models provides an invertible flow that operates in the reverse direction, enabling efficient generation from noise to data. 

\vspace{2pt}
\item 
    {\it Theoretical guarantee}: We establish a generation guarantee, demonstrating how the distribution generated by \lfm{} approximates the data distribution $P$ under $\chi^2$ divergence, which further implies Kullback–Leibler (KL) and Total Variation (TV) distance guarantees. Specifically, we prove an $O(\varepsilon^{1/2})$-$\chi^2$ guarantee, where $\varepsilon$ is the $L^2$ error of the FM training objective. These guarantees are derived under technical assumptions motivated by our stepwise FM approach applied to the Ornstein–Uhlenbeck process (OU) process.  
    Our theory extends to the scenarios where $P$ has only finite second moments (using a short-time initial diffusion to smooth $P$), such as when $P$ is compactly supported.

    \vspace{2pt}

    \item 
    {\it Flexibility and empirical performance}:
    Our framework enables flexible incorporation of various FM designs within each local sub-flow model. Additionally, the stepwise structure of \lfm{} is naturally amenable to distillation, making it compatible with different distillation techniques. 
    Empirically, \lfm{} demonstrates improved training efficiency and competitive generative performance compared with existing FM methods across various tasks such as likelihood estimation, image generation, and robotic manipulation policy learning.

\end{itemize}

\paragraph*{Notations}
We use the same notation for the distribution and its density (with respect to the Lebesgue measure on $\R^d$) when there is no confusion.
For a distribution $P$, $M_2(P) := \int_{\R^d} \| x \|^2 dP(x)$. 
Let $\calP_2 = \{ P \text{ on $\R^d$}, \, s.t., M_2(P) < \infty \}$.
The Wasserstein-2 distance, denoted by $\W(P,Q)$, gives a metric on $\calP_2$.
For $T: \R^d \to \R^d$, 
$T_\# P$ denotes the {\it pushforward} of $P$, i.e.,
$T_\# P(A) = P( T^{-1}(A))$ for a measurable set $A$.
We also write $T_\# p$ for the pushforwarded density.

\subsection{Related works}

\paragraph{Continuous normalizing flow (CNF)}
CNF uses a neural ODE model  \cite{chen2018neural} {optimized} by maximizing the model likelihood, which the ODE parametrization can compute, 
on observed data samples  \cite{grathwohl2018ffjord}. 
To facilitate training and inference, subsequent works have proposed advanced techniques such as trajectory regularization  \cite{finlay2020train,onken2021ot} and block-wise training  \cite{fan2022variational,xu2023normalizing}. These techniques help stabilize the training process and improve the model's performance. 
Despite successful applications in time-series analyses  \cite{de2020normalizing}, uncertainty estimation  \cite{berry2023normalizing}, optimal transport  \cite{xu2025computing}, and astrophysics  \cite{langendorff2023normalizing}, a main drawback of CNF is its computational cost since backpropagating the neural ODE in likelihood-based training is expensive (non-simulation free) and not scalable to high dimensions.

\paragraph{Simulation-free flow models} 
Flow Matching (FM) models  \cite{lipman2023flow,albergo2023building,liu2022rectified} are simulation-free and a leading class of generative models. We review the technical details of FM in Section \ref{subsec:review-FM}.
FM methods are compatible with different choices to interpolate two random end-points drawn from the source and target distributions, e.g., straight lines (called ``Optimal Transport (OT) path'') or motivated by the diffusion process (called ``diffusion path'')  \cite{lipman2023flow}.
Later works also considered pre-computed OT interpolation  \cite{tong2023improving}, and stochastic interpolation paths  \cite{albergo2023stochastic}.
More recently,  \cite{geng2025mean} learns the average rather than the instantaneous velocity to allow one-step high-fidelity generation.
All previous works train a \textit{global} flow model to match between the two distributions, which could require a large model that takes a longer time to train. 
In this work, we propose to train multiple smaller flow models. Our approach is compatible with any existing FM method to train these so-called local flows, making it a flexible and extensible framework.

\paragraph{Accelerated generation and model distillation} 
Model compression and distillation have been intensively developed to accelerate the generation of large generative models. 
\cite{baranchuk2021distilling} proposed learning a compressed student normalizing flow model by minimizing the reconstruction loss from a teacher model.
For diffusion models, progressive distillation was developed in  \cite{salimans2022progressive},
and Consistency Models  \cite{song2023consistency} demonstrated high-quality sample generation by directly mapping noise to data.
For FM models,  \cite{liu2023flow} proposed to distill the ODE trajectory into a single mapping parametrized by a network, which can reduce the number of function evaluations (NFE) to one. 
The approach was later effectively applied to large text-to-image generation  \cite{liu2024instaflow}.
More recent techniques to distill FM models include dynamic programming to optimize stepsize given a budget of NFE   \cite{nguyen2024bellman}.
In our work, each local flow model can be distilled into a single-step mapping following  \cite{liu2023flow}, and the model can be further compressed if needed. Our framework is compatible with different distillation techniques. 

\paragraph{Theoretical guarantees of generative models}

Guarantees of diffusion models, where the generation process utilizes an SDE (random) \cite{lee2023convergence,chen2022sampling,chen2023improved, benton2024nearly} or ODE (deterministic) sampler \cite{chen2023restoration,chen2024probability,li2024towards,li2024sharp,huang2025convergence}, 
have been recently intensively developed. 
The proposed LFM model differs fundamentally from diffusion-based generative models. 
In diffusion models, the step count $N$ refers to the number of discrete time points used  in the reverse process (either SDE or ODE sampling).
Existing theories show that, for commonly used discretizations, achieving a target accuracy $\varepsilon$ in KL or TV typically requires a number of steps that grows polynomially with $\varepsilon^{-1}$, with the precise rate depending on the choice of sampler, discretization order, and error metric.
In contrast, in our analysis of LFM,  $N$  represents the number of  {\it blocks}, where each block corresponds to evolving the density along the Wasserstein gradient flow of an OU process for a constant $O(1)$ time interval and training a flow matching model between the adjacent densities. 
After training, each block defines a neural ODE, which could be integrated numerically on a finer time grid.
These numerical integration steps are closer in spirit to diffusion sampling steps, but are distinct from the theoretical notion $N$ of LFM; hence, we do not compare them directly. 
Our convergence analysis leverages 
the exponential contraction induced by the OU process together with control of the flow matching error at each block.
Under this structure, we obtain a logarithmic dependence  $N = O(\log 1/\varepsilon)$
on the target error  $\varepsilon$  in terms of the number of blocks.

There are comparatively fewer theoretical results for flow-based models such as CNFs or FM.
For CNF trained by maximizing likelihood,  \cite{marzouk2024distribution} proved a non-parametric statistical convergence rate rather than algorithmic convergence. 
For FM models, Wasserstein-distance guarantees were obtained in \cite{benton2024error}  and  \cite{gao2024convergence}, the latter including sample complexity analysis.
However, Wasserstein bounds are weaker than KL or TV guarantees, which are more directly relevant for likelihood-based evaluation and information-theoretic interpretation.
Our work provides a $\chi^2$ bound, which yields stronger control and implies both KL and TV convergence.
 Recently, \cite{silveri2024theoretical} proved KL guarantees, but only for an SDE-based variant of the FM introduced in  \cite{albergo2023stochastic}.
 
 The theory of LFM is closest to that of JKO-iFlow \cite{xu2023normalizing}, 
a stepwise-trained CNF motivated by the JKO scheme, which also achieves KL convergence 
with $N = O(\log 1/\varepsilon)$ in a blockwise sense  \cite{cheng2024convergence}.
A key distinction is that \cite{cheng2024convergence} measures the error in each block via the Wasserstein gradient of the KL divergence, whereas our analysis measures learning error via the flow-matching loss, which directly corresponds to the neural network training objective and is therefore closer to practice. Note that to obtain strong $\chi^2$ guarantees, our analysis additionally relies on regularity assumptions on the intermediate densities (Assumption \ref{assump:A1-A2-A3}); we explain the rationale behind these assumptions after introducing them and discuss possible relaxations in Section~\ref{sec:discuss}. Another important difference is that JKO-iFlow adopts likelihood-based training and is not simulation-free, whereas LFM is simulation-free, which makes it a more practical approach. Compared with original (non-stepwise) flow-matching models, LFM also reduces computational and memory costs in practice.

\section{Preliminaries}\label{sec:prelim}

\paragraph{Flow models and neural ODE}
In the context of generative models, the goal is to generate a data distribution $P$, which is typically accessible only through a finite set of training samples.
When $P$ has density, we denote it by $p$. A continuous-time flow model trains a neural ODE  \cite{chen2018neural} to transform a standard distribution $q$, typically $\calN( 0, I)$ (referred to as ``noise''), into the data distribution $p$.
Specifically, a neural ODE model defines a velocity field  $v(x,t; \theta)$ on $\R^d \times [0,T]$  parametrized by a neural network with trainable parameters $\theta$. In a flow model, the solution of the ODE
\begin{equation}\label{eq:ode-cnf}
 \dot x(t) = v( x(t), t; \theta), \quad t \in [0,T],  \quad x(0) \sim \rho_0,
 \end{equation}
 is used, 
 where $\rho_0$ is a distribution, and the law of $x(t)$ is denoted as $\rho_t$. 
 When the ODE is well-posed, it provides a continuous and invertible mapping from the initial value $x(0)$ to the terminal value $x(T)$. The inverse mapping from $x(T)$ to $x(0)$ can be computed by integrating the ODE \eqref{eq:ode-cnf} in reverse time. This setup allows flexibility in setting $\rho_0$ as the noise distribution and $\rho_T$ as the data, or vice versa. 
 Earlier flow models, such as continuous-time CNFs  \cite{grathwohl2018ffjord} train the velocity field $v(x,t;\theta)$ using likelihood-based objectives, where $\rho_0$ is noise and the likelihood of $\rho_T$ is maximized on data samples. However, these approaches are not simulation-free and can be challenging to scale to high-dimensional and large-sized data. 

\paragraph{Likelihood computation}

The flow-based generative model \eqref{eq:ode-cnf} enables the evaluation of the negative log-likelihood (NLL), a metric we use to assess model performance in the experimental sections. Using the instantaneous change-of-variable formula in neural ODEs \cite{chen2018neural}, we know that for a trained flow model $\hat v(x,t;\theta)$ 
on $t\in [0,1]$ that interpolates between $p$ and $q$, the log-likelihood of data $x\sim p$ can be expressed as 
\begin{equation}
\log p(x)
= \log q(x(1)) + \int_0^1 \nabla \cdot \hat v(x(s),s;\theta) ds, 
\label{likelihood}
\end{equation}
where $x(t)$ solves the ODE $\dot x(t) = \hat{v}( x(t), t;\theta )$ with initial condition $x(0) = x$,
and $\nabla \cdot \hat v(\cdot, s;\theta)$ represents the trace of the Jacobian matrix of the network function. 

\paragraph{Ornstein-Uhlenbeck (OU) process}
The OU process in $\R^d$ is governed by the following SDE:
    \(dX_t = - X_t dt + \sqrt{2} dW_t,\)
where the equilibrium density is $q \propto e^{-V}$ with $V(x) = \|x\|^2/2$; $W_t$ is a standard Wiener process (Brownian motion). In other words, $q$ corresponds to the standard normal density $\calN(0,I)$.
Suppose $X_0 \sim \rho_0$, where $\rho_0$ is some initial density at time zero (the initial distribution may not have a density). Let $\rho_t$ denote the marginal density of $X_t$ for $ t >0$.
The time evolution of $\rho_t$ is described by the Fokker–Planck Equation (FPE):  
$\partial_{t}\rho_t = \nabla\cdot(\rho_t \nabla V + \nabla \rho_t)$, $V(x) = \|x\|^2/2$,
which determines $\rho_t$ given the initial value $\rho_0$.
We introduce the operator $({\rm OU})_0^t $ and write 
\begin{equation}\label{eq:def-operator-OU0t}
    \rho_t = ({\rm OU})_0^t \rho_0.
\end{equation}
Equivalently, $\rho_t$ is the probability density of the random vector
$Z_t: = e^{-t} X_0 + \sigma_t Z$, where $\sigma_t^2 := 1-e^{-2 t}$,
 $Z \sim \calN(0,I_d)$ and is independent of $X_0$.

 \paragraph{Jordan-Kinderlehrer-Otto (JKO) scheme}
The OU process solves the continuous-time Wasserstein gradient flow that minimizes the KL-divergence \cite{ambrosio2005gradient},
and thus is closely related to a discrete-time gradient descent dynamic that also minimizes the  KL-divergence
\begin{align}\label{eq:JKO-classic}
    \rho_{n+1} = \text{arg} \min_{\rho\in \calP_2 }  
     {\rm KL} (\rho || q) + \frac{1}{2 \gamma} \W^2(\rho_n, \rho),
\end{align}
which is known as the JKO scheme  \cite{jordan1998variational}.
The scheme \eqref{eq:JKO-classic} computes a sequence of distributions $\rho_n$, $n=0,1,...$ by 
starting from $\rho_0\in \calP_2$ the $\W$ space (probability distribution in $\R^d$ with finite second moment equipped with $\W$ distance), 
where $\gamma > 0$ is the step size. 
The JKO-iFlow model  \cite{xu2023normalizing} implements \eqref{eq:JKO-classic}  in a step-wise flow network, 
solving for the minimization of the KL-divergence in the training objective in each step.
Our \lfm{} model can be viewed as a variant of the JKO scheme that enjoys simulation-free training as FM methods.
Though the step-wise training objective differs,
we expect each step in \lfm{} also pushes the sequence of data distributions closer to the final normal density $q$, similarly to the JKO scheme.
This intuition will be used in proving the convergence of the \lfm{} model in Section \ref{sec:theory}.

\section{Local Flow Matching}\label{sec:method}

The essence of the proposed 
Local Flow Matching (\lfm{}) model is to decompose a single flow from data to noise (and back) into multiple segments and apply FM on each segment sequentially. This section introduces the method, with additional algorithmic details provided in Section \ref{sec:algorithm}.

\begin{figure*}[t]
    \centering
    \hspace{5pt}
    \includegraphics[width=.8\linewidth]{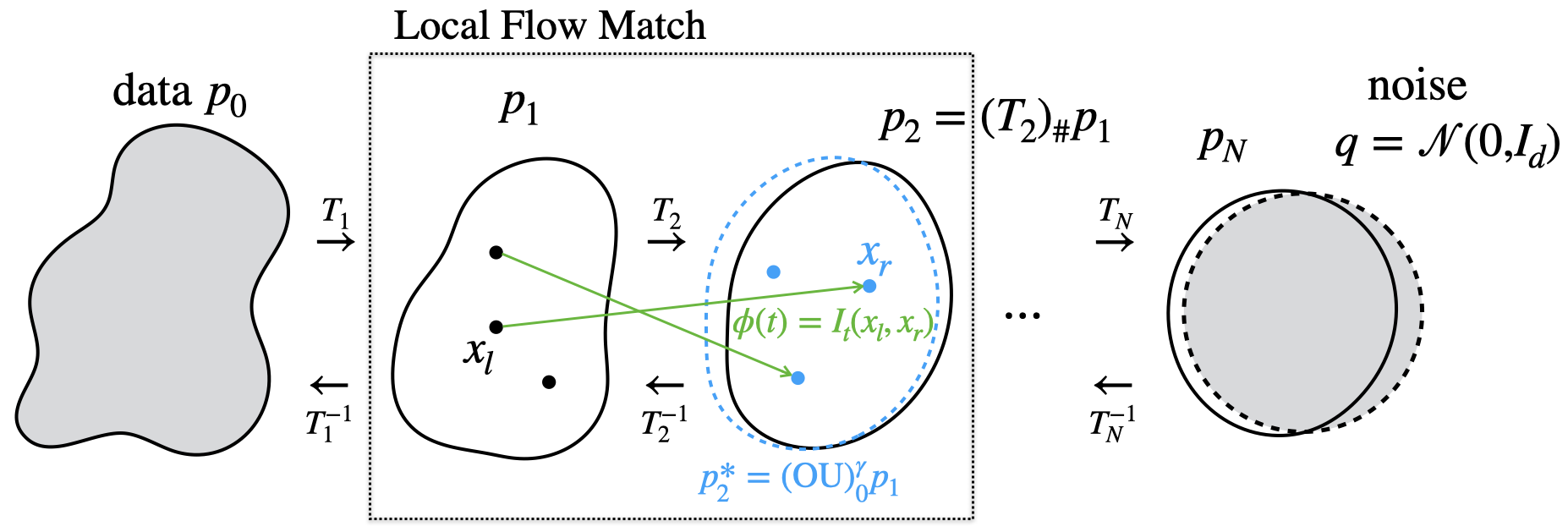}

    \caption{
    Illustration of the proposed \lfm{} model. 
    In the $n$-step step, a local FM model is trained 
    to interpolate from $p_{n-1}$ to $p_n^*$, resulting in the learned (invertible) transport map $T_n$, which pushes forward $p_{n-1}$ to $p_{n}$.
    The concatenation of the $N$ sub-models forms a flow between the data and noise distributions.}
    \label{fig:lfm_illu}
\end{figure*}

\subsection{Review of Flow Matching}\label{subsec:review-FM}

Flow Matching  \cite{lipman2023flow}, 
also proposed as ``stochastic interpolants''  \cite{albergo2023building} and ``rectified flow"  \cite{liu2022rectified}, 
trains the flow $v(x,t;\theta)$ by minimizing a squared-$L^2$ loss and is simulation-free.
Our stepwise approach in this work builds upon FM, and we follow the formulation in \cite{albergo2023building}.
We rescale the time interval to $[0,1]$. 
FM utilizes a pre-specified interpolation function $I_t$ for two endpoints $x_l$ and $x_r$ ($_l$ for ``left'' and $_r$ for ``right'') defined as
\begin{equation}\label{eq:interpolation}
    \phi(t) : =I_t(x_l, x_r), \quad t \in [0,1], 
\end{equation}
where $x_l \sim p$, $x_r \sim q$, and $I_t$ can be analytically designed;
e.g., $I_t $ being a straight line connecting $x_l$ and $x_r$ is called the ``optimal transport'' interpolation,
see more in Section \ref{sec:algorithm}.
Generally, $I_t: \R^d \times \R^d \to \R^d$ can be a $t$-differentiable function satisfying
\begin{equation}\label{eq:It-endpoints-condition}
I_0( x_l, x_r) = \phi(0)= x_l, \quad I_1( x_l, x_r) =\phi(1) = x_r.
\end{equation}
We denote by $\hat v= \hat v(x, t; \theta)$ the learned velocity field parametrized by $\theta$.
 The (population) training objective of FM is given by
\begin{equation}\label{eq:fm_loss}
	L( \hat v ):= 
     \int_0^1 \mathbb{E}_{ x_l, x_r} \| 
 \hat  v( \phi(t),t;\theta)-\frac{d}{dt} \phi(t)\|^2 dt,
\end{equation}
and in practice, the expectation $\E_{x_l, \, x_r}$ is replaced by averaging over sample batches.

It was proved in the literature that the velocity field $v$ that minimizes \eqref{eq:fm_loss} 
induces a flow that transports from $p$ to $q$, and we elaborate more here. Recall how a velocity field $v$  transports particle densities is characterized by the probability flow induced by $v$:
suppose the particle $x(t)$ observes the ODE
\[
\dot x(t) = v(x(t), t), \quad x(0) \sim \rho_0,
\]
and we denote by $\rho_t = \rho(x,t)$ the density of $x(t)$. 
Then $\rho_t$ evolves according to  the continuity equation (CE) written as
\[
\partial_t \rho + \nabla \cdot (\rho v) = 0, \quad \rho( \cdot, 0) = \rho_0.
\]
We are interested in finding $v$ such that from $\rho_0 = p$, at time $t=1$, $\rho_1$ reaches the target density $q$.
Specifically, we introduce the notation of a {\it valid} velocity field:
\begin{definition}
A continuously differentiable velocity field $v(x,t)$ on $\R^d \times [0,1]$ is called  {\it valid} if 
the solution $\rho(x,t)$ of the associated CE $\partial_t \rho + \nabla \cdot (\rho v) = 0$
 starting from $\rho(\cdot,0) = p$ 
 satisfies that $\rho(\cdot,1) = q$.
 \end{definition}

Under some technical conditions (Assumption \ref{assump:technical-rho01-It}), 
the minimizer of \eqref{eq:fm_loss}  provides a valid velocity field. 
This is shown in the following lemma, the same conclusion of  which was proved in \cite[Proposition 1]{albergo2023building} 
where it was assumed that $x_0$ and $x_1$ are independent. 
Our proof of the lemma here mainly shows that the same conclusion holds when allowing dependence between $x_0$ and $x_1$.
The details are provided in Appendix \ref{app:proof-valid-v} for completeness.

\begin{lemma}
\label{lemma:FM_loss}
Given $x_l, x_r$ that are marginally distributed as $p$ and $q$ respectively, suppose the density of $(x_l, x_r)$ and the interpolation function $I_t$ satisfy Assumption \ref{assump:technical-rho01-It}, and then there exists a valid velocity field $v$ such that,
with $\rho_t(x)=\rho(x,t)$ being the solution of the induced CE by $v$, the loss \eqref{eq:fm_loss} can be written as
\begin{equation}\label{eq:loss-FM-2}
L(\hat v) = c + \int_0^1 \int_{\R^d} \| \hat v(x,t) - v(x,t) \|^2 \rho_t(x) dx dt, 
\end{equation}    
where $c$ is a constant independent from $\hat v$. In addition, $\rho_t$ is the marginal density of $x_t = \phi(t)$.
\end{lemma}

It follows directly from \eqref{eq:loss-FM-2} that $v$ is the (unconstrained) minimizer of FM training loss $L$.
We refer to $v$  as the {\it target} velocity field,
and say that the FM model interpolates a pair of distributions $(p,q)$.
While the marginal densities of $x_l, x_r$ are always $p$ and $q$, different ways of introducing dependence between them and different designs of the function $I_t$ will lead to different flows, namely different target $v$, and all of which are valid.

\subsection{Stepwise training of $N$ FM sub-models}\label{sec:formulation}

We propose to train a sequence of $N$ FM sub-models (referred to as sub-flows) over the time interval $[0,T]$,
 each with its own training objective to interpolate between a pair of distributions. Collectively, these $N$ sub-flows achieve the transport from the data distribution $p$ to the noise $q$ distribution (and back via the reverse flow). Our method trains the $N$ sub-flows sequentially, 
where, at each step, the flow aims to match the terminal density evolved by the OU process up to time step size. Since the step size is relatively small and the pair of endpoint distributions to interpolate is close, we refer to our approach as {\it local} FM.

\paragraph{Training}
Suppose we are given a sequence of step sizes $\gamma_n > 0$, and the total time $T  = \sum_n \gamma_n$ is reasonably long for the OU process to approach equilibrium.
Assume that the data distribution $P$ has a regular density $p_0$;  
if not, we can apply a short-time diffusion to $P$, and use the resulting density as $p_0$ (see Section \ref{subsec:theory-reverse-process} for more details).
Starting from $p_0$ (when $n=1$),
we recursively construct target density $p_{n}^*$ by
\(
p_n^* = (\text{OU})_0^{\gamma_n} p_{n-1},
\)
where the operator $(\text{OU})_0^{t}$ is defined in \eqref{eq:def-operator-OU0t}.
In other words, for $x_l \sim p_{n-1}$, we define
\begin{equation}\label{eq:xr-from-xl}
\begin{split}
&x_r : = e^{-\gamma_n } x_l' + \sqrt{ 1-e^{-2 \gamma_n} } g,  \\
& \quad x_l' \sim p_{n-1},  
\quad g \sim  \calN(0,I_d), \quad g \perp x_l',
\end{split}
\end{equation}
where the marginal distribution of $x_r$ is $p_n^*$.
The original FM sets $x_r$ to be independent from $x_l$  \cite{albergo2023building}, which can be implemented by drawing $x_l'$ as an independent copy of $x_l$. 
By the comment beneath Lemma \ref{lemma:FM_loss}, by setting $x_l' = x_l$, which renders $x_r$ and $x_l$ dependent, it also gives a valid (but different) flow in FM.
 In our work, we found that the dependent sampling can give a more regular velocity field in experiments.

We then train the $n$-th sub-flow, denoted by $\hat v_n(x, t; \theta)$, 
using FM to interpolate the pair $(p_{n-1}, p_n^*)$ based on \eqref{eq:interpolation} and \eqref{eq:fm_loss}. During FM training, the time interval $[0, \gamma_n]$ is rescaled to be $[0,1]$.
The velocity field $\hat v_n$ can have its own parametrization $\theta_n$, allowing it to be trained independently of the previous sub-flows.

After training the sub-flow $\hat v_n$, it defines a transport map $ T_{n}$ that maps $x_{n-1} \sim p_{n-1}$ to $x_n$ as follows: 
\begin{equation}\label{eq:def-Tn-2}
x_n := 
   T_{n}(x_{n-1})
    = x( \gamma_n)
     = x_{n-1} + \int_{0}^{\gamma_n} \hat v_n( x(t), t; \theta) dt,
\end{equation}
where $x(t)$ solves the ODE $\dot x(t) = \hat{v}_n( x(t), t;\theta )$ with initial condition $x(0) = x_{n-1}$. 
The mapping $T_n$ is invertible, and its inverse, $T_n^{-1}$, is obtained by integrating the reverse-time ODE. The distribution of $x_n$ is denoted as $p_n$, i.e.,
\(
p_{n} = (T_n)_\# p_{n-1}.
\)
From this $p_n$, the next sub-flow can be trained to interpolate $( p_n, p_{n+1}^*)$. This iterative scheme is illustrated in Figure \ref{fig:lfm_illu}.

If the flow matching in the $n$-th step is successful, we expect the trained $\hat v_n$ to ensure that $p_n \approx p_{n}^*$. Define the time stamps $\{ t_n \}_{n=0}^N$, where $t_0 = 0$, $t_n - t_{n-1} = \gamma_n$, and $t_N = T$.
If $p_n$ exactly equals $p_{n}^*=  (\text{OU})_0^{\gamma_n} p_{n-1}$, then over $N$ steps, we would have $p_N = (\text{OU})_0^{T} p_0$ which approximates the equilibrium $q$ exponentially fast as $T$ increases. 
In practice, the trained flow may exhibit a finite matching error, resulting in a discrepancy between $p_n$ and $p_{n}^*$. However, if the error is small, we would still expect that $p_N \approx q$ provided that $T = \sum_n \gamma_n $ is sufficiently large. 
This will be analyzed theoretically in Section \ref{sec:theory}.

In practice, when we train the $n$-step sub-flow, finite samples of $p_{n-1}$ are obtained by transporting samples from $p_0$ through the previous sub-flows. Once the $n$-th sub-flow is trained, this pushforward from $p_{n-1}$ to $p_n$ can be applied for all training samples, as described in Algorithm \ref{algo:lfm}. 
The proposed \lfm{} model can be trained from scratch and distilled to improve generation efficiency. See Section \ref{sec:algorithm} for further details.

For the discrete time schedule $\gamma_n$, one can set a constant $\gamma_n$ for simplicity. In practice, we use $\gamma_n = c \rho^{n-1}$ for some $c>0$ and $\rho \ge 1$, and one can also design $\gamma_n$ inspired by the time schedule in diffusion models, though our $N$ is very small (less than 10). See more on the choice of $N$ and $\gamma_n $ in Appendix \ref{app:schedule}. 

\paragraph{Generation}
Once the $N$ sub-flows are trained, we generate data from noise by going backward from the $N$-th to the first sub-flows. Specifically, we sample $y_N \sim q$ and compute $y_{n-1} = T_n^{-1} (y_n)$ for $n=N,\cdots,1$,  where $T_n^{-1}$ is obtained by integrating the ODE with velocity field $\hat v_n$ in reverse time. The final output $y_0$ is used as the generated data samples.
The closeness of the distribution of $y_0$ to the data distribution $P$ will be theoretically shown in Section \ref{sec:theory}.

\section{Theoretical guarantee}\label{sec:theory}

In this section, we theoretically analyze how the density generated by the trained \lfm{} model approximates the true data distribution. 
All proofs are provided in Section \ref{app:proof}.

\subsection{Summary of forward and reverse processes}\label{subsec:theory-summary}

Recall that $P$ is the distribution of data in $\R^d$, and $q$ the density of $\calN(0, I_d)$.
The procedures of training and generation in Section \ref{sec:method} can be summarized in the following forward (data-to-noise training) and reverse (noise-to-data generation)  processes, respectively:
 \begin{equation}\label{eq:fwd-bwd-process} 
\begin{split}
\text{(forward)} \quad 
& 
p = p_0  
\xrightarrow{T_1}{p_1}  
\xrightarrow{T_2}{} 
\cdots 
\xrightarrow{T_{N}}{p_N} 
\approx q,  \\ 
\text{(reverse)} \quad 
&  p \approx
q_0  \xleftarrow{T_1^{-1}}{ q_1}
\xleftarrow{T_2^{-1}}{ }
\cdots 
\xleftarrow{T_{N}^{-1}}{ q_N}
= q,
\end{split}
\end{equation}
where $T_n$ is by the learned $n$-th step sub-flow as defined in \eqref{eq:def-Tn-2}.
When $P$ has a regular density $p$ we set it as $p_0$; otherwise, $p_0$ will be a smoothed version of $P$; see more below.
The reverse process gives the final output samples, which have density $q_0$.
Our analysis aims to show that $q_0 \approx p$, i.e., the generated density is close to the data density.

To keep the exhibition simple, we consider when $\gamma_n  = \gamma > 0$ for all $n$.  
Using the time stamps $t_n$, the $n$-th step sub-flow is on the time interval  $[t_{n-1},t_{n}]$, which can be shifted to be $[0, \gamma]$. 
Our analysis will be based on comparing the true or target flow (that transports to terminal density $p_n^{*}$)
with the learned flow (that transports to terminal density $p_n$),
and we introduce the notations for the corresponding transport equations.

For fixed $n$, on the shifted time interval $[0, \gamma]$, the target flow and the learned flow are induced by the velocity field $v(x,t)$ and $\hat v(x,t) = \hat v_n(x,t;\theta)$  respectively. We omit the subscript $_n$ in the notation. 
As was shown in Section \ref{subsec:review-FM},
the target $v$ depends on the choice of $I_t$ yet it always transports from $p_{n-1}$ to $p_n^{*}$.
Let $\rho_t(x,t)$ be the law of $x(t)$ that solves the ODE with velocity field $v$, where $x(0) \sim p_{n-1}$, 
and $\hat \rho_t(x,t)$ be the law of $x(t)$ that solves the ODE with the learned $\hat v$.
(Note that $\rho_t$ is not necessarily the density of an OU process $X_t$, though $\rho_\gamma = p_n^* = (\text{OU})_0^{\gamma_n} p_{n-1}$.)
We also denote $\rho(x,t)$ as $\rho_t$ (omitting the variable $x$) 
or $\rho$ (omitting both $x$ and $t$), depending on the context, and similarly for $\hat \rho(x,t)$.
The target and learned flows have the transport equations as 
\begin{equation}\label{eq:TE-rho-hatrho}
\begin{split}
\partial_t \rho + \nabla \cdot( \rho v) & =0 , \quad \rho_0 = p_{n-1}, \quad \rho_\gamma = p_n^*. \\
\partial_t \hat \rho + \nabla \cdot( \hat \rho \hat v) & =0 , \quad \hat \rho_0 = p_{n-1}, \quad \hat \rho_\gamma = p_n.
\end{split}
\end{equation}

Before we go further into the analysis, we comment on the challenge in analyzing the FM method 
and provide some intuition for our approach. It is well-known that the diffusion process (OU process) contracts exponentially fast, that is, for some constant $c>0$,
\[ 
\calD ( ({\rm OU})_0^t p,  q )\le e^{-c t} \calD( p , q),
\]
where $\calD$ is some measure of distribution distance or discrepancy, such as KL-divergence or $\chi^2$-divergence \cite{bolley2012convergence}.
As a result, if the learning is perfect, i.e., $v = \hat v$ in each step, then $\rho = \hat \rho$ and $p_n = p_n^*$,
which would imply that $\calD( p_N , q) \le \varepsilon$ in $N \lesssim \log ({1}/{\varepsilon})$ steps.
To handle the imperfect learning, 
one is to bound the endpoint density difference $\rho_\gamma - \hat \rho_\gamma = p_n^* - p_n$, under some proper distance measure, 
given that both flows on the $[0,\gamma]$ time interval start from the same initial density $p_{n-1}$.
Unlike in many analyses of Diffusion Models, the lack of noise in the ODE flow here prevents SDE tools from controlling the KL-divergence.
Specifically, the KL-divergence between $\rho_\gamma$ and $\hat \rho_\gamma$ has the expression \cite[Lemma 2.21]{albergo2023stochastic}
\begin{equation}\label{eq:KL-FM-expression}
\begin{split}
{\rm KL}( \rho_\gamma \|  \hat \rho_\gamma )
= \int_0^\gamma \int_{\R^d} 
 &  ( \nabla \log \rho(x,t )  - \nabla \log  \hat \rho(x,t)  ) \\
 & \cdot ( v(x,t) - \hat v(x,t)) \rho(x,t) dx dt,
\end{split}
\end{equation}
and as commented beneath the lemma in \cite{albergo2023stochastic}, a small error in $v - \hat  v$ does not ensure control of the $ \nabla \log  \rho  - \nabla \log \hat \rho $ term and then is generally insufficient to control the KL-divergence.

While \cite{albergo2023stochastic} then introduced a diffusion term (noise in SDE) to obtain a KL-divergence control, here we will stick to the ODE dynamics and proceed with a Cauchy-Schwarz inequality: 
Because the FM training can provide a control of $\int_0^\gamma \int_{\R^d} \| v - \hat v \|^2   {\rho}  dx dt$ say of $O( \varepsilon^2)$,
then ${\rm KL}( \rho_\gamma \|  \hat \rho_\gamma )$
can be bounded if one can control the Fisher divergence 
${\rm FI }(\rho_t \| \hat \rho_t) = \int_{\R^d} \| \nabla \log \rho(x,t )  - \nabla \log \hat  \rho(x,t) \|^2 \rho(x,t) dx$
at all $t$.
This can not be induced from a small $\hat v - v$, but if one only makes mild assumptions on that both 
$\| \nabla \log \rho \|$ and $\| \nabla \log \hat \rho \|$ are $O(1)$, 
then Cauchy-Schwarz would give that ${\rm KL}( \rho_\gamma \|  \hat \rho_\gamma ) = O( \varepsilon)$,
which implies that ${\rm TV}( \rho_\gamma , \hat \rho_\gamma ) = O(\varepsilon^{1/2} )$ by Pinsker's inequality.
While neither Cauchy-Schwarz or Pinsker's is tight, we note that the  $O(\varepsilon^{1/2} )$ control of TV is at the same order as the recent result for the probability flow ODE (from a trained Diffusion model, and $\varepsilon$ there is the $L^2$ score-matching error) \cite{huang2025convergence}, the latter obtained by directly analyzing the TV between $\rho$ and $\hat \rho$ along the flow and using advanced PDE techniques. 

This motivates us to introduce $O(1)$ assumptions on $\| \nabla \log \rho \|$ and $\| \nabla \log \hat \rho \|$
(see Assumption \ref{assump:A1-A2-A3}(A2), technically a linear growth condition),
and to control the divergence between $\rho_\gamma$ and $\hat \rho_\gamma$ based on the $L^2$ control of $v - \hat v$ provided by FM. 
Recall that our design of the \lfm{} model takes a step-wise structure, 
and we would like to control $\calD(p_N, q)$ after $N$ steps/sub-flows. 
Observe that, in the $n$-th step, 

\begin{itemize}

\item[-] 
The exponential contraction over time $\gamma$ can provide a decrease of $\calD( p_n^*, q)$ from $\calD( p_{n-1}, q)$,

\item[-] 
The Cauchy-Schwarz argument above can provide a smallness of $\calD( p_n^*, p_n)$,

\end{itemize}

If one can apply the triangle inequality, then one can guarantee a decrease of $\calD( p_n, q)$ from $\calD( p_{n-1}, q)$ and then iterate. 
This will be the basic idea of our analysis below,
and this step-wise descent analysis also follows the strategy in  \cite{cheng2024convergence}
to prove convergence of the Wasserstein proximal gradient descent scheme corresponding to the JKO-iFlow model  \cite{xu2023normalizing}. 
However, KL-divergence does not satisfy the triangle inequality, so it cannot be used as the $\calD$ for us.
In this work, we choose $\calD$ to be  the $\chi^2$-divergence,
which 
is stronger than the KL divergence (Lemma \ref{lemma:chi2-bound-KL})
and
has a weighted-$L^2$ representation and thus allows the usage of triangle inequality, see Proposition \ref{prop:chi2-fwd-converge}.

\subsection{Exponential convergence of the forward process in $\chi^2$}\label{subsec:theory-converge-fwd}

We introduce a technical condition on the transport map. 
Denote the Lebesgue measure in $\R^d$ as Leb.
\begin{definition}[Non-degenerate mapping]\label{eq:def-ND}
$T: \R^d \to \R^d$ is non-degenerate if for any set $A \subset \R^d$ s.t. $Leb (A ) =0$,
then $Leb(T^{-1} (A) ) = 0$.
\end{definition}

Our first assumption on the learned flow $\hat v_n$ 
is on the smallness of the $L^2$ loss minimization in FM training:

\begin{assumption}\label{assump:A0}
For all $n$,
the learned $\hat v_n$ ensures that $T_n$ and $T_n^{-1}$ are non-degenerate (Definition \ref{eq:def-ND}), 
and, on the time interval $[t_{n-1}, t_n]$ shifted to be $[0, \gamma]$,
$$
\int_0^\gamma \int_{\R^d} \| v - \hat v\|^2   {\rho}  dx dt \le \varepsilon^2.
$$
Without loss of generality, assume $\varepsilon < 1$.
\end{assumption}

As has been shown in Lemma \ref{lemma:FM_loss},
the training objective of  FM is equivalent to minimizing the squared-$L^2$ loss $\int_0^\gamma \int_{\R^d} \| v - \hat v\|^2  \rho  dx dt$.
 Thus, our notion of $\varepsilon$ can be viewed as the learning error of FM (in each step and uniform for all $n$).

Next, following the discussion at the end of Section \ref{subsec:theory-summary}, we introduce some technical assumptions on the flow densities $\rho_t$ and $\hat \rho_t$. 
In particular, (A2) is to facilitate the control of the divergence between $\rho_\gamma$ and $\hat \rho_\gamma$ 
using that of the $\varepsilon$-learning error of FM, 
and (A1)(A3) are essentially requiring a Gaussian decay of the densities over the space.

\begin{assumption}\label{assump:A1-A2-A3}
There are positive constants $C_1, C_2$, and $L$ such that, for all $n$, on the time interval $[t_{n-1}, t_n]$
shifted to $[0, \gamma]$, 

~~~(A1) 
$\rho_t$, $\hat \rho_t$ for any $t \in [0, \gamma]$  are positive on $\R^d$ and
 $\rho_t(x), \, \hat \rho_t(x) \le C_1 e^{-\|x\|^2/2}$;

~~~(A2)
$\forall t \in [0, \gamma]$, $\rho_t, \, \hat \rho_t$ are $C^1$ on $\R^d$
and $ \| \nabla \log \rho_t (x)\|$,  $\| \nabla \log \hat \rho_t  (x) \| \le L (1+ \| x\|)$, $\forall x \in \R^d$;

~~~(A3)
$\forall t \in [0, \gamma]$,  $ \int_{\R^d} (1+ \| x\|)^2  ( {\rho_t^3}/{ \hat \rho_t^2})(x) dx \le C_2$.
\end{assumption}
At $t=0$, $\rho_0 = \hat \rho_0 = p_{n-1}$, thus Assumption \ref{assump:A1-A2-A3} requires that $f = p_n$ for $n=0, 1, \cdots$ satisfies
\begin{equation}\label{eq:assump-on-pn}
\begin{split}
&f (x) \le  C_1 e^{-\|x\|^2/2}, 
\quad 
\| \nabla \log f (x) \| \le L (1+ \| x\|), \\
&~~~
 \int_{\R^d} (1+ \| x\|)^2  f(x) dx \le C_2,
 \end{split}
\end{equation}
by (A1)(A2)(A3), respectively. 
The first condition requires $p_n$ to have a Gaussian decay envelope; the second requires the score of $p_n$ to grow linearly (which can be induced by Lipschitz regularity); and the third inequality follows from the first.
The condition \eqref{eq:assump-on-pn} poses regularity conditions on $p_0$, which can be satisfied by many data densities in applications, and, in particular, if $P$ has finite support, then these hold after $P$ is smoothed by an initial short-time diffusion (Lemma \ref{lemma:compact-supp-P}).

Technically,  Assumption \ref{assump:A1-A2-A3} poses the Gaussian envelope and regularity requirements on all $p_n$ and also $\rho_t$ and $\hat \rho_t$  for all time.
This can be expected to hold at least when FM is well-trained:
suppose $\rho_t$ satisfies (A1)(A2) for all $t$ 
due to the regularity of $v$ (by the analytic $I_t$ and the regularity of the pair of densities $(p_{n-1}, p_n^*)$), when the true and learned flows match each other, we have $\hat \rho \approx \rho$,
thus we also expect (A1)(A2) to hold for $\hat \rho_t$;
Meanwhile, the ratio $\rho_t / \hat \rho_t$ is close to 1 and if can be assumed to be uniformly bounded, 
then (A3) can be implied by the boundedness of $ \int_{\R^d} (1+ \| x\|)^2   {\rho_t}(x) dx$
which can be implied by the Gaussian envelope (A1) of $\rho_t$.

We are ready to prove the convergence of the forward process measured under the $\chi^2$-divergence. 

\begin{proposition}[Exponential convergence of the forward process]\label{prop:chi2-fwd-converge}
Under Assumptions \ref{assump:A0}-\ref{assump:A1-A2-A3}, 
\begin{equation}\label{eq:prop-chi2-fwd-converge}
\chi^2( p_n \| q) \le e^{-2 \gamma n} \chi^2( p_0 \| q) + \frac{C_4}{1 - e^{-2 \gamma } }  \varepsilon^{1/2}
\end{equation}
for $n=1,2,\cdots$, where the constant $C_4$
defined in \eqref{eq:def-C4-proof} below
is determined by $C_1$, $C_2$, $L$, $\gamma$ and $d$.
\end{proposition}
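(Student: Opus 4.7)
The plan is to establish the one-step recurrence
\[
\chi^2(p_n\|q)\;\le\;e^{-2\gamma}\,\chi^2(p_{n-1}\|q)\;+\;C_4\,\varepsilon^{1/2},
\]
and then iterate in $n$; summing the resulting geometric series $\sum_{k=0}^{n-1}e^{-2\gamma k}\le(1-e^{-2\gamma})^{-1}$ gives the stated bound. The ``noiseless'' contraction in this one-step estimate is the standard exponential decay of $\chi^2$ under the OU flow: for the ideal output $p_n^*=(\mathrm{OU})_0^{\gamma}p_{n-1}$, differentiating $\chi^2(\rho_t\|q)$ in $t$ yields $\frac{d}{dt}\chi^2(\rho_t\|q)=-2\int|\nabla(\rho_t/q)|^2 q\,dx\le-2\chi^2(\rho_t\|q)$ by the Gaussian Poincar\'e inequality (the spectral gap of $q$), so $\chi^2(p_n^*\|q)\le e^{-2\gamma}\chi^2(p_{n-1}\|q)$.

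To transport this decay to the learned output $p_n=\hat\rho_\gamma$, I would differentiate $\chi^2(\hat\rho_t\|q)$ along the learned flow and, after integration by parts, insert the reference velocity $\tilde v_t:=-\nabla\log(\hat\rho_t/q)$ (the probability-flow velocity that would drive $\hat\rho_t$ toward $q$ by ``OU-type'' dynamics):
\[
\frac{d}{dt}\chi^2(\hat\rho_t\|q)\;=\;-2\int|\nabla(\hat\rho_t/q)|^2 q\,dx\;+\;2\int\nabla(\hat\rho_t/q)\cdot\hat\rho_t(\hat v_t-\tilde v_t)\,dx.
\]
The first term supplies the $-2\chi^2(\hat\rho_t\|q)$ dissipation via Gaussian Poincar\'e. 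In the residual, I would decompose $\hat v_t-\tilde v_t=(\hat v_t-v_t)+(v_t-\tilde v_t)$: the first piece is the FM training error, bounded in $\rho_t$-weighted $L^2$ by (A0), while the second is the gap between the FM target velocity $v_t$ and the gradient-flow velocity of $\hat\rho_t$, and reduces (modulo controls on $v_t$ itself, handled via the score regularity (A2)) to a score difference $\nabla\log(\hat\rho_t/\rho_t)$. Cauchy--Schwarz against $\sqrt{q}\,\nabla(\hat\rho_t/q)$ and Young's inequality (absorbing part of the Fisher term back into the dissipation) produce a differential inequality
\[
\frac{d}{dt}\chi^2(\hat\rho_t\|q)\;\le\;-c_1\,\chi^2(\hat\rho_t\|q)\;+\;c_2\!\int\frac{\hat\rho_t^2}{q}\,\|\hat v_t-v_t\|^2\,dx\;+\;(\text{score-gap terms}),
\]
on which Gr\"onwall is applied over $[0,\gamma]$.

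The principal obstacle is translating the $\rho_t$-weighted FM error from (A0) (whose time integral is $\le\varepsilon^2$) into the $\hat\rho_t^2/q$-weighted integral above. This is exactly where the Gaussian envelope (A1), giving $\hat\rho_t/q\le C_1(2\pi)^{d/2}$, and especially the moment bound (A3), $\int(1+\|x\|)^2\rho_t^3/\hat\rho_t^2\,dx\le C_2$, enter: writing
\[
\int\frac{\hat\rho_t^2}{q}\,\|\hat v_t-v_t\|^2\,dx\;=\;\int\frac{\hat\rho_t^2}{q\,\rho_t}\cdot\rho_t\,\|\hat v_t-v_t\|^2\,dx
\]
and applying Cauchy--Schwarz, the weight $\hat\rho_t^2/(q\rho_t)$ is controlled in a $\rho_t$-weighted $L^2$ sense by combining (A1) (which tames $1/q$) with (A3) (which bounds the $\rho_t^3/\hat\rho_t^2$ factor that appears after the split). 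Finally, Gr\"onwall integration combined with an inductive a~priori bound $\chi^2(\hat\rho_s\|q)\le M$ (which propagates as long as $\chi^2(p_0\|q)$ is bounded) together with a final Cauchy--Schwarz in time pairing the uniform $\chi^2$-boundedness $\sqrt{M}$ with the $L^2$-in-time FM error $\varepsilon$ produces a contribution of order $\sqrt{M}\,\varepsilon$, which because $\varepsilon<1$ and after absorbing $M$ into the constant yields the $\varepsilon^{1/2}$ exponent (rather than $\varepsilon$). All constants are uniform in $n$ and collect into $C_4$ depending only on $C_1, C_2, L, \gamma, d$. Iterating the one-step bound and summing the geometric series then completes the proof.
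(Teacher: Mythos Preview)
Your high-level structure (one-step recurrence plus geometric iteration) and the OU contraction $\chi^2(p_n^*\|q)\le e^{-2\gamma}\chi^2(p_{n-1}\|q)$ via the Gaussian Poincar\'e inequality are correct and match the paper. The genuine gap is in transferring the bound from $p_n^*$ to $p_n$. You differentiate $\chi^2(\hat\rho_t\|q)$ along the learned flow, insert $\tilde v_t=-\nabla\log(\hat\rho_t/q)$, and split $\hat v_t-\tilde v_t=(\hat v_t-v_t)+(v_t-\tilde v_t)$. But $v_t$ is the FM target velocity determined by the interpolant $I_t$, and the paper is explicit that $\rho_t$ is \emph{not} the OU marginal for $0<t<\gamma$ (only the endpoints match). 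Hence $v_t-\tilde v_t$ is an $O(1)$ quantity, not the small score gap $\nabla\log(\hat\rho_t/\rho_t)$ you claim; Assumption~(A2) bounds only $\nabla\log\rho_t$ and $\nabla\log\hat\rho_t$, never $v_t$ itself. After Young's inequality and Gr\"onwall this $O(1)$ residual survives as an additive constant per step, so the recurrence becomes $\chi^2(p_n\|q)\le e^{-c\gamma}\chi^2(p_{n-1}\|q)+O(1)$ and no longer contracts.

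The paper avoids this by never extracting dissipation along the FM trajectory. The OU contraction is invoked only at the endpoint level (since $p_n^*=(\mathrm{OU})_0^\gamma p_{n-1}$ is literally the OU marginal), and the comparison of $p_n$ to $p_n^*$ goes through a different functional, $F(t):=\chi^2(\rho_t\|\hat\rho_t)=\int(\rho_t-\hat\rho_t)^2/\hat\rho_t$, with $F(0)=0$. Differentiating $F$ along the two transport equations~\eqref{eq:TE-rho-hatrho} and integrating by parts gives $F'(t)=2\int\rho_t\,(v_t-\hat v_t)\cdot\nabla(\rho_t/\hat\rho_t)\,dx$, which contains the FM error $v-\hat v$ \emph{only}, with no extraneous $\tilde v$. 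One Cauchy--Schwarz then pairs $\int_0^\gamma\int\rho\,\|v-\hat v\|^2$ (bounded by $\varepsilon^2$ via (A0)) against $\int\|\nabla\log\rho_t-\nabla\log\hat\rho_t\|^2\,\rho_t^3/\hat\rho_t^2$ (bounded by $4L^2C_2$ via (A2)--(A3)), yielding $F(\gamma)\le C_3\varepsilon$. The exponent $1/2$ then enters through the triangle inequality in $L^2(q)$, namely $\bigl|\sqrt{\chi^2(p_n\|q)}-\sqrt{\chi^2(p_n^*\|q)}\bigr|\le(C_1/c_d)^{1/2}\sqrt{F(\gamma)}$ together with the a~priori bound $\chi^2\le(C_1/c_d)^2$ from (A1)---not via the $\sqrt{M}\cdot\varepsilon$ mechanism you sketch.
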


The proof of the proposition follows the idea introduced at the end of Section \ref{subsec:theory-summary}.
A downside of using $\chi^2$-divergence is that the control is worse than with the KL-divergence, namely,
we only obtain $O(\varepsilon^{1/2})$ for $\chi^2$ 
instead of 
$O(\varepsilon)$ for KL. We think this result still illustrates the framework of our analysis
and give a first convergence rate for our model, 
which has room to be improved.

\subsection{Generation guarantee of the backward process}\label{subsec:theory-reverse-process}

\paragraph{$P$ with regular density}
Because the composed transform $ T_N \circ \cdots \circ T_1$ from $p_0$ to $p_N$ is invertible, 
and the inverse map transforms from $q=q_N$ to $q_0$, the smallness of $\chi^2 (p_N \| q_N )$ implies the smallness of $\chi^2 (p_0 \| q_0 )$ due to a bi-directional version of data processing inequality (DPI), see Lemma \ref{lemma:bi-DPI}.
As a result, the exponential convergence of the forward process in Proposition \ref{prop:chi2-fwd-converge} directly leads to the $\chi^2$-guarantee $q_0 \approx p$.

\begin{theorem}[Generation guarantee of regular data density]\label{thm:chi2-guarantee-smooth-p0}
Suppose $P \in \calP_2$ has density $p$. 
Let $p_0 = p$ and  $p_0$ satisfies \eqref{eq:assump-on-pn}.
Under Assumptions \ref{assump:A0}-\ref{assump:A1-A2-A3},
if %
$ 
N \ge \frac{1 }{ 2 \gamma } \left( \log  \chi^2(p_0 \| q)  + \frac{1}{2}\log (1/\varepsilon)  \right)  
\sim \log(1/\varepsilon)$, 
we have 
$\chi^2 (p \| q_0) \le C \varepsilon^{1/2}$, where $C :=(1+\frac{C_4}{1 - e^{-2 \gamma } }  )$.
\end{theorem}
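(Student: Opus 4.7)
The plan is to combine the exponential convergence of the forward process (Proposition \ref{prop:chi2-fwd-converge}) with a bi-directional data processing inequality for invertible transport maps (Lemma \ref{lemma:bi-DPI}) that lets me transfer a bound on $\chi^2(p_N \| q)$ back to one on $\chi^2(p \| q_0)$. The heavy analytical work is already done in Proposition \ref{prop:chi2-fwd-converge}; what remains is essentially a short bookkeeping argument.

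First I would exploit the invertibility of each $T_n$. Under Assumption \ref{assump:A0}, each $T_n$ (and its inverse) is non-degenerate, so the composition $\Phi := T_N \circ \cdots \circ T_1$ is itself invertible non-degenerate, and satisfies $\Phi_\# p_0 = p_N$. Reading off the reverse process in \eqref{eq:fwd-bwd-process}, $q_0$ is obtained from $q = q_N$ by the successive application of $T_n^{-1}$, so $\Phi_\# q_0 = q$. The bi-directional DPI, applied to $\Phi$, then yields the equality
\begin{equation*}
\chi^2(p \| q_0) \;=\; \chi^2(p_0 \| q_0) \;=\; \chi^2(\Phi_\# p_0 \| \Phi_\# q_0) \;=\; \chi^2(p_N \| q),
\end{equation*}
reducing the target bound to controlling $\chi^2(p_N \| q)$ alone.

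Second I would invoke Proposition \ref{prop:chi2-fwd-converge} at $n = N$, which is legitimate because the regularity hypothesis \eqref{eq:assump-on-pn} is imposed on $p_0 = p$ and Assumptions \ref{assump:A0}--\ref{assump:A1-A2-A3} hold by standing assumption. This gives
\begin{equation*}
\chi^2(p_N \| q) \;\le\; e^{-2\gamma N}\, \chi^2(p_0 \| q) \;+\; \frac{C_4}{1 - e^{-2\gamma}}\, \varepsilon^{1/2}.
\end{equation*}
The stated lower bound on $N$ is engineered precisely so the first term is $\le \varepsilon^{1/2}$: from $N \ge \frac{1}{2\gamma}\bigl(\log \chi^2(p_0 \| q) + \tfrac{1}{2}\log(1/\varepsilon)\bigr)$ one checks $e^{-2\gamma N} \chi^2(p_0\|q) \le \varepsilon^{1/2}$ by direct substitution. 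Adding the two contributions and combining with the identity above yields $\chi^2(p \| q_0) \le (1 + \frac{C_4}{1 - e^{-2\gamma}})\,\varepsilon^{1/2} = C\,\varepsilon^{1/2}$, as claimed.

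The main obstacle is not in this combination step, which is essentially arithmetic plus two reference calls; the substantive analytic labor lives upstream in Proposition \ref{prop:chi2-fwd-converge}. The only subtle point to check is that Lemma \ref{lemma:bi-DPI}, if stated for a single non-degenerate transport, chains across the $N$ sub-flows---this requires noting that non-degeneracy is preserved under composition of invertible maps, which is immediate from (A0). Granted these ingredients, the theorem follows via the short argument above.
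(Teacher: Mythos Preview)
Your proposal is correct and mirrors the paper's proof: apply Proposition~\ref{prop:chi2-fwd-converge} at $n=N$, use the lower bound on $N$ to make $e^{-2\gamma N}\chi^2(p_0\|q)\le\varepsilon^{1/2}$, then invoke Lemma~\ref{lemma:bi-DPI} on the composed map $T_N\circ\cdots\circ T_1$ to transfer the bound from $\chi^2(p_N\|q)$ to $\chi^2(p_0\|q_0)$. The only point the paper spells out a bit more explicitly is the verification that $p_0,q_0,p_N,q_N$ all have densities (needed for Lemma~\ref{lemma:bi-DPI}), which follows from non-degeneracy of each $T_n$ and $T_n^{-1}$ in (A0); you allude to this but could state it.
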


By  ${\rm KL} ( p \| q) \le  \chi^2 ( p \| q)$ (Lemma \ref{lemma:chi2-bound-KL}),
the  $\chi^2$-guarantee of $O(\varepsilon^{1/2})$ in Theorem \ref{thm:chi2-guarantee-smooth-p0}
 implies 
$ {\rm KL}( p \| q_0) = O(\varepsilon^{1/2})$,
and consequently $ {\rm TV}(p, q_0) = O(\varepsilon^{1/4})$ by Pinsker's inequality.
\paragraph{$P$ up to initial diffusion}
For data distribution $P$ that may not have a density or the density does not satisfy the regularity conditions, 
we introduce a short-time diffusion to obtain a smooth density $\rho_\delta$ from $P$ and use it as $p_0$.
This construction can ensure that $\rho_\delta$ is close to $P$, e.g. in $\W$ distance,
and is commonly used in diffusion models  \cite{song2021scorebased}
and also the theoretical analysis (known as ``early stopping'')  \cite{chen2023improved}.

\begin{corollary}[Generation of $P$ up to initial diffusion]\label{cor:short-diffusion}
Suppose $P \in \calP_2$ and for some $\delta <1$,
$p_0 = \rho_\delta = ({\rm OU})_0^\delta (P)$ 
satisfies \eqref{eq:assump-on-pn} for some $C_1$, $C_2$ and $L$. 
With $p_0 = \rho_\delta$, suppose  Assumptions \ref{assump:A0}-\ref{assump:A1-A2-A3} hold,
then for $N$ and $C$ as in Theorem \ref{thm:chi2-guarantee-smooth-p0}, the generated density $q_0$ of the reverse process ensures 
$\chi^2( \rho_\delta || q_0) \le C \varepsilon^{1/2}$.
Meanwhile, $\W( P, \rho_\delta) \le C_5 \delta^{1/2}$, $C_5 : = (M_2(P) + 2  d )^{1/2}$.
\end{corollary}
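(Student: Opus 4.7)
The statement has two parts, and both follow essentially from results already stated. My plan is to handle them separately.

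For the $\chi^2$ bound, I would simply invoke Theorem \ref{thm:chi2-guarantee-smooth-p0} with the smoothed density $\rho_\delta$ playing the role of the data density $p$. The hypotheses of that theorem are exactly the two things assumed in the corollary statement: that $p_0 = \rho_\delta$ satisfies the regularity condition \eqref{eq:assump-on-pn}, and that Assumptions \ref{assump:A0}-\ref{assump:A1-A2-A3} hold with this choice of $p_0$. The conclusion of the theorem, reinterpreted with $\rho_\delta$ in place of $p$, is precisely $\chi^2(\rho_\delta \| q_0) \le C \varepsilon^{1/2}$ under the stated choice of $N$.

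For the Wasserstein bound, I would use the explicit representation of the OU operator given in Section \ref{sec:prelim}: if $X_0 \sim P$ and $Z \sim \calN(0, I_d)$ is independent of $X_0$, then $Z_\delta := e^{-\delta} X_0 + \sigma_\delta Z$ (with $\sigma_\delta^2 = 1 - e^{-2\delta}$) has law $\rho_\delta$. The pair $(X_0, Z_\delta)$ is a coupling of $P$ and $\rho_\delta$, so
\begin{equation*}
\W^2(P, \rho_\delta) \le \E \| X_0 - Z_\delta \|^2 = \E \| (1-e^{-\delta}) X_0 - \sigma_\delta Z \|^2.
\end{equation*}
Expanding using independence of $X_0$ and $Z$ and $\E Z = 0$ gives
\begin{equation*}
\W^2(P, \rho_\delta) \le (1-e^{-\delta})^2 M_2(P) + (1-e^{-2\delta}) \, d.
\end{equation*}
Then I would apply the elementary bounds $(1-e^{-\delta})^2 \le \delta^2 \le \delta$ for $\delta < 1$ and $1-e^{-2\delta} \le 2\delta$ to conclude $\W^2(P, \rho_\delta) \le \delta (M_2(P) + 2d) = C_5^2 \delta$, which is the claim after taking square roots.

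Honestly, there is no real obstacle here: the first part is an immediate reinterpretation of the prior theorem, and the second is a one-line computation using the Gaussian-smoothing coupling together with the two scalar inequalities above. The only thing to be careful about is keeping the roles of $P$ (the raw data distribution, which may have no density) and $p_0 = \rho_\delta$ (its short-time OU smoothing, which plays the role of ``data'' in the flow matching analysis) clearly separated, so that the quoted $\chi^2$ guarantee really is between $\rho_\delta$ and $q_0$, not between $P$ and $q_0$, and the proximity of $P$ to $\rho_\delta$ is then captured separately by the $\W$ bound.
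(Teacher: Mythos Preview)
Your proposal is correct and follows the same approach as the paper. The only cosmetic difference is that you carry out the Wasserstein coupling computation explicitly, whereas the paper cites an external lemma (Lemma C.1 of \cite{cheng2024convergence}) for the bound $\W(\rho_t,P)^2 \le t^2 M_2(P) + 2td$ and then uses $t<1$; your direct argument via $(1-e^{-\delta})^2 \le \delta^2 \le \delta$ and $1-e^{-2\delta}\le 2\delta$ yields the same inequality and is more self-contained.
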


In particular, as shown in Lemma \ref{lemma:compact-supp-P}, when $P$ is compactly supported (not necessarily possessing density), then  
there exist $C_1$, $C_2$ and $L$ such that for any $\delta >0$,
$p_0 = \rho_\delta$ satisfies \eqref{eq:assump-on-pn}. 
This ensures that $\W( P, \rho_\delta)$ can be made arbitrarily small. 
In general, the theoretical constants  $C_1$, $C_2$ and $L$ may depend on $\delta$, 
and consequently, the constant $C$ in the $\chi^2$ bound also depends on $\delta$. Finally, as noted in the discussion following Theorem \ref{thm:chi2-guarantee-smooth-p0},
 the $O(\varepsilon^{1/2})$-$\chi^2$ guarantee in Corollary \ref{cor:short-diffusion}, which establishes  $q_0 \approx p_\delta $, also implies 
 $O(\varepsilon^{1/2})$-KL 
and $O(\varepsilon^{1/4})$-TV guarantee.

\section{Proofs}\label{app:proof}

\subsection{Proofs in Section \ref{subsec:theory-converge-fwd}}

\begin{proof}[Proof of Proposition \ref{prop:chi2-fwd-converge}]
The proof is based on the descent of $\chi^2( p_n \| q)$ in each step. 
Note that under (A1), by the argument following \eqref{eq:bound-rho2/q-proof} below, we have $p_0/q \in L^2(q)$ i.e. $\chi^2( p_0 \| q) < \infty$.

We first consider the target and learned flows on $[t_{n-1}, t_n]$ to establish the needed descending arguments for the $n$-th step.
We shift the time interval to be $[0,T]$, $T: = \gamma > 0$,
and  the transport equations of $\rho$ and $\hat \rho$ are as in \eqref{eq:TE-rho-hatrho}.
Define
\begin{equation}\label{eq:def-G-hatG}
G(t) := \chi^2(\rho_t \| q), 
\quad
\hat G(t) := \chi^2(\hat \rho_t \| q).
\end{equation}
By the time endpoint values of $\rho$ and $\hat \rho$ as in \eqref{eq:TE-rho-hatrho}, we have
\[
\begin{split}
&~~~
G(0) = \hat G(0) = \chi^2(p_{n-1} \| q),  \\
&
G(T) = \chi^2(p_{n}^* \| q),
\quad 
\hat G(T) = \chi^2(p_{n} \| q).
\end{split}
\]
We will show that a) $G(T)$ descends, and b) $\hat G(T) \approx G(T)$, 
then, as a result, $\hat G(T)$ also descends.

We first verify the boundedness of $G(t)$ and $\hat G(t)$ at all times.
By definition, we have (we write integral on $\R^d$ omitting variable $x$ and $dx$ for notation brevity)
\[
G(t) = \int \frac{(\rho_t -q)^2}{ q} = \int (\frac{\rho_t }{q} -1 )^2 q  = \int (\frac{\rho_t }{q})^2  q  -1,
\]
when the involved integrals are all finite, 
and similarly with $\hat G(t) $ and $\hat \rho_t$. 
To verify integrability, observe that under (A1), %
\begin{equation}\label{eq:bound-rho2/q-proof}
\begin{split}
& \frac{\rho_t^2}{q}(x), \, \frac{\hat \rho_t^2}{q} (x)
	\le \frac{C_1^2 e^{- \| x\|^2}} { c_d e^{- \|x\|^2/2}}
	= \frac{C_1^2}{c_d} e^{- \| x\|^2/2}, \\
& \quad c_d := (2\pi)^{-d/2}.
\end{split}
\end{equation}
This shows that  $\rho_t/q, \hat \rho_t/q$ and thus  $(\rho_t/q-1), (\hat \rho_t/q-1)$ 
are all in $ L^2(q)$ for all $t$.
In particular, this applies to $\rho_0 = \hat \rho_0 = p_0$
by taking $n=1$ and $t=0$.
Thus, $G(t), \, \hat G(t)$ are always finite.
Additionally, \eqref{eq:bound-rho2/q-proof} gives that  
\begin{equation}\label{eq:boundedness-Gt-hatGt} 
G(t), \hat G(t) \le (C_1/c_d)^2  -1 \le (C_1/c_d)^2, \quad \forall t \in [0, T],
\end{equation}

Our descending argument  is based on the following two key lemmas:
\begin{lemma}[$\chi^2$-contraction of OU process]\label{lemma:Gt-contraction-OU}
$  \chi^2(p_{n}^* \| q) \le e^{-2 \gamma} \chi^2(p_{n-1} \| q)$.
\end{lemma}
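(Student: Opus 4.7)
The plan is to use the standard dissipation identity for the $\chi^2$ divergence along the OU semigroup combined with the Gaussian Poincaré inequality, which is a textbook contraction argument. Let $\rho_t = ({\rm OU})_0^t p_{n-1}$ for $t \in [0,\gamma]$, so that $\rho_0 = p_{n-1}$ and $\rho_\gamma = p_n^*$. Define the density ratio $h_t(x) := \rho_t(x)/q(x)$, so that $\chi^2(\rho_t \| q) = \int (h_t - 1)^2 q \, dx = \mathrm{Var}_q(h_t)$.

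First I would argue that the required integrals are finite: by Assumption \ref{assump:A1-A2-A3}(A1) applied to $\rho_t$ (which holds throughout the OU evolution on $[0,\gamma]$ since this is the ``true'' target flow), the bound $\rho_t^2/q \le (C_1^2/c_d) e^{-\|x\|^2/2}$ established in \eqref{eq:bound-rho2/q-proof} guarantees $h_t \in L^2(q)$, so $\chi^2(\rho_t \| q)$ is finite for all $t \in [0,\gamma]$. Next, I would use the fact that the Fokker--Planck equation $\partial_t \rho_t = \nabla \cdot(\rho_t \nabla V + \nabla \rho_t)$ translates, via $\rho_t = h_t q$ and $q \propto e^{-V}$, into the backward Kolmogorov equation $\partial_t h_t = \Delta h_t - \nabla V \cdot \nabla h_t$, whose generator has Dirichlet form $\mathcal{E}_q(h) = \int \|\nabla h\|^2 q \, dx$ with respect to the invariant measure $q$.

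The key computation is then
\[
\frac{d}{dt}\mathrm{Var}_q(h_t) = 2\int (h_t - 1)\,\partial_t h_t \, q \, dx = -2 \int \|\nabla h_t\|^2 q \, dx,
\]
where the second equality follows from integration by parts (the constant $1$ is in the kernel of the generator). The standard Gaussian Poincaré inequality, which is available because $q = \mathcal{N}(0,I_d)$ has potential $V(x) = \|x\|^2/2$ that is $1$-strongly convex, asserts $\mathrm{Var}_q(h_t) \le \int \|\nabla h_t\|^2 q \, dx$. Combining the two gives $\frac{d}{dt}\mathrm{Var}_q(h_t) \le -2\,\mathrm{Var}_q(h_t)$, and Grönwall's inequality then yields $\mathrm{Var}_q(h_\gamma) \le e^{-2\gamma}\,\mathrm{Var}_q(h_0)$, which is exactly the claim.

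The main technical obstacle, rather than the contraction itself (which is classical), is to justify the integration-by-parts step and the differentiation under the integral sign in the dissipation identity, since $h_t$ is merely in $L^2(q)$ a priori and we need some mild regularity and decay of $h_t$ and $\nabla h_t$ to make the boundary terms vanish. For this I would appeal to the smoothing property of the OU semigroup: for any $t > 0$, $h_t$ is smooth with rapidly decaying derivatives (indeed, $\rho_t$ is a convolution of $e^{-t}$-scaled $p_{n-1}$ with a non-degenerate Gaussian), which makes all manipulations rigorous on $(0,\gamma]$, and then one takes $t \downarrow 0$ using lower semicontinuity of $\chi^2$ (or continuity of $\mathrm{Var}_q(h_t)$ in $t$ on $[0,\gamma]$, which follows from dominated convergence using the uniform bound in \eqref{eq:bound-rho2/q-proof}) to extend the inequality to the endpoints.
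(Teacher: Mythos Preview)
Your argument is correct and essentially the same as the paper's, which simply cites the classical $\chi^2$-contraction under the Poincar\'e inequality (Eqn.~(3) in \cite{bolley2012convergence}); you have written out the standard dissipation-plus-Gr\"onwall details behind that citation. One small imprecision: the $\rho_t$ in Assumption~\ref{assump:A1-A2-A3}(A1) is the FM target-flow density induced by the velocity field $v$, not the OU marginal (the paper notes explicitly that these need not coincide for $0<t<\gamma$), so your appeal to (A1) ``throughout the OU evolution'' is not literally what the assumption says---but this is harmless, since you only need (A1) at $t=0$ to secure $\chi^2(p_{n-1}\|q)<\infty$, after which the contraction you derive takes care of finiteness and decay for all later $t$.
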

This implies that $G(T) \le e^{-2 T} G(0)$.

\begin{lemma}[$\hat G(T) \approx G(T)$]\label{lemma:chi2-close}
$\chi^2(p_{n} \| q)  \le \chi^2(p_{n}^* \| q)  + C_4 \varepsilon^{1/2}$, with $C_4$ defined in \eqref{eq:def-C4-proof}.
\end{lemma}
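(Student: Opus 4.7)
The plan is to bound $\hat G(T) - G(T) = \chi^2(p_n \| q) - \chi^2(p_n^* \| q)$ by translating it into a weighted distance between the learned terminal density $\hat\rho_T$ and the target terminal density $\rho_T$, and then controlling that distance by tracking its evolution on $[0,\gamma]$ so that the FM error $\varepsilon$ from Assumption \ref{assump:A0} can be extracted. First, factoring
\[
\hat G(T) - G(T) = \int \frac{(\hat\rho_T - \rho_T)(\hat\rho_T + \rho_T)}{q}\, dx,
\]
Assumption \ref{assump:A1-A2-A3}(A1) gives $(\hat\rho_T + \rho_T)/q \le 2 C_1/c_d$ pointwise, so it suffices to obtain an estimate of the form $\|\hat\rho_T - \rho_T\|_{L^1} \lesssim \varepsilon^{1/2}$ (or an $L^2(1/q)$ analogue), which then directly yields the claimed $C_4 \varepsilon^{1/2}$ bound.

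Next I would study $\eta_t := \hat\rho_t - \rho_t$, which satisfies $\eta_0 = 0$ and, by subtracting the two transport equations in \eqref{eq:TE-rho-hatrho},
\[
\partial_t \eta_t = -\nabla\cdot(\hat\rho_t\, \delta v + \eta_t v), \qquad \delta v := \hat v - v.
\]
Differentiating a suitable weighted norm such as $\int \eta_t^2/q\, dx$ and integrating by parts splits the time derivative into a velocity-mismatch term $\int \nabla(\eta_t/q)\cdot \hat\rho_t \delta v\, dx$ and a drift term $\int \nabla(\eta_t/q)\cdot \eta_t v\, dx$. Cauchy-Schwarz applied to the velocity-mismatch term, re-weighting to the measure $\rho_t\, dx$ in which the FM loss lives, produces a factor $\left(\int \|\delta v\|^2 \rho_t\, dx\right)^{1/2}$ which integrates in time to at most $\varepsilon$ by (A0); the complementary Gaussian-weighted integral is bounded by combining (A1), (A2), and crucially (A3), which supplies the weight change from $\rho$ to $\hat\rho$. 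The drift term is controlled linearly in $\|\eta_t\|^2$ using the linear growth bound on $\nabla\log \rho$ from (A2) together with the Gaussian moments ensured by (A1). A Grönwall argument on $[0,\gamma]$ then closes the loop and yields $\|\eta_T\|^2 \lesssim \varepsilon$, hence $\|\eta_T\| \lesssim \varepsilon^{1/2}$; substituting back into the factorization gives the lemma with an explicit $C_4$ depending on $C_1, C_2, L, \gamma, d$.

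The main obstacle is handling the velocity-mismatch integrand, which naturally pairs $\delta v$ with a weight determined by $\hat\rho$ or $\hat\rho/q$, whereas Assumption \ref{assump:A0} only controls $\|\delta v\|$ in $L^2(\rho)$. Assumption \ref{assump:A1-A2-A3}(A3) on the weighted moment of $\rho^3/\hat\rho^2$ is tailored precisely to bridge this gap through a Cauchy-Schwarz step without demanding a uniform bound on the ratio $\hat\rho/\rho$. This pathway is also what produces the $\varepsilon^{1/2}$ (rather than $\varepsilon$) exponent: the forcing of the Grönwall differential inequality enters linearly in $\varepsilon$ (via $\|\delta v\|_{L^2(\rho)}$ paired with a bounded moment), so the squared norm of $\eta_t$ is $O(\varepsilon)$ and taking the square root gives the claimed rate. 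A secondary technical burden is the bookkeeping of all constants so that $C_4$ is uniform in $n$ across all blocks.
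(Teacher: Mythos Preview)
Your opening reduction is fine and close to the paper's: both routes pass through a weighted $L^2$ distance between $\rho_T$ and $\hat\rho_T$. The paper uses the triangle inequality on $\sqrt{G}$ in $L^2(q)$ and then, via (A1), bounds $\int(\rho_T-\hat\rho_T)^2/q \le (C_1/c_d)\,F(T)$ for a suitable $F$, which plays the same role as your $\int \eta_T^2/q$.

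The gap is in how you propose to control the evolution. With the \emph{fixed} weight $1/q$, differentiating $\int \eta_t^2/q$ and integrating by parts leaves the drift term
\[
2\int \nabla\!\left(\frac{\eta_t}{q}\right)\cdot \eta_t\, v
\;=\;\int \frac{\eta_t^2}{q}\,\bigl(x\cdot v - \nabla\cdot v\bigr),
\]
which needs a pointwise bound on $x\cdot v$ and on $\nabla\cdot v$. Nothing in Assumptions \ref{assump:A0}--\ref{assump:A1-A2-A3} controls the FM target velocity $v$ (or its divergence): (A2) bounds the scores $\nabla\log\rho_t$ and $\nabla\log\hat\rho_t$, but $v$ is determined by the interpolant $I_t$ and is generally unrelated to the score (the paper even stresses that $\rho_t$ is \emph{not} the OU marginal for $0<t<\gamma$). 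So your claim that ``the drift term is controlled linearly in $\|\eta_t\|^2$ using the linear growth bound on $\nabla\log\rho$ from (A2)'' does not go through. A secondary problem is the mismatch term: after Cauchy--Schwarz against $\|\delta v\|_{L^2(\rho)}$ the complementary factor carries the weight $\hat\rho^2/\rho$, which is the \emph{inverse} ratio to the $\rho^3/\hat\rho^2$ that (A3) controls, and there is no lower bound on $\rho$ available to flip it.

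The paper's fix is to use a \emph{moving} weight: set $F(t)=\chi^2(\rho_t\|\hat\rho_t)=\int(\rho_t-\hat\rho_t)^2/\hat\rho_t$. Because both densities evolve by their own transport equations, the ``drift'' contributions cancel exactly and one gets the clean identity
\[
F'(t)=2\int \rho_t\,(v-\hat v)\cdot\nabla\!\left(\frac{\rho_t}{\hat\rho_t}\right)
=2\int (v-\hat v)\cdot(\nabla\log\rho_t-\nabla\log\hat\rho_t)\,\frac{\rho_t^2}{\hat\rho_t},
\]
so no Gr\"onwall step and no assumption on $v$ are needed. Cauchy--Schwarz on this expression, with weight $\rho_t$, produces precisely the factor $\int(1+\|x\|)^2\,\rho_t^3/\hat\rho_t^2$, which is exactly what (A3) is tailored to bound; the other factor is $\le \varepsilon$ by (A0), giving $F(T)\le C_3\varepsilon$ and hence $\hat G(T)\le G(T)+C_4\varepsilon^{1/2}$. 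If you replace your fixed weight $1/q$ by $1/\hat\rho_t$ (or $1/\rho_t$), the drift term disappears and your outline becomes the paper's proof.
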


Lemma \ref{lemma:Gt-contraction-OU} follows standard contraction results  of the diffusion process,
and Lemma \ref{lemma:chi2-close} utilizes the approximation of $\hat \rho \approx \rho$ due to the learning of the velocity field $\hat v \approx v$.
We postpone the proofs of the two lemmas after the proof of the proposition.

With the two lemmas in hand, we can put the $n$ steps together and prove \eqref{eq:prop-chi2-fwd-converge}.
Define 
\[
E_n := \chi^2( p_n \| q), \quad  \beta:= e^{-2 \gamma } < 1, \quad \alpha : = C_4 \varepsilon^{1/2},
\]
and then Lemmas \ref{lemma:Gt-contraction-OU}-\ref{lemma:chi2-close} give that 
\[
E_n \le  \beta E_{n-1} + \alpha. 
\]
By induction, one can verify that
\[
E_n 
\le \beta^n E_0 + \frac{\alpha(1-\beta^n)}{1-\beta}
\le \beta^n E_0 + \frac{\alpha}{1-\beta},
\]
which proves \eqref{eq:prop-chi2-fwd-converge} and finishes the proof of the proposition.
\end{proof}

\begin{proof}[Proof of Lemma \ref{lemma:Gt-contraction-OU}]
The lemma follows the well-understood contraction of $\chi^2$ divergence of the OU process, see, e.g., \cite{bolley2012convergence}.
Specifically, 
on the time interval  $[0, T]$, $T=\gamma$, 
the initial density $p_{n-1}$ renders $\chi^2(p_{n-1} \| q) < \infty$, because $p_{n-1}/q = \rho_0/q \in L^2(q)$ by the argument following \eqref{eq:bound-rho2/q-proof}.
For the OU process, since the equilibrium density $q \propto e^{-V}$ with $V(x) = \|x\|^2/2$,
we know that $q$ is strongly convex and then satisfies the Poincar\'e inequality (PI) with constant $C=1$.
By the argument in Eqn. (3) in \cite{bolley2012convergence}, 
the PI implies the contraction claimed in the lemma.
\end{proof}

\begin{proof}[Proof of Lemma \ref{lemma:chi2-close}]
We prove the lemma by showing the closeness of $\chi^2(p_{n} \| q) = \hat G(T)$ to $\chi^2(p_{n}^* \| q) =  G(T)$.
By definition \eqref{eq:def-G-hatG},
\[
G(t) = \| \frac{\rho_t}{q} -1\|_{L^2(q)}^2, 
\quad
\hat G(t) = \| \frac{\hat \rho_t}{q} -1\|_{L^2(q)}^2, \quad \forall t \in [0,T].
\]
We will use the triangle inequality $| \sqrt{G(t)} - \sqrt{\hat G(t)} | \le \| \frac{\hat \rho_t}{q} -  \frac{ \rho_t}{q} \|_{L^2(q)}$.
Observe that 
\begin{align}
\| \frac{\hat \rho_t}{q} -  \frac{ \rho_t}{q} \|_{L^2(q)}^2
& = \int \frac{ (\hat \rho_t - \rho_t)^2 }{q}
= \int \frac{ (\hat \rho_t - \rho_t)^2 }{\hat \rho_t} \frac{\hat \rho_t}{q} \nonumber \\ 
& \le \frac{C_1}{c_d}  \int   \frac{ (\hat \rho_t - \rho_t)^2 }{\hat \rho_t},
\end{align}
where the  inequality is due to the pointwise bound 
$\frac{\hat \rho_t(x)}{q(x)} \le \frac{C_1}{c_d}$, which follows  by (A1) and the expression of $q$ with $c_d$ as in \eqref{eq:bound-rho2/q-proof}.
We introduce
\begin{equation}\label{eq:def-Ft-proof}
F(t) := \int   \frac{ (\hat \rho_t - \rho_t)^2 }{\hat \rho_t}  
= \int (\frac{\rho_t }{\hat \rho_t}-1)^2 \hat \rho_t
\ge 0,
\end{equation}
and then we have
\begin{equation}\label{eq:bound-GT-hatGT-by-F}
| \sqrt{G(T)} - \sqrt{\hat G(T)} | \le \| \frac{\hat \rho_t}{q} -  \frac{ \rho_t}{q} \|_{L^2(q)}
\le ( C_1/c_d)^{1/2} \sqrt{F(T)}.
\end{equation}
Next, we will bound $F(T)$ to be $O(\varepsilon)$, and this will lead to the desired closeness of $\hat G(T)$ to $G(T)$.

\vspace{5pt}

\noindent
$\bullet$ Bound of $F(T)$:

We will upper bound $F(T)$ by differentiating $F(t)$ over time. By definition \eqref{eq:def-Ft-proof},
$F(0) = 0$, and (omitting $_t$ in $\rho_t$ and $\hat \rho_t$ in the equations)
\begin{align*}
\frac{d}{dt} F(t)
& =  \int  (\frac{\rho}{\hat \rho} -1)^2 \partial_t \hat \rho 
	+ 2  (\frac{\rho}{\hat \rho} -1) ( \partial_t \rho - \frac{\rho}{\hat \rho} \partial_t \hat \rho)  \\
& = 	\int 2 \partial_t \rho  (\frac{\rho}{\hat \rho} -1)  
	- \partial_t \hat \rho ( ( \frac{\rho}{\hat \rho} )^2 - 1 ) \\
& = 	\int - 2 \nabla \cdot (\rho v)   (\frac{\rho}{\hat \rho} -1)  
	+   \nabla \cdot ( \hat \rho \hat v) ( ( \frac{\rho}{\hat \rho} )^2 - 1 )  \\
&	~~~ \quad \text{(by transport equations \eqref{eq:TE-rho-hatrho})}\\
& = 	\int 2  (\rho v)   \cdot \nabla (\frac{\rho}{\hat \rho} )  
	- ( \hat \rho \hat v) \cdot 2\frac{\rho}{\hat \rho}  \nabla  ( \frac{\rho}{\hat \rho} )  \\
& = 	2 \int  \rho (v - \hat v)   \cdot \nabla (\frac{\rho}{\hat \rho} )  \\
& = 2 \int (v- \hat v) \frac{\rho^2}{\hat \rho} \cdot ( \nabla \log \rho -  \nabla \log \hat \rho),
\end{align*}
and the last row is by that 
$\nabla (\frac{\rho}{\hat \rho} ) = \frac{\rho }{\hat \rho} (\nabla \log \rho - \nabla \log \hat \rho) $.
Thus,
\[
\begin{split}
\frac{1}{2}F(T)  
&= \frac{1}{2} \int   F'(t) dt \\
&= \int_0^T \int_{\R^d} (v- \hat v) \cdot ( \nabla \log \rho -  \nabla \log \hat \rho) \frac{\rho^2}{\hat \rho}  dx dt.
\end{split}
\]
Then, by Cauchy-Schwarz, we have
\begin{equation}\label{eq:FT-CS-proof}
\begin{split}
\frac{1}{2} F(T)
& \le  
\left(\int_0^T  \int_{\R^d} \| v- \hat v\|^2 {\rho}   dx dt\right)^{1/2} \\
& \left(\int_0^T  \int_{\R^d} \| \nabla \log \rho -  \nabla \log \hat \rho) \|^2 \frac{\rho^3}{\hat \rho^2}  dx dt \right)^{1/2}.
\end{split}
\end{equation}
By the flow-matching error assumption \ref{assump:A0}, we have the first factor in the r.h.s. of \eqref{eq:FT-CS-proof}
 bounded by $\varepsilon$.
Meanwhile, 
the technical conditions (A2)(A3) together imply that  for all $t \in [0,T]$,
\[
\begin{split}
& \int_{\R^d}  \| \nabla \log \rho_t    - \nabla \log \hat \rho_t   \|^2  ( {\rho_t^3}/{ \hat \rho_t^2}) dx  \\
&~~ \le (2L)^2 \int_{\R^d} (1+ \| x\|)^2  ( {\rho_t^3}/{ \hat \rho_t^2})(x) dx 
\le (2L)^2 C_2,
\end{split}
\]
thus the second factor in \eqref{eq:FT-CS-proof} 
is upper bounded by $\sqrt{ (2L)^2 C_2 T}$.
Putting together, we have
\begin{equation}\label{eq:bound-FT-O(eps)-proof}
F(T) \le 2  (2L) \sqrt{C_2T} \varepsilon = C_3 \varepsilon,
\quad C_3: =  4L \sqrt{C_2 \gamma}.
\end{equation}

\vspace{5pt}

\noindent
$\bullet$ Bound of $\hat G(T)- G(T)$:

With the bound \eqref{eq:bound-FT-O(eps)-proof} of $F(T)$, we are ready to go back to \eqref{eq:bound-GT-hatGT-by-F}, which gives
\[
\sqrt{\hat G(T)} \le \sqrt{G(T)} + ( C_1/c_d)^{1/2} \sqrt{F(T)}.
\]
Together with that $G(T)\le (C_1/c_d)^2$ by \eqref{eq:boundedness-Gt-hatGt}, this gives that
\[
\begin{split}
\hat G(T) 
& \le G(T) + 2 (C_1/c_d)^{3/2} \sqrt{C_3 } \varepsilon^{1/2}+  ( C_1/c_d)C_3  \varepsilon \\
& \le G(T) + C_4 \varepsilon^{1/2},
\end{split}
\]
where we used that $\varepsilon < 1$ by Assumption \ref{assump:A0} and
\begin{equation}\label{eq:def-C4-proof}
 C_4 := 2 (C_1/c_d)^{3/2} \sqrt{C_3 }  +  (C_1/c_d) C_3.
 \end{equation}
The fact that $\hat G(T)  \le G(T) + C_4 \varepsilon^{1/2}$ proves the lemma.
\end{proof}

\subsection{Proofs in Section \ref{subsec:theory-reverse-process}}

\begin{lemma}[Bi-direction DPI]\label{lemma:bi-DPI}
Let ${\rm D}_f$ be an $f$-divergence.
    If $T: \R^d \to \R^d$ is invertible 
    and for two densities $p$ and $q$  on $\R^d$,
    $T_\# p$ and $T_\# q$ also have densities, then 
    \[
    D_f(p || q)  = D_f ( T_\# p || T_\# q) . 
    \]
\end{lemma}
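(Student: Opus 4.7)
The plan is to deduce the stated equality from the classical one-sided data processing inequality (DPI), applied once in each direction and combined via the invertibility of $T$. Recall the one-sided DPI: for any measurable map $S:\R^d\to\R^d$ and any probability measures $\mu,\nu$,
\[
D_f(S_\#\mu \,\|\, S_\#\nu) \le D_f(\mu \,\|\, \nu).
\]
First I would apply this to $S=T$, $\mu=p$, $\nu=q$, obtaining $D_f(T_\# p \,\|\, T_\# q) \le D_f(p \,\|\, q)$. For the reverse direction, I would invoke the same inequality but with $S = T^{-1}$, $\mu = T_\# p$, $\nu = T_\# q$. Because $T$ is invertible, $T^{-1}_\#(T_\# p) = p$ and $T^{-1}_\#(T_\# q) = q$, so the DPI gives $D_f(p \,\|\, q) \le D_f(T_\# p \,\|\, T_\# q)$. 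Combining the two inequalities yields the equality.

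The only point that requires any care, and what I would flag as the main thing to verify, is the measurability of $T^{-1}$ and the well-posedness of its pushforward action on densities, which is what justifies applying DPI in the second step. Under the paper's hypotheses this is automatic: $T$ is the flow map of a neural ODE (hence a $C^1$-diffeomorphism), $T^{-1}$ is the flow of the reverse-time ODE, and the non-degeneracy property of Definition~\ref{eq:def-ND} (baked into Assumption~\ref{assump:A0}) together with the stated assumption that $T_\# p$ and $T_\# q$ themselves have densities rules out any pathology.

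An alternative route, which avoids any appeal to the general DPI, is a direct change-of-variables computation. Writing $J(x) = |\det \nabla T(x)|$, the pushforward densities are $T_\# p(y) = p(T^{-1}(y))/J(T^{-1}(y))$ and analogously for $q$, so the Jacobian factors cancel in the ratio,
\[
\frac{T_\# p(y)}{T_\# q(y)} = \frac{p(T^{-1}(y))}{q(T^{-1}(y))}.
\]
Substituting $y = T(x)$ with $dy = J(x)\,dx$ in the definition of $D_f(T_\# p\,\|\,T_\# q)$ reduces the integral to $\int f(p(x)/q(x))\,q(x)\,dx = D_f(p\,\|\,q)$. I would probably present the DPI-twice argument as the main proof because it is cleaner and makes clear what the role of invertibility is, and mention the change-of-variables derivation as a remark for readers who prefer the explicit computation.
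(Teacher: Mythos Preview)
Your main argument---apply the classical DPI once with $T$ and once with $T^{-1}$, using invertibility to recover $p$ and $q$ from their pushforwards---is exactly the paper's proof. The change-of-variables alternative you sketch is a fine remark but not needed.
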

\begin{proof}[Proof of Lemma \ref{lemma:bi-DPI}]
Let $X_1 \sim p$, $X_2 \sim q$, and 
$Y_1 = T(X_1)$, $Y_2 = T(X_2)$.
Then $Y_1$ and $Y_2$ also have densities, $Y_1 \sim \tilde p:= T_\# p$ and $Y_2 \sim \tilde q := T_\# q$.
By the classical DPI of $f$-divergence (see, e.g., the introduction of \cite{raginsky2016strong}), 
we have $D_f( \tilde p || \tilde q) \le D_f( p || q)$.
In the other direction, $X_i = T^{-1} (Y_i)$, $i=1,2$, then DPI also implies 
$D_f( p || q) \le D_f( \tilde p || \tilde q)$.
\end{proof}

\begin{proof}[Proof of Theorem \ref{thm:chi2-guarantee-smooth-p0}]
Under the assumptions,  Proposition  \ref{prop:chi2-fwd-converge} applies to give that
\[
\chi^2( p_N \| q) \le e^{-2 \gamma N} \chi^2( p_0 \| q) + \frac{C_4}{1 - e^{-2 \gamma } }  \varepsilon^{1/2}.
\]
Then, whenever $e^{-2 \gamma N} \chi^2( p_0 \| q) \le \varepsilon^{1/2}$ which is ensured by the $N$ in the theorem, we have 
$\chi^2( p_N \| q) \le  (1+\frac{C_4}{1 - e^{-2 \gamma } }  )\varepsilon^{1/2}$.
Let $T_1^N : =  T_N \circ \cdots \circ T_1$, which is invertible, and 
$p_N = (T_1^N)_\# p_0$, $q_N = (T_1^N)_\# q_0$.
We will apply the bi-directional DPI Lemma \ref{lemma:bi-DPI} to show that
\begin{equation}\label{eq:chi2-p0q0=chi2-pNqN}
\begin{split}
\chi^2( p_0 \| q_0) 
&= \chi^2( (T_1^N)_\# p_0 \| (T_1^N)_\# q_0)  \\
&= \chi^2( p_N \| q_N) = \chi^2( p_N \| q),
\end{split}
\end{equation}
which then proves the theorem.

For Lemma \ref{lemma:bi-DPI} to apply to show the first equality in \eqref{eq:chi2-p0q0=chi2-pNqN}, 
it suffices to verify that $p_0$, $q_0$, $p_N$, $q_N$ all have densities. 
$p_0$ has density by the theorem assumption.
From Definition \ref{eq:def-ND}, one can verify that a transform $T$ being non-degenerate guarantees that $P$ has density $\Rightarrow  T_\# P$ has density (see, e.g., Lemma 3.2 of  \cite{cheng2024convergence}).
Because all $T_n$ are non-degenerate under Assumption \ref{assump:A0}, 
from that $p_0$ has density, we know that all $p_n$ has densities, including $p_N$.
In the reverse direction, $q_N = q$, which is the normal density. 
By that $T_n^{-1}$ are all non-degenerate, we similarly have that all $q_n$ have densities, including $q_0$.
\end{proof}
\begin{lemma}\label{lemma:chi2-bound-KL}
For two densities $p$ and $q$ where $\chi^2 ( p \| q) < \infty$,
 ${\rm KL} ( p \| q) \le  \chi^2 ( p \| q)$.
 \end{lemma}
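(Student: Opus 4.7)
The plan is to rewrite both divergences in their standard integral forms and then connect them via a single application of Jensen's inequality followed by the elementary bound $\log(1+x) \le x$.

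First I would observe that the hypothesis $\chi^2(p\|q) < \infty$ implies $p \ll q$ (for otherwise $\int p^2/q\,dx = +\infty$), so the Radon--Nikodym derivative $p/q$ is well-defined $p$-a.e., and both
\[
\mathrm{KL}(p\|q) = \int p \log\!\frac{p}{q}\,dx,
\qquad
\chi^2(p\|q) = \int \frac{p^2}{q}\,dx - 1
\]
make sense (with the convention $0\log 0 = 0$ on $\{p=0\}$). Note that $\int p^2/q\,dx = 1 + \chi^2(p\|q) < \infty$ by assumption.

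Next, I would apply Jensen's inequality to the concave function $\log$, regarding $p\,dx$ as a probability measure on $\mathbb{R}^d$ and $p/q$ as a non-negative random variable:
\[
\mathrm{KL}(p\|q) = \int \log\!\Big(\frac{p}{q}\Big)\, p\,dx
\;\le\; \log\!\int \frac{p}{q}\cdot p\,dx
= \log\!\int \frac{p^2}{q}\,dx
= \log\!\bigl(1 + \chi^2(p\|q)\bigr).
\]
Finally, combining this with the elementary inequality $\log(1+x) \le x$ valid for all $x > -1$, and using $\chi^2(p\|q) \ge 0$, yields
\[
\mathrm{KL}(p\|q) \;\le\; \log\!\bigl(1 + \chi^2(p\|q)\bigr) \;\le\; \chi^2(p\|q),
\]
which is the claim.

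There is no real obstacle here: the only mildly delicate point is justifying absolute continuity and the finiteness of $\int p^2/q\,dx$ so that Jensen's inequality is applicable with a finite right-hand side. Once that is noted, the two-line chain above finishes the proof.
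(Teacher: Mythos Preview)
Your proof is correct. It differs in structure from the paper's argument, though both rest on the same elementary inequality. The paper applies $\log x \le x-1$ \emph{pointwise} to $\log(p/q)$ and then integrates against $p$, obtaining
\[
\mathrm{KL}(p\|q) = \int p \log\frac{p}{q} \le \int p\Big(\frac{p}{q}-1\Big) = \int \frac{p^2}{q} - 1 = \chi^2(p\|q)
\]
in a single step. You instead apply Jensen's inequality first to get the intermediate bound $\mathrm{KL}(p\|q) \le \log\bigl(1+\chi^2(p\|q)\bigr)$, and only then invoke $\log(1+x)\le x$. Your route is one step longer but yields the strictly sharper intermediate inequality $\mathrm{KL} \le \log(1+\chi^2)$ along the way, which can be useful when $\chi^2$ is not small. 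The paper's route is more direct and avoids any discussion of integrability needed to justify Jensen, since the pointwise bound holds unconditionally.
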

\begin{proof}
The statement is a well-known fact, and we include an elementary proof of its completeness.
Because $\chi^2 ( p \| q) < \infty$, we have $(p/q-1)$ and thus $p/q \in L^2(q)$, i.e. $\int p^2/ q < \infty$.
By the fact that $\log x \le x-1$ for any $x >0$, 
$\log \frac{p}{q} (x) \le \frac{p}{q} (x) -1$, and then
\[
{\rm KL}(p \| q)
= \int p \log \frac{p}{q}
 \le \int p ( \frac{p}{q}-1) 
 = \int \frac{p^2}{q} -1
 = \chi^2(p \| q).
\]
\end{proof}

\begin{proof}[Proof of Corollary \ref{cor:short-diffusion}]
Let $\rho_t = ({\rm OU})_0^t (P)$.
Because $M_2(P) <\infty$, one can show that $\W( \rho_t, P)^2 = O(t)$ as $t \to 0$.
Specifically, by Lemma C.1 in  \cite{cheng2024convergence}, 
$\W( \rho_t, P)^2 \le t^2 M_2(P) + 2t  d $.
Thus,  for $t < 1$,
$\W( \rho_t, P)^2 \le  (M_2(P) + 2  d ) t $.
This proves $\W( P, \rho_\delta) \le C_5 \delta^{1/2}$.

Meanwhile, $p_0 = \rho_\delta$ satisfies the needed condition in Theorem \ref{thm:chi2-guarantee-smooth-p0}, 
the claimed bound of $\chi^2( \rho_\delta || q_0)$ directly follows from Theorem \ref{thm:chi2-guarantee-smooth-p0}. 
\end{proof}

\begin{lemma}\label{lemma:compact-supp-P}
Suppose $P$  on $\R^d$ is compactly supported, then, $\forall t > 0$, $\rho_t = ({\rm OU})_0^t (P)$ satisfies \eqref{eq:assump-on-pn} for some $C_1$, $C_2$ and $L$ (which may depend on $t$). 
\end{lemma}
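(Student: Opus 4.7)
The plan is to compute $\rho_t$ explicitly as a Gaussian convolution of $P$ and then read off each of the three inequalities in \eqref{eq:assump-on-pn} from the fact that $P$ is supported in a ball of some finite radius $R$. Recall from Section~\ref{sec:prelim} that if $X_0\sim P$ and $Z\sim\calN(0,I_d)$ is independent of $X_0$, then $Z_t:=e^{-t}X_0+\sigma_t Z$ has density $\rho_t$, so
\[
\rho_t(x) \;=\; (2\pi\sigma_t^2)^{-d/2}\int_{\R^d} \exp\!\Big(-\frac{\|x-e^{-t}y\|^2}{2\sigma_t^2}\Big)\,dP(y),\qquad \sigma_t^2=1-e^{-2t}.
\]
Since $P$ is supported on $\{\|y\|\le R\}$, every $y$ in the integrand satisfies $\|y\|\le R$, which is the only fact I will use about $P$.

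The main obstacle is the Gaussian-envelope bound (A1), because $\rho_t$ in fact decays like $e^{-\|x\|^2/(2\sigma_t^2)}$ (which is strictly faster than $e^{-\|x\|^2/2}$ for $t<\infty$), so we only need to absorb a linear term in $\|x\|$ into a constant. Concretely, expanding $\|x-e^{-t}y\|^2=\|x\|^2-2e^{-t}\langle x,y\rangle+e^{-2t}\|y\|^2$ and using $|\langle x,y\rangle|\le R\|x\|$ gives
\[
-\frac{\|x-e^{-t}y\|^2}{2\sigma_t^2} \;\le\; -\frac{\|x\|^2}{2\sigma_t^2} + \frac{e^{-t}R\,\|x\|}{\sigma_t^2}.
\]
Adding $\|x\|^2/2$ to both sides, the quadratic term becomes $-e^{-2t}\|x\|^2/(2\sigma_t^2)$, and completing the square in $\|x\|$ shows that the full right-hand side is uniformly bounded by $R^2/(2\sigma_t^2)$. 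This yields the envelope with
\[
C_1 \;=\; (2\pi\sigma_t^2)^{-d/2}\,\exp\!\big(R^2/(2\sigma_t^2)\big).
\]

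For the score bound (A2), I differentiate under the integral:
\[
\nabla\log\rho_t(x) \;=\; -\frac{x-e^{-t}\,\mathbb{E}_{Q_x}[Y]}{\sigma_t^2},
\]
where $Q_x$ is the posterior distribution of $Y$ given $Z_t=x$. Because $Q_x$ is supported in $\{\|y\|\le R\}$, the posterior mean has norm at most $R$, so by the triangle inequality $\|\nabla\log\rho_t(x)\|\le \sigma_t^{-2}(\|x\|+e^{-t}R)\le L(1+\|x\|)$ with $L=\sigma_t^{-2}\max(1,e^{-t}R)$. Finally, the second-moment bound (A3) follows immediately once (A1) is in hand, since $\int(1+\|x\|)^2 C_1 e^{-\|x\|^2/2}dx$ is a finite constant depending only on $C_1$ and $d$; alternatively one checks it directly from $\E\|Z_t\|^2=e^{-2t}\E\|X_0\|^2+\sigma_t^2 d\le R^2+d$. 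This produces the desired constants $C_1,C_2,L$, each an explicit function of $t$, $R$, and $d$, completing the proof.
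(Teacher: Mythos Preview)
Your proof is correct and follows essentially the same approach as the paper: both express $\rho_t$ as the Gaussian convolution of $P$, bound the exponent using $\|y\|\le R$ to obtain the Gaussian envelope, and bound the score via $\|\nabla\rho_t\|/\rho_t\le\sigma_t^{-2}(\|x\|+e^{-t}R)$, arriving at the same constant $L=\sigma_t^{-2}\max(1,e^{-t}R)$. The only cosmetic differences are that you complete the square explicitly to produce a concrete $C_1=(2\pi\sigma_t^2)^{-d/2}e^{R^2/(2\sigma_t^2)}$ (the paper just asserts boundedness of the exponential) and you phrase the score computation via the posterior mean (Tweedie's formula) rather than a direct quotient.
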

\begin{proof}[Proof of Lemma \ref{lemma:compact-supp-P}]
Suppose $P$ is supported on $B_R : = \{ x \in \R^d, \|x\| \le R\}$ for some $R>0$. 

By the property of the OU process, $\rho_t$ is the probability density of the random vector
\begin{equation}\label{eq:Zt-law-OU-proof}
Z_t: = e^{-t} X_0 + \sigma_t Z, \quad Z \sim \calN(0,I_d), \quad X_0 \sim P, \quad Z \perp X_0.
\end{equation}
where $\sigma_t^2 = 1-e^{-2 t}$. 
We will verify the first two inequalities in \eqref{eq:assump-on-pn}, and the third one is implied by the first one. 
(The third one also has a direct proof: 
 $ M_2(\rho_\delta) = \E \| Z_\delta \|^2 = e^{-2\delta} M_2(P) + \sigma_\delta^2 d < \infty$,
then the third condition in \eqref{eq:assump-on-pn} holds 
with $C_2 = 1 + e^{-2\delta} M_2(P) + \sigma_\delta^2 d $.)

The law of $Z_t$ in \eqref{eq:Zt-law-OU-proof} gives that 
\begin{equation}\label{eq:expression-rhot-OU}
\rho_t(x) =  \int_{\R^d}  \frac{1}{(2\pi \sigma_t^2)^{d/2}} e^{- \| x - e^{-t} y \|^2/(2 \sigma_t^2)}  dP(y).
\end{equation}

This allows us to verify the first two inequalities in \eqref{eq:assump-on-pn} with proper $C_1$ and $L$, making use of the fact that $P$ is supported on $B_R$. 
Specifically,  by definition, 
\[
\begin{split}
\nabla \rho_t(x)
= \int_{\R^d}  
& \frac{1}{(2\pi \sigma_t^2)^{d/2}} 
	\left(  - \frac{x - e^{-t} y }{\sigma_t^2} \right)  \\
&	e^{- \| x - e^{-t} y \|^2/(2 \sigma_t^2)}  dP(y),
\end{split}
\]
and then, because $\| x - e^{-t} y \|\le \|x \| + e^{-t} \| y\| \le \|x \| + e^{-t} R  $,
\[
\begin{split}
\| \nabla \rho_t(x) \|
& \le   \frac{\| x  \| +  e^{-t} R}{\sigma_t^2}   
\int_{\R^d}  \frac{1}{(2\pi \sigma_t^2)^{d/2}}  \\
&~~~~~~~~~~~~~~~~~~~~~~~
	e^{- \| x - e^{-t} y \|^2/(2 \sigma_t^2)}  dP(y) \\
&= \frac{\| x  \| +  e^{-t} R}{\sigma_t^2}    \rho_t(x).
\end{split}
\]
This means that
\[
\frac{ \| \nabla \rho_t(x) \| }{\rho_t(x)}
\le \frac{\| x  \| +  e^{-t} R}{\sigma_t^2},
\]
which means that the 2nd condition in \eqref{eq:assump-on-pn} holds with $L = \frac{1}{\sigma_t^2}\max\{1,  e^{-t} R  \} $.

To prove the first inequality, we again use the expression \eqref{eq:expression-rhot-OU}.
By that $\| x \| \le \|x - e^{-t} y \| + \| e^{-t} y\|$, and that $\|y\|\le R$, we have
\begin{align*}
\|x\|^2 
& \le \|x - e^{-t} y \|^2+ 2 \|x - e^{-t} y \| \| e^{-t} y\| + \| e^{-t} y\|^2 \\
& \le  \|x - e^{-t} y \|^2+ 2 ( \|x\| + e^{-t} R) e^{-t} R +  e^{-2t} R^2 \\
& = \|x - e^{-t} y \|^2+ 2 e^{-t} R \|x\| + 3 e^{-2t} R^2, 
\end{align*}
and thus
\begin{align*}
e^{- \|x - e^{-t} y \|^2/(2 \sigma_t^2)}
& \le e^{- \frac{1}{2 \sigma_t^2}  (  \|x\|^2  -  2 e^{-t} R \|x\|  -  3 e^{-2t} R^2) }  \\
& = e^{- \frac{1}{2 \sigma_t^2}    \|x\|^2 + \frac{ e^{-t} R}{ \sigma_t^2}   \|x\|}
	e^{\frac{3 e^{-2t} R^2}{2 \sigma_t^2}  }.
\end{align*}
Inserting in \eqref{eq:expression-rhot-OU}, we have that
\[
\rho_t(x) \le  
 	\alpha_t
	e^{- \frac{1}{2 \sigma_t^2}    \|x\|^2+ \frac{ e^{-t} R}{ \sigma_t^2}   \|x\|},
\quad \alpha_t := \frac{e^{\frac{3 e^{-2t} R^2}{2 \sigma_t^2}  }}{(2\pi \sigma_t^2)^{d/2}}.
\]
For $\rho_t(x) \le C_1 e^{- \| x\|^2/2}$, it suffices to have $C_1$ s.t.
\begin{equation}\label{eq:C1-needed-proof}
\frac{C_1}{ \alpha_t} 
	\ge e^{\frac{1}{2}(1 - \frac{1}{\sigma_t^2}) \| x\|^2 
	+ \frac{ e^{-t} R}{ \sigma_t^2}   \|x\|}.
\end{equation}
Because $t >0$,  $1- \frac{1}{\sigma_t^2} =  \frac{-e^{-2t}}{1-e^{-2t}} < 0$, 
the r.h.s. of \eqref{eq:C1-needed-proof} as a function on $\R^d$ decays faster than $e^{-c \|x\|^2}$ for some $c>0$ as $\| x \| \to \infty$,
and then the function is bounded on $\R^d$.
This means that there is $C_1 >0$ to make \eqref{eq:C1-needed-proof} hold. 
This proves that the first inequality in \eqref{eq:assump-on-pn} holds for $\rho_t$.
\end{proof}

\captionsetup[sub]{labelformat=empty}
\begin{figure*}[t]

    \centering
    \begin{minipage}[t]{0.122\textwidth}
        \includegraphics[width=\linewidth]{ 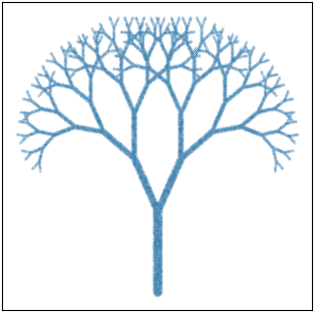}
        \subcaption{\makecell[l]{True data \\ from $p$}}
    \end{minipage}
    \begin{minipage}[t]{0.122\textwidth}
        \includegraphics[width=\linewidth]{ 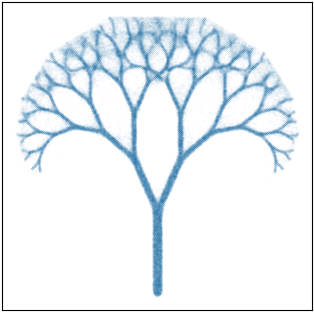}
        \subcaption{\makecell[l]{\lfm{} \\ NLL=\underline{2.24}}}
    \end{minipage}
    \begin{minipage}[t]{0.122\textwidth}
        \includegraphics[width=\linewidth]{ 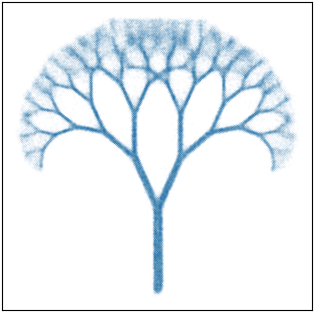}
        \subcaption{\makecell[l]{FM \\ NLL=2.35}}
    \end{minipage}
    \begin{minipage}[t]{0.122\textwidth}
        \includegraphics[width=\linewidth]{ 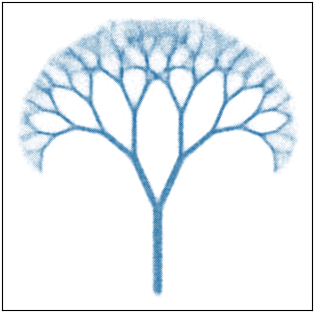}
        \subcaption{\makecell[l]{InterFlow \\ NLL=2.36}}
    \end{minipage}
    \begin{minipage}[t]{0.122\textwidth}
        \includegraphics[width=\linewidth]{ 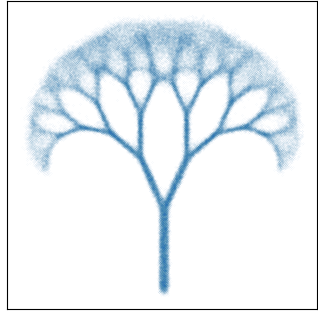}
        \subcaption{\makecell[l]{{ScoreSDE} \\ {NLL=2.51}}}
    \end{minipage}
    \begin{minipage}[t]{0.122\textwidth}
        \includegraphics[width=\linewidth]{ 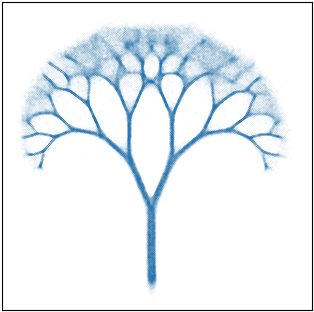}
        \subcaption{\makecell[l]{{Roundtrip} \\ {NLL=2.45}}}
    \end{minipage}
    \begin{minipage}[t]{0.122\textwidth}
        \includegraphics[width=\linewidth]{ 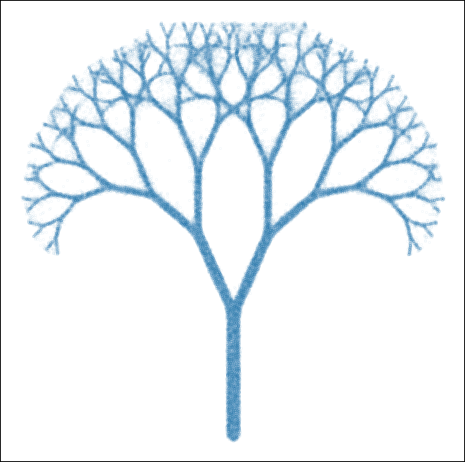}
        \subcaption{\makecell[l]{{JKO-iFlow} \\ {NLL=\textbf{2.11}}}}
    \end{minipage}
    \begin{minipage}[t]{0.122\textwidth}
        \includegraphics[width=\linewidth]{ 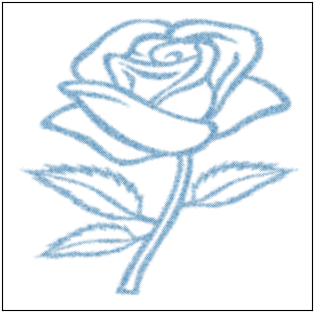}
        \subcaption{\makecell[l]{True data \\ from $p$}}
    \end{minipage}
    \begin{minipage}[t]{0.122\textwidth}
        \includegraphics[width=\linewidth]{ 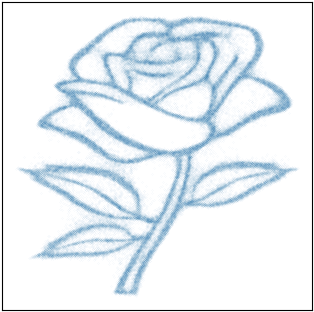}
        \subcaption{\makecell[l]{\lfm{} \\ NLL=\underline{2.60}}}
    \end{minipage}
    \begin{minipage}[t]{0.122\textwidth}
        \includegraphics[width=\linewidth]{ 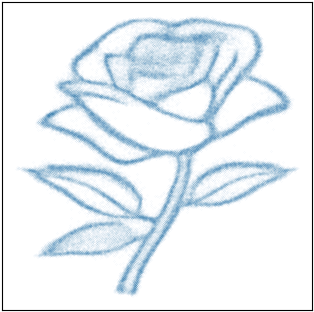}
        \subcaption{\makecell[l]{FM \\ NLL=2.64}}
    \end{minipage}
    \begin{minipage}[t]{0.1255\textwidth}
        \includegraphics[width=\linewidth]{ 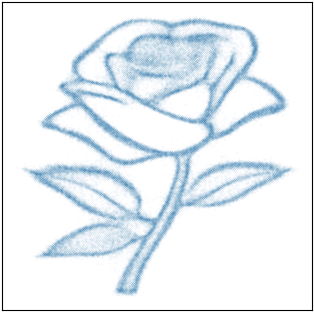}
        \subcaption{\makecell[l]{InterFlow \\ NLL=2.64}}
    \end{minipage}
    \begin{minipage}[t]{0.122\textwidth}
        \includegraphics[width=\linewidth]{ 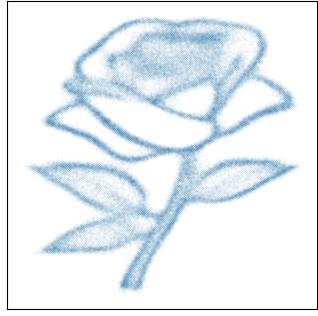}
        \subcaption{\makecell[l]{{ScoreSDE} \\ {NLL=2.72}}}
    \end{minipage}
    \begin{minipage}[t]{0.122\textwidth}
        \includegraphics[width=\linewidth]{ 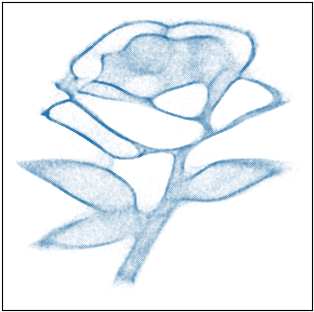}
        \subcaption{\makecell[l]{{Roundtrip} \\ {NLL=2.86}}}
    \end{minipage}
    \begin{minipage}[t]{0.122\textwidth}
        \includegraphics[width=\linewidth]{ 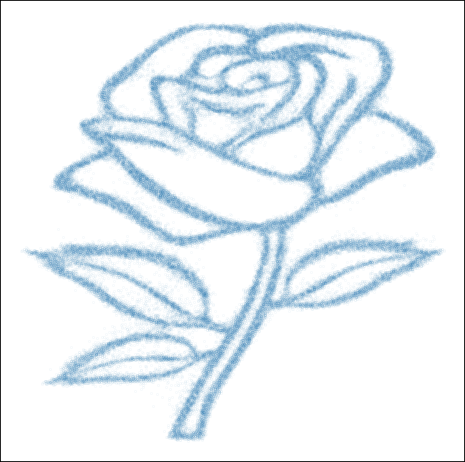}
        \subcaption{\makecell[l]{{JKO-iFlow} \\ {NLL=\textbf{2.53}}}}
    \end{minipage}

    \caption{Generative performance and NLL comparison (lower is better) on 2D data. {The lowest NLL is in \textbf{bold} and the second lowest NLL is \underline{underlined}. 
    For global FM methods, FM  \cite{lipman2023flow} uses the OT interpolant, and InterFlow  \cite{albergo2023building} uses the Trig interpolant, see Section \ref{sec:algorithm}.}
    }
    \label{fig:2d}
\end{figure*}

\begin{table*}[t]
\caption{
Test NLL (lower is better) on tabular data, where $d$ denotes the data dimension. The lowest NLL is indicated in \textbf{bold}, and the second-lowest NLL is \underline{underlined}. Except for ScoreSDE and Roundtrip, where none of the datasets were used, all baseline values are quoted from their original publications, with ``-'' entries indicating that the dataset was not used. For a fair comparison, ScoreSDE and Roundtrip were trained under the same training specification as \lfm{}.
}\label{tab:tabular_results}

\centering
\setstretch{1.25}
\resizebox{\textwidth}{!}{
\fontsize{20pt}{20pt}\selectfont
\begin{tabular}{lccccccccccc}
\toprule
& \lfm{} & InterFlow & JKO-iFlow & AdaCat & {ScoreSDE} & {Roundtrip} & OT-Flow & nMDMA & CPF & BNAF & FFJORD \\
\hline
\makecell{POWER \\ ($d=6$)} & {\underline{-0.715} $\pm$0.004} & {-0.57} & -0.40 & -0.56 & {-0.47} & {4.33} & -0.30 & \textbf{-1.78} & -0.52 & {-0.61} & -0.46 \\
\makecell{GAS \\ ($d=8$)} & {\textbf{-12.445} $\pm$0.057} & \underline{-12.35} & -9.43 & -11.27 & {-11.65} & {1.62} & -9.20 & -8.43 & -10.36 & -12.06 & -8.59 \\
\makecell{MINIBOONE \\ ($d=43$)} & {\underline{10.11} $\pm$0.138} & {10.42} & 10.55 & 14.14 & {10.45} & {18.63} & 10.55 & 18.60 & 10.58 & \textbf{8.95} & 10.43 \\
\makecell{BSDS300 \\ ($d=63$)} & 
{\textbf{-161.16} $\pm$ 0.407}
& -156.22 & \underline{-157.75} & - & {-143.31} & {27.29} & -154.20 & - & -154.99 & -157.36 & -157.40 \\
\bottomrule
\end{tabular}
}
\end{table*}

\section{Experiments}\label{sec:expr}

We apply the proposed \lfm{} to both simulated and real datasets,
including tabular data (Section \ref{sec:tabular}),
image generation (Section \ref{sec:img_gen}), 
and robotic manipulation policy learning (Section \ref{sec:robotics}). 
We demonstrate the improved training efficiency and generative performance of \lfm{} compared to (global) flow models and highlight the advantage of \lfm{} after distillation in the context of image generation. Code is available at 
\url{https://github.com/xycheng/local-flow-matching}.

\subsection{Algorithm and evaluation}\label{sec:algorithm}

We first detail the implementation of \lfm{},
and then introduce evaluation metrics and alternative baselines in our experiments.

\begin{algorithm}[!b]
\caption{Local Flow Matching (\lfm{}) from scratch}
\label{algo:lfm}
\begin{algorithmic}[1]
\REQUIRE 
Data samples $\sim p_0$, timesteps $\{\gamma_n\}_{n=1}^{N-1}$.
\ENSURE{$N$ sub-flows $\{ \hat v_n \}_{n=1}^N$}
\FOR{$n=1,\ldots,N$}
\STATE  
Draw samples $x_l \sim p_{n-1}$ and $x_r \sim p_{n}^* = (\text{OU})_0^{\gamma_n} p_{n-1}$
by \eqref{eq:xr-from-xl} (when $n=N$, let $ p_{n}^* = q$)

\STATE 
Innerloop FM:
optimize $\hat v_n(x,t;  \theta)$ by minimizing \eqref{eq:fm_loss} with mini-batches

\IF{$n\leq N-1$}
\STATE Push-forward the samples $\sim p_{n-1}$ to be samples $\sim p_n$ 
by $T_n$ in \eqref{eq:def-Tn-2}
\ENDIF
\ENDFOR
\end{algorithmic}
\end{algorithm}

\paragraph{Training from scratch}
The training process for \lfm{} is summarized in Algorithm \ref{algo:lfm}. Each sub-flow FM is referred to as a step or a ``block'', and any FM algorithm can be employed in each block. In our experiments, we consider the following choices for $I_t(x_l, x_r)$ in \eqref{eq:interpolation}, following   \cite{lipman2023flow,albergo2023building}:

\begin{itemize}

\item[(i)] Optimal Transport (OT): $I_t(x_l,x_r)=x_l + t(x_r-x_l)$,

\item[(ii)] Trigonometric (Trig): $I_t(x_l,x_r)=\cos(\pi t/2)x_l + \sin(\pi t/2)x_r$.

\end{itemize}

The selection of time stamps $\{\gamma_n\}_{n=1}^{N-1}$ depends on the task, and we use the following scheme: 
\(
    \gamma_n = \rho^{n-1}c, n=1,2,\ldots,
    \)
where $c$ (base time stamp) and $\rho$ (multiplying factor) are user-specified hyper-parameters. In our experiments, we use $N$ up to 10. Each sub-flow $\hat v_n(x,t;\theta)$ has no restrictions on its architecture. We use fully connected networks for vector data and UNets for image data. When sub-flows are independently parametrized, we reduce the model size for each block under the assumption that the target flow to match is simpler than a global flow. This reduces memory load and facilitates the inner-loop training of FM. For computing the pushforward (Line 5 in Algorithm \ref{algo:lfm}) and during generation, the numerical integration of the neural ODE follows the same procedures as in  \cite{chen2018neural}. 

\paragraph{Model distillation}
Inspired by  \cite{liu2023flow}, we further employ distillation on an $N$-block pre-trained \lfm{} model into $N' < N$ step distilled model, where $N = N'k $ for some integer $k$.
Each of the $N'$ sub-models can be distilled independently if parametrized independently. 
The detailed procedure is outlined in Algorithm \ref{algo:lfm_distill}.
If the pre-trained \lfm{} has independently parametrized blocks, we retain the sub-model size for $k=1$ and increase the model size when the distillation combines original blocks ($k >1$). The composition of the $N'$ distilled sub-models generates data from noise in $N'$ steps. 

\paragraph{Evaluation metrics}
We use several performance metrics for the different experiments: 
(i) Negative log-likelihood (NLL): We report NLL in $\log_e$ (known as ``nats'') and following \eqref{likelihood}, 
the evaluation of NLL at test sample $x\sim p$ are using the trained \lfm{} with $N$ sub-flows $\hat{v}_n( \cdot ,t;\theta)$ for $n=1,\ldots,N$. 
Both the integration of $\hat{v}_n$ and $\nabla \cdot \hat{v}_n$ are computed using the numerical integration of neural ODE.  
(ii) For image generation tasks, we also use the commonly adopted Frechet inception distance (FID) \cite{heusel2017gans}. 
(iii) For policy learning, we report an application-specific ``success rate'' as defined in Appendix \ref{sec:robomimic}.

\paragraph{Alternative baselines}
We compare with a series of alternative methods, including various flow- and diffusion-based models.
Within flow models, we mainly compare against 
FM \cite{lipman2023flow},
InterFlow  \cite{albergo2023building},
and $K$-rectified flow  \cite{liu2023flow}.
FM always uses the OT interpolant, and in our experiments, InterFlow always uses the Trig interpolant.
Note that  InterFlow with OT choice of $I_t$ is the same as FM, and 1-rectified flow is also the same as FM. More details of the alternative baselines can be found in Appendix \ref{app:baselines}.

\begin{figure*}[t]
    \centering
    \includegraphics[width=\linewidth]{ 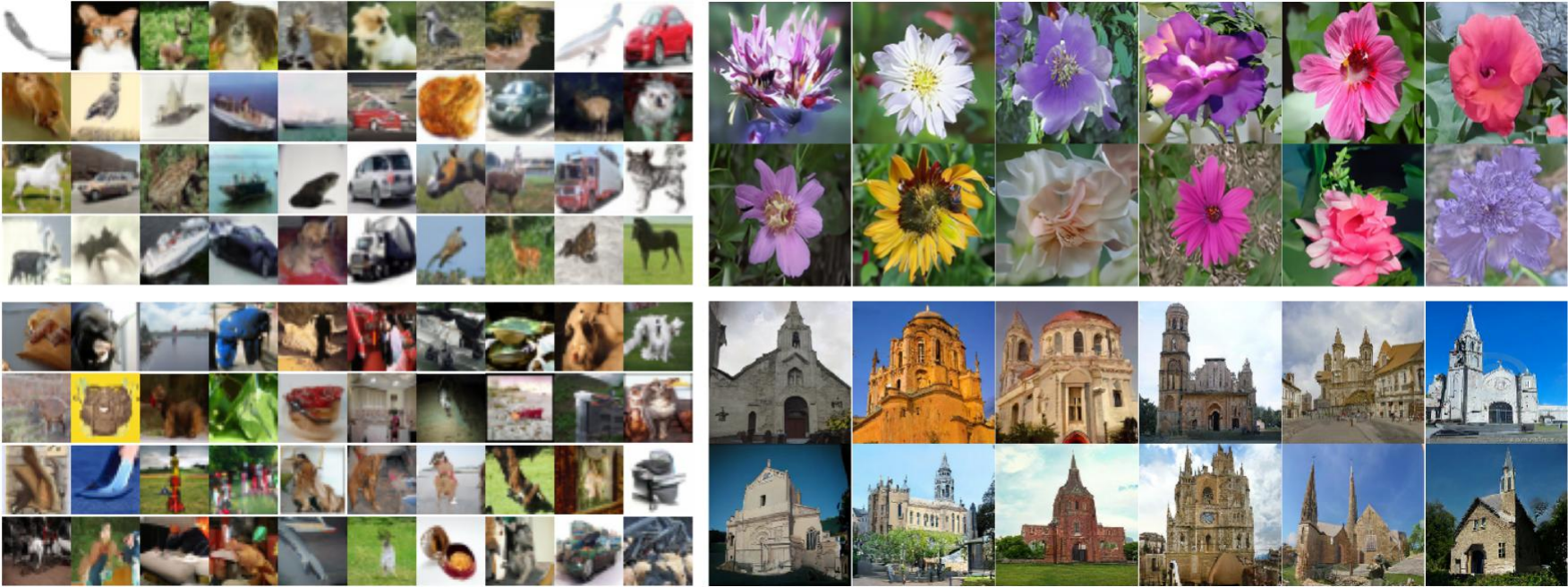}
    \caption{Unconditional image generation by \lfm{} on $32\times 32$ images:
    (upper left) CIFAR-10,
    (lower left) Imagenet-32;
    $128\times 128$ images:
    (upper right) Flowers,
    (lower right) LSUN Church.}
    \label{fig:image_gen}
\end{figure*}

\subsection{Two-dimensional data toy examples}\label{sec:2d_toy}

We consider a toy example where the task is to generate two-dimensional distributions with no analytic form: the ``tree'' and ``rose'' distributions (Figure \ref{fig:2d}). To ensure a fair comparison, we use an identical training scheme for all methods, including the choice of optimizers, batch size (we use a 1/10 batch size for JKO-iFlow to avoid excessively long wall-clock training time), and number of training batches. Further details about the experimental setup are provided in Appendix \ref{app:exp}. The accuracy of the trained models is evaluated using NLL. 
Figure \ref{fig:2d} shows that \lfm{} generates the distributions effectively and achieves slightly better NLL compared to all methods except JKO-iFlow, which, unlike \lfm{}, is not scalable to high-dimensional tasks.

\subsection{Tabular data generation}\label{sec:tabular}

We apply \lfm{} to a set of tabular datasets  \cite{papamakarios2017masked}, 
which are collected and processed from the University of California Irvine Machine Learning Repository. The datasets vary in dimensionality and application domains, such as household power consumption and carbon monoxide gas mixtures.
The generation performance is evaluated by test NLL 
as commonly used in the CNF literature  \cite{nflow_review,onken2021ot,xu2023normalizing}.
For each dataset, we parameterize the local sub-flows using fully connected networks with varying widths and depths, while keeping the total number of model parameters equal to that of the global flow model (i.e., InterFlow) for a fair comparison.
Additional experimental details can be found in Appendix \ref{app:exp}.
The results, shown in Table \ref{tab:tabular_results}, indicate that the proposed \lfm{} is among the top two best-performing methods across all datasets. We also include ablation study on the choice of $N$ and $\gamma_n$ in Appendix \ref{app:schedule}.

\begin{table*}
\centering
\caption{FID (lower is better) comparison of \lfm{} against InterFlow and FM under same model sizes. FIDs marked with $^*$ are quoted from the original publications FM \cite{lipman2023flow} and InterFlow \cite{albergo2023building}. Note that InterFlow uses a Trig interpolant, and FM uses an OT interpolant to train the global flow.}
\label{tab:fid}
\setstretch{1.15}
\resizebox{.95\textwidth}{!}{
\fontsize{12pt}{12pt}\selectfont
\begin{tabular}{lccc|ccc|ccc}
\hline
 & \multicolumn{3}{c|}{CIFAR-10} & \multicolumn{3}{c|}{Imagenet-32} & \multicolumn{3}{c}{Flowers-128} \\
 & FID & Batch size & \# of batches & FID & Batch size & \# of batches & FID & Batch size & \# of batches \\
\hline
{FM} 		& 6.35$^*$ 	& 256 &  391k 		& 5.02$^*$ 	&  1024 &  250k	 & 70.8 & {40} 	& 40k \\
InterFlow 		& 10.27$^*$ 	 & 400 	& 500k 			& 8.49$^*$ 	& 512 	& 600k 		 &  65.9 & {40} 	& 40k \\
{\lfm{} (OT)} 	&   6.80  	&  256 & 50k 	& 7.20 		& 256 	& 200k 		 & 55.7 & {40} 	& 40k  \\
\lfm{} {(Trig)} 	& 6.66   	& 256 & 50k 	& 7.00 		& 256 	& 200k		 & 59.7 & {40} 	& 40k \\
\hline
\end{tabular}}
\end{table*}

\subsection{Image generation}\label{sec:img_gen}

We apply \lfm{} to unconditional image generation of $32\times 32$ and $128\times 128$ images (meaning that we do not use class labels). We compare \lfm{} with InterFlow {and FM} in terms of FID before and after distillation.
To ensure fair comparison, in \lfm{} with multiple blocks, each block uses a Unet of reduced capacity, so the overall model parameter counts are comparable between \lfm{} and InterFlow/FM.
Additional details of the experimental setup are provided in Appendix \ref{app:exp}.

\begin{table}[!t]
\centering
\caption{FID of \lfm{} and InterFlow before and after distillation on Flowers $128 \times 128$. }\label{tab:distillation}
\vspace{-3pt}
\setstretch{1.25}
\resizebox{.5\linewidth}{!}{
\fontsize{20pt}{20pt}\selectfont
\begin{tabular}{lccc}
\hline & Pre-distillation & Distilled @ 4 NFEs &  Distilled @ 2 NFEs \\
\hline
\lfm{} & 59.7 & \textbf{71.0} & \textbf{75.2} \\
InterFlow & 59.7 & 80.0 & 82.4 \\
\hline
\end{tabular}}
\vspace{-12pt}
\end{table}

\textbf{$32\times 32$ images.} We use the CIFAR-10  \cite{krizhevsky2009learning} and Imagenet-32  \cite{deng2009imagenet} datasets. 
As shown in Table \ref{tab:fid}, 
\lfm{} requires significantly less computation during training compared to InterFlow and achieves better or comparable FID values.
\lfm{} also demonstrates improved training efficiency against FM:
the original FM paper \cite{lipman2023flow}
reported a better FID on Imagenet-32 using a larger batch size and longer training than our   \lfm{}  model.
Overall, \lfm{} gets comparable generation quality with the FM methods \cite{albergo2023building,lipman2023flow}
with less computation and memory usage, see also Table \ref{tab:cifar-more}(b).
Generated images by \lfm{} are presented in Figure \ref{fig:image_gen}.
We also implemented distillation of the \lfm{} to NFE = 5 (NFE stands for the Number of Function Evaluations).
The generated images by the distilled \lfm{} are shown in Figure \ref{fig:image_distill}, exhibiting an almost negligible reduction in visual quality.

\begin{figure*}
    \centering
    \includegraphics[width=\linewidth]{ 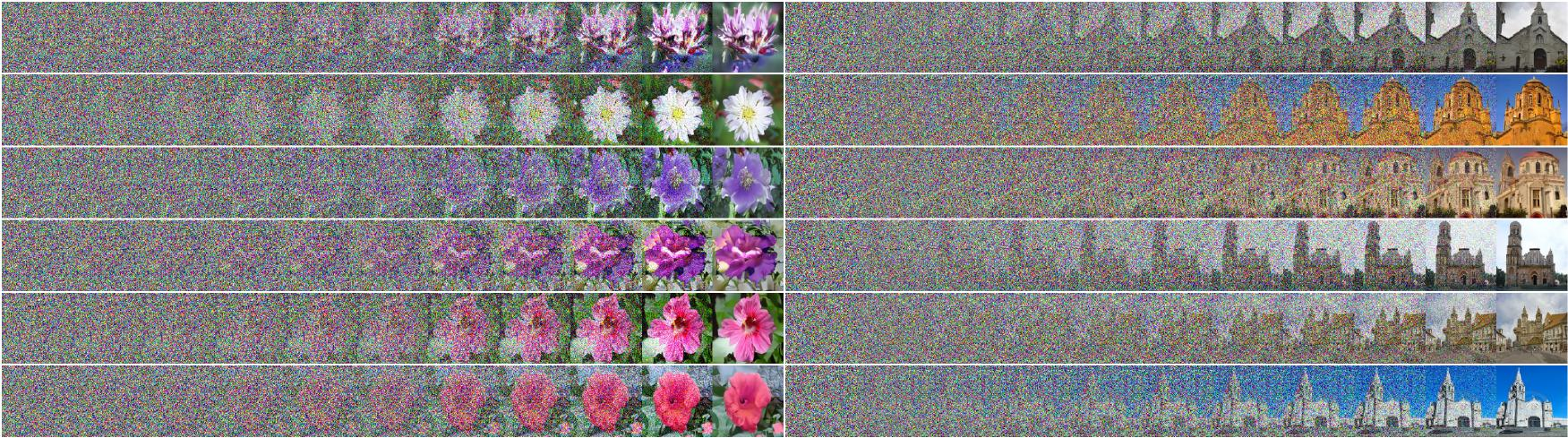}
    \caption{Noise-to-image trajectories by \lfm{}: Flowers (left) and LSUN Church (right).}
    \label{fig:image_gen_traj}
\end{figure*}

\textbf{$128\times 128$ images.} 
We apply \lfm{} model to the Oxford Flowers  \cite{Nilsback2008flower} 
and LSUN Church  \cite{yu2015lsun} datasets. 
The generated images are shown in Figure \ref{fig:image_gen},
and noise-to-image trajectories are shown in Figure \ref{fig:image_gen_traj}.
We compare with InterFlow/FM on the Flowers data,
where we first train \lfm{} and InterFlow{/FM} to achieve the same FIDs on the test set and the global flows require 1.25-1.5 times more training batches to reach the same performance.
Table \ref{tab:fid} shows lower FID  by \lfm{} using the same number of training batches.
To compare model performance after distillation, 
we distill \lfm{} using Algorithm \ref{algo:lfm_distill}, 
and the distill of InterFlow/FM follows the method in $K$-rectified flow  \cite{liu2023flow}. 
Table \ref{tab:distillation} shows the comparison with InterFlow (when pre-distilled  \lfm{} uses  Trig interpolant),
and Table \ref{tab:reflow} compares with $K$-rectified flow (when pre-distilled \lfm{} uses OT interpolant),
and in both tables \lfm{} achieves lower FID. 
Figure \ref{fig:flower_distill} highlights high-fidelity images generated by \lfm{} after distillation.

\begin{figure*}
    \centering
    \begin{minipage}{\textwidth}
        \includegraphics[width=\linewidth]{ 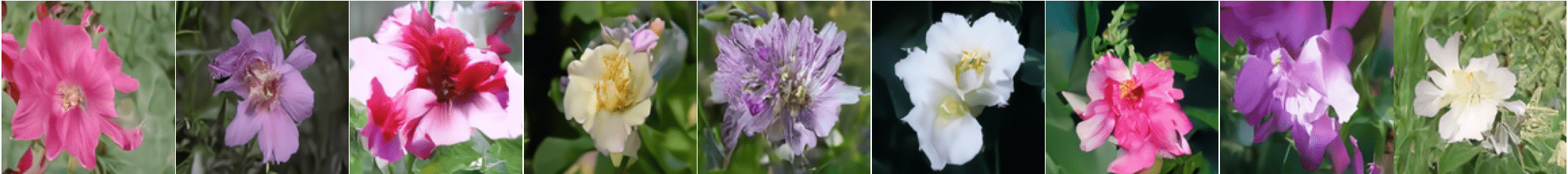}
        \subcaption{\lfm{} distillation @ 4 NFEs. FID = 71.0.} 
    \end{minipage}
    \hspace{5pt}
    \begin{minipage}{\textwidth}
        \includegraphics[width=\linewidth]{ 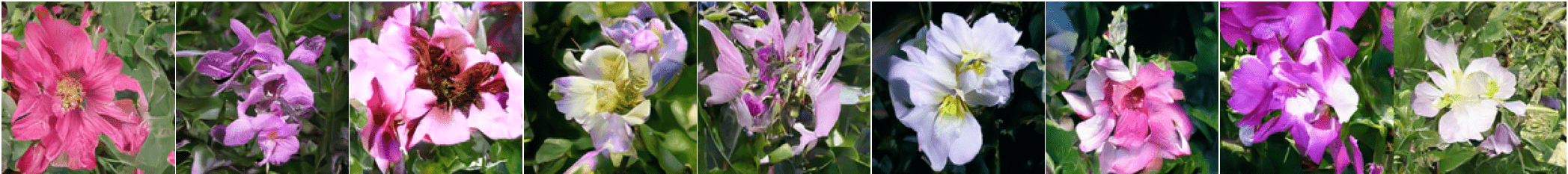}
        \subcaption{InterFlow distillation @ 4 NFEs. FID = 80.0.}
    \end{minipage}
    \caption{
    Qualitative comparison of \lfm{} and InterFlow after distillation.}
    \label{fig:flower_distill}
\end{figure*}

\subsection{Robotic manipulation policy learning}\label{sec:robotics}

We consider robotic manipulation tasks from the Robomimic benchmark  \cite{robomimic2021}, which includes 5 tasks involving the control of robot arms to perform various pick-and-place operations (Figure \ref{fig:robomimic}). For example, the robot may need to pick up a square object (Figure \ref{fig:robomimic_lift}) or move a soda can from one bin to another (Figure \ref{fig:robomimic_can}).
Recent generative models that output robotic actions conditioning on the state observations (e.g., robot positions or camera image embedding) have achieved the state-of-the-art performance on completing these tasks  \cite{chi2023diffusionpolicy}.
However, transferring pre-trained diffusion or flow models (on natural images) to the conditional generation task is challenging because the state observations are task-specific and contain nuanced details about the robots, objects, and environment. The details, absent in natural images, are critical for models to understand to determine the appropriate actions. 
As a result, it is often necessary to train a generative model from scratch to directly learn the relationship between task-specific state observations and actions.

We train a (global) FM model and the proposed \lfm{} from scratch, ensuring that the total number of parameters is identical for both methods. 
Additional experimental details can be found in Appendix \ref{sec:robomimic}.
As shown in Table \ref{tab:robomimic}, \lfm{} demonstrates competitive performance against FM in terms of success rate, with faster convergence in some cases (evidenced by higher success rates at early epochs). Consistent with findings in  \cite{chi2023diffusionpolicy}, the performance on the ``Toolhang'' task does not improve with extended training.

\begin{table}
\caption{
Success rate 
(defined in \eqref{eq:succ_rate}, higher is better) 
of FM and \lfm{} for robotic manipulation on Robomimic  \cite{robomimic2021}. Both methods are evaluated over 100 rollouts, and success rates are reported at different epochs in the format of (Success rate, epochs).
}\label{tab:robomimic}
\centering
\setstretch{1.2}
\resizebox{.65\linewidth}{!}{
\begin{tabular}{lccccc}
\toprule
& Lift & Can & Square & Transport & Toolhang \\
\hline 
FM & (1.00, 200)  & \makecell[l]{(0.94, 200) \\ (0.98, 500) } & \makecell[l]{(\textbf{0.88}, 200) \\ (\textbf{0.94}, 750) } & \makecell[l]{(0.60, 200) \\ (0.81, 1500) } & (0.52, 200)\\
\hline
\lfm{} & (1.00, 200)  & \makecell[l]{(\textbf{0.97}, 200) \\ (\textbf{0.99}, 500) } & \makecell[l]{(0.87, 200) \\ (0.93, 750) } & \makecell[l]{(\textbf{0.75}, 200) \\ (\textbf{0.88}, 1500)} & (\textbf{0.53}, 200)\\
\bottomrule
\end{tabular}
}\vspace{-10pt}
\end{table}

\section{Discussion}\label{sec:discuss}

There are several directions for future work. 
First, we currently assume that $T_n^{-1}$ is exact in the reverse process (generation). The analysis can be extended to incorporate the inversion error due to numerical computation in practice, possibly by following the strategy in   \cite{cheng2024convergence}.
It would also be of interest to relax Assumption \ref{assump:A1-A2-A3}, which imposes regularity conditions on the intermediate densities, including Gaussian-type tail bounds, regularity of the score, and integrability conditions on density ratios. To the best of our knowledge, assumptions of this kind are standard in the analysis of diffusion processes, Wasserstein gradient flows, and stability results for continuity equations, particularly when establishing convergence in KL or $\chi^2$ divergence rather than weaker metrics such as Wasserstein distance. 
In this work, we introduce Assumption \ref{assump:A1-A2-A3} to enable $\chi^2$ convergence (and hence in KL and TV),  but one may obtain sharper bounds by analyzing KL or TV directly and under weaker regularity assumptions.

Meanwhile, the proposed methodology can be further improved.
While the training of each block is based on flow matching, the main additional cost of \lfm{} compared to standard flow matching arises from the pushforward operation, which involves numerically integrating a trained flow model once per block.
In practice, this cost is modest for moderate-sized datasets and is incurred only once per block; it does not scale with the number of training iterations.
For example, on CIFAR-10, pushing the dataset forward requires only a few minutes per block, whereas on larger datasets such as ImageNet-32, we found that the pushforward operations account for approximately \(15\%\) of the total training time in our experiments.
We therefore view the sequential nature of the algorithm and the associated pushforward cost as practical limitations of the current approach, and believe they are worthwhile directions for future improvement.
A related question concerns model parameterization. In the current formulation, we learn the flow-matching blocks independently as \(\hat v(x,t;\theta_n)\). An interesting extension would be to introduce parameter sharing across blocks, which could impose additional temporal continuity in the flow model and potentially alleviate the need for sequential training.
Finally, it would be of interest to apply the proposed method to a broader range of applications, including more complex high-dimensional generative modeling tasks and learning-based control problems.

 \appendix

\setcounter{table}{0}
\setcounter{figure}{0}
\setcounter{algorithm}{0}
\renewcommand{\thetable}{A.\arabic{table}}
\renewcommand{\thefigure}{A.\arabic{figure}}
\renewcommand{\thealgorithm}{A.\arabic{algorithm}}
\renewcommand{\thelemma}{A.\arabic{lemma}}

\setcounter{assumption}{0} \renewcommand{\theassumption}{A.\arabic{assumption}}

\section{Proof of valid flow with dependent sampling}\label{app:proof-valid-v}

We introduce the following assumptions, essentially following \cite{albergo2023building}:

\begin{assumption}\label{assump:technical-rho01-It}
The two endpoints $(x_l, x_r) \sim \rho_{0,1} $ the joint density, where
\begin{itemize}
\item[(i)]  $\rho_{0,1}(x_0, x_1)$ is continuously differentiable,
with two marginals $p$ and $q$, i.e., 
$\int \rho_{0,1}(x_0, x_1) dx_1 = p(x_0)$
and 
$\int \rho_{0,1}(x_0, x_1) dx_0 = q(x_1)$.

\item[(ii)]
$I_t( x_l, x_r)$ is continuously differentiable in $(t, x_l, x_r)$, satisfies \eqref{eq:It-endpoints-condition} and 
$
\int_0^1 \E_{x_l, x_r} \| \partial_t I_t( x_l, x_r) \|^2 dt  < \infty.
$

\item[(iii)] For all $t \in [0,1]$,
\begin{align*}
& \int_{\R^d} \big|   \int_{\R^d \times \R^d} e^{i \xi \cdot I_t(x_0, x_1)} \rho_{0,1}(x_0, x_1) dx_0 dx_1  \big| d\xi < \infty, \\
& \int_{\R^d} \big|   \int_{\R^d \times \R^d}   \partial_t I_t(x_0, x_1) e^{i \xi \cdot I_t(x_0, x_1)} \rho_{0,1}(x_0, x_1)  
dx_0 dx_1  \big| d\xi < \infty.
\end{align*}

\end{itemize}

\end{assumption}

 \begin{proof}[Proof of Lemma \ref{lemma:FM_loss}]

     We write $v(\cdot, t)$ as $v_t(\cdot)$ and $\rho(\cdot, t)$ as $\rho_t(\cdot)$.
 We will explicitly construct $\rho_t$ and $v_t$, 
 and then show that
 (i) $\rho_t$ indeed solves the CE induced by $v_t$, 
 and (ii) $v_t$ is valid.

 Let  $\rho_t(x)$ be the concentration of the interpolant points $I_t(x_0, x_1)$ over all possible realizations of the two endpoints. 
 That is, using a formal expression with the Dirac measure $\delta$, we define
 \[
 \rho_t(x) : = \int_{\R^d} \int_{\R^d}  \delta( x - I_t(x_0, x_1)) \rho_{0,1}(x_0, x_1) dx_0 dx_1.
 \]
 Below, we will use the Dirac measure in more formal derivations, e.g., to express the probability current $j_t(x)$.
By the argument in \cite[Lemma B.1]{albergo2023building},
the integrability conditions in Assumption \ref{assump:technical-rho01-It}(iii) ensure that these formal derivations are mathematically meaningful as a result of the Fourier inversion theorem. 
This construction also means that $\rho_t$ is the marginal density of $x_t$ as stated in the lemma.

By \eqref{eq:It-endpoints-condition}, $I_0(x_0,x_1) = x_0$ and $I_1(x_0,x_1) = x_1$, 
and then the definition of $\rho_t$ gives that
\[
 \rho_0(x) = \int \rho_{0,1}(x, x_1) dx_1, 
 \quad 
 \rho_1(x) = \int \rho_{0,1}(x_0, x) dx_0.
 \]
Under Assumption \ref{assump:technical-rho01-It}(i), we know that 
  \[
 \rho_0 =  p, 
 \quad 
 \rho_1  =  q.
 \]
Meanwhile, taking $\partial_t$ of $\rho_t$ gives that 
 \begin{align}
 \partial_t \rho_t (x) 
& = 
 - \int_{\R^d} \int_{\R^d} \partial_t I_t(x_0, x_1) \cdot \nabla \delta( x - I_t(x_0, x_1)) 
 \rho_{0,1}(x_0, x_1) dx_0 dx_1  \nonumber \\
&  = -\nabla \cdot j_t(x), \label{eq:CE-jt-proof1}
 \end{align}
 where
 \[
 \begin{split}
 j_t(x)
 & : =  \int_{\R^d} \int_{\R^d} \partial_t I_t(x_0, x_1) \delta( x - I_t(x_0, x_1)) 
  \rho_{0,1}(x_0, x_1) dx_0 dx_1.
  \end{split}
 \]
 We now define $v_t$ to be such that 
 \[
 v_t(x) \rho_t(x) = j_t(x),
 \]
 this can be done by setting $v_t(x) = j_t(x)/\rho_t(x)$ if $\rho_t(x) > 0$ and zero otherwise. Then, \eqref{eq:CE-jt-proof1} directly gives that $\partial_t \rho_t = - \nabla \cdot ( \rho_t v_t  )$ which is the CE. This means that $v_t$ is a valid velocity field.

 To prove the lemma, it remains to show that the loss \eqref{eq:fm_loss} can be equivalently written as \eqref{eq:loss-FM-2}. 
 First,  the condition 
 $$
\int_0^1 \E_{x_l, x_r} \| \partial_t I_t( x_l, x_r) \|^2 dt  < \infty
$$
in Assumption \ref{assump:technical-rho01-It}(ii) implies that 
\begin{equation}\label{eq:int-condition-proof}
\begin{split}
& \int_0^1 \int \| v(x,t)\|^2 \rho_t(x) dx dt 
\le \int_0^1 \E_{x_l, x_r} \| \partial_t I_t( x_l, x_r) \|^2 dt  < \infty,
\end{split}
\end{equation}
following the same argument as in  \cite[Lemma B.2]{albergo2023building}.
\eqref{eq:int-condition-proof} ensures that the r.h.s. of \eqref{eq:loss-FM-2}  is well-defined.

 To see the equivalence between  \eqref{eq:fm_loss} and \eqref{eq:loss-FM-2}, 
 note that \eqref{eq:fm_loss}  can be written as 
 \begin{equation}\label{eq:loss-FM-1}
 \begin{split}
& L(\hat v) =  \int_0^1 l( \hat v , t ) dt, \\
& l( \hat v , t ) := \E_{x_0, x_1} \| \hat v_t ( I_t (x_0, x_1)) - \partial_t I_t( x_0, x_1) \|^2.
\end{split}
\end{equation}
For a fixed $t$, 
 \begin{align*}
 & l( \hat v , t ) 
  =  \int_{\R^d} \int_{\R^d} \| \hat v_t ( I_t (x_0, x_1)) - \partial_t I_t( x_0, x_1) \|^2 
\rho_{0,1}(x_0, x_1) dx_0 dx_1  \\ 
 & = \int_{\R^d} \int_{\R^d} \int_{\R^d} \| \hat v_t (x) - \partial_t I_t( x_0, x_1) \|^2 
 	\delta( x - I_t(x_0, x_1))  
\rho_{0,1}(x_0, x_1) dx_0 dx_1 dx \\
 & = 
 c_1(t) + \int_{\R^d} \int_{\R^d} \int_{\R^d} 
 ( \| \hat v_t (x)\|^2 - 2 \hat v_t (x) \cdot \partial_t I_t( x_0, x_1)  ) \\
&~~~~~~~~~~~~~~~~~~	 
\delta( x - I_t(x_0, x_1))  \rho_{0,1}(x_0, x_1) dx_0 dx_1 dx,
 \end{align*}
 where
 \[
 c_1(t):= \int_{\R^d} \int_{\R^d} 
  \|  \partial_t I_t( x_0, x_1)  \|^2  
  \rho_{0,1}(x_0, x_1) dx_0 dx_1,
 \]
 and $c_1(t)$ is independent from $\hat v$.
 We continue the derivation as
 \begin{align*}
 & l( \hat v , t )  - c_1(t) \\
 & = 
 \int_{\R^d}
 \| \hat v_t (x)\|^2
  \int_{\R^d} \int_{\R^d} 
 \delta( x - I_t(x_0, x_1))  
\rho_{0,1}(x_0, x_1) dx_0 dx_1 dx  \\
&~~~~
 - 2 \int_{\R^d} 
  \hat v_t (x) \cdot  
 \int_{\R^d} \int_{\R^d}  \partial_t I_t( x_0, x_1)  
\delta( x - I_t(x_0, x_1))  \rho_{0,1}(x_0, x_1) dx_0 dx_1 dx \\
 & = \int_{\R^d}
         \| \hat v_t (x)\|^2 \rho_t(x) dx
         - 2 \int_{\R^d} 
         \hat v_t (x) \cdot j_t(x) dx \\
 & =  \int_{\R^d}
         ( \| \hat v_t (x)\|^2 
         - 2 \hat v_t (x) \cdot v_t(x) ) \rho_t(x) dx \\
& =   \int_{\R^d}
         \| \hat v_t (x) -  v_t (x)\|^2 \rho_t(x) dx 
    - \int_{\R^d} \| v_t (x)\|^2   \rho_t(x) dx,        
  \end{align*}
and then, by defining \[
c_2(t): = \int_{\R^d} \| v_t (x)\|^2   \rho_t(x) dx,
\] 
which is again  independent from $\hat v$, we have
  \[
    l( \hat v , t )  = \int_{\R^d}  \| \hat v_t (x) -  v_t (x)\|^2 \rho_t(x) dx  +  c_1(t)  - c_2(t).
  \]
  Putting back to \eqref{eq:loss-FM-1} we have proved \eqref{eq:loss-FM-2} with the constant 
  $
  c =  \int_0^1 c_1(t) dt - \int_0^1  c_2(t) dt$,
  where the finiteness of the integrals is guaranteed by \eqref{eq:int-condition-proof}.
 \end{proof}

\begin{algorithm}
\caption{Distillation of \lfm{}}
\label{algo:lfm_distill}
\begin{algorithmic}[1]
\REQUIRE 
Samples $\sim q_n$, $n=0,\cdots, N$, generated by a pre-trained $N$-block \lfm{} 
\ENSURE{$N' = N/k$ distilled sub-models $\{T^D_n(\cdot)\}_{n=1}^{N'}$.}
\FOR{$n=1,\ldots,N'$}
\STATE Train $f(x; \theta^D_n)$ via $\min_{\theta_n^D} \mathbb{E}_{(x_n, x_{n-1})\sim (q_{N-kn},q_{N-k(n-1)})}$ 
 	$\|(x_n-x_{n-1})-f(x_{n-1}; \theta^D_n)\|^2$.
\STATE Output $T_n^D(x)=x+f(x; \theta^D_n)$.
\ENDFOR
\end{algorithmic}
\end{algorithm}

\section{Experimental details and additional results}\label{app:exp}

To train \lfm{} sub-flows, we use the Trig interpolant on 2D, tabular, and image generation experiments (Sections \ref{sec:2d_toy}--\ref{sec:img_gen}) and use the OT interpolant on robotic manipulation (Section \ref{sec:robotics}).
{During inference per block, we employ the Dormand-Prince-Shampine ODE sampler with tolerances of 1e-5 for 2d and tabular data, 1e-4 for 32x32 images, and 1e-3 for 128x128 images. We use 1 Euler step on the robotic manipulation experiments, as prior works have shown the inference efficiency of FM on such tasks  \cite{hu2024adaflow}.}

\subsection{Baseline descriptions}\label{app:baselines}

We selected the following baselines for comparison with the proposed \lfm{}, categorized into two groups: flow-based methods and non-flow-based methods.

\paragraph{\it Flow-based baselines}
{\bf Flow Matching:} Since \lfm{} is closely related to flow matching, we compare it against global flow matching methods that train a single large model. 
Specifically, we include 
FM  \cite{lipman2023flow} which uses the OT interpolant; 
InterFlow  \cite{albergo2023building} which uses the Trig interpolant; 
and ($K$-)Rectified Flow  \cite{liu2023flow} for distillation,
where 1-rectified Flow is the same as FM.
{\bf Normalizing Flow:} methods, such as flow matching, are also based on ODE formulations but train the velocity field via maximizing model likelihood rather than minimizing $\ell_2$ loss. We compare against a range of approaches in this category, including JKO-iFlow  \cite{xu2023normalizing}, OT-Flow  \cite{onken2021ot}, CPF  \cite{huang2021convex}, BNAF  \cite{de2020block}, and FFJORD  \cite{grathwohl2018ffjord}.

\paragraph{\it Non-Flow-based baselines}
   {\bf Diffusion Models:} Diffusion-based approaches have demonstrated strong generative performance across diverse tasks. Unlike flow models, which rely on ODE formulations and learn velocity fields, diffusion models are based on SDEs and learn score networks. We compare against ScoreSDE  \cite{song2021scorebased}, a widely adopted continuous-time framework, following the implementation in  \cite{huang2021variational}.
     To ensure a comprehensive comparison, we also include three deep learning-based density estimation methods:
    {\bf Roundtrip}
     \cite{liu2021density}, which leverages GANs to generate samples and estimate data density using importance sampling or Laplace approximation;
        {\bf AdaCat}  \cite{li2022adacat}, which employs adaptive discretization for autoregressive models to estimate complex continuous distributions;
        {\bf nMDMA}  \cite{gilboa2021marginalizable}, which combines learned scalar representations of individual variables through hierarchical tensor decomposition to estimate densities.

\subsection{2D, Tabular, and Image experiments}

In all experiments, we use the Adam optimizer
with the following parameters: $\beta_1=0.9, \beta_2=0.999, \epsilon=1e-8$; the learning rate is to be specified in each experiment.
Additionally, the number of training batches indicates how many batches pass through all $N$ sub-flows per Adam update.

On two-dimensional datasets and tabular datasets, we parameterize local sub-flows with fully connected networks; the dataset details and hyperparameters of \lfm{} are in Table \ref{tab:hyperparam_tabular}.

On image generation examples, we parameterize local sub-flows as UNets  \cite{nichol2021improved}, where the dataset details and training specifics are provided in Table \ref{tab:hyperparam_image}.
We adopt the Unet architecture in the OpenAI guided diffusion package \url{https://github.com/openai/guided-diffusion}, and the specific setup follows the InterFlow \cite{albergo2023building} and Flow Matching \cite{lipman2023flow} papers. 
For example, for Imagenet32, our Unet architecture follows that in Table 3 in \cite{lipman2023flow}, except that we halve the hidden dimension of each Unet to be 128 instead of 256. Because we use 4 blocks, our overall number of parameters is 196M, which is comparable to that of the (1-block) Unet therein.

Due to nondeterminism arising from random seeding, reduced-precision arithmetic (TF32), and cuDNN kernel selection, reported FID scores may vary by approximately $\pm 1$ across runs and hardware. In addition, variation can arise from differences in the FID computation pipeline. In this work, we use the \texttt{clean-fid} package \cite{parmar2021cleanfid}. For image experiments, we report a single-run FID obtained under our training setup.

\begin{figure}[!t]
    \centering
    \begin{minipage}{0.48\textwidth}
        \includegraphics[width=\linewidth]{ 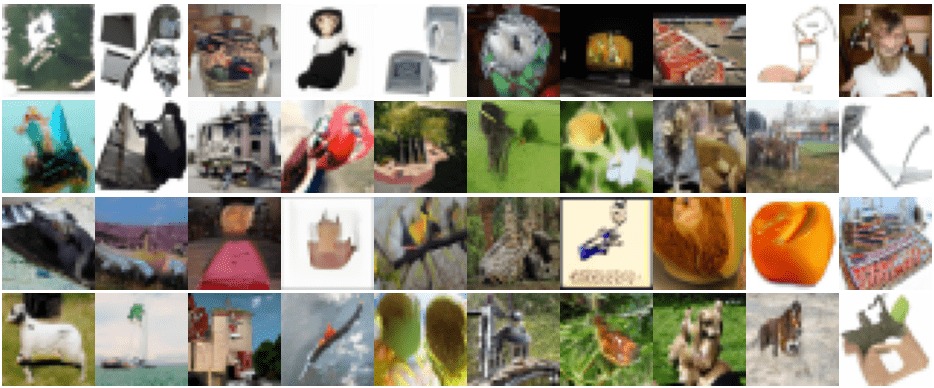}
        \subcaption{\lfm{} before distillation.} 
    \end{minipage}
    \hspace{5pt}
    \begin{minipage}{0.48\textwidth}
        \includegraphics[width=\linewidth]{ 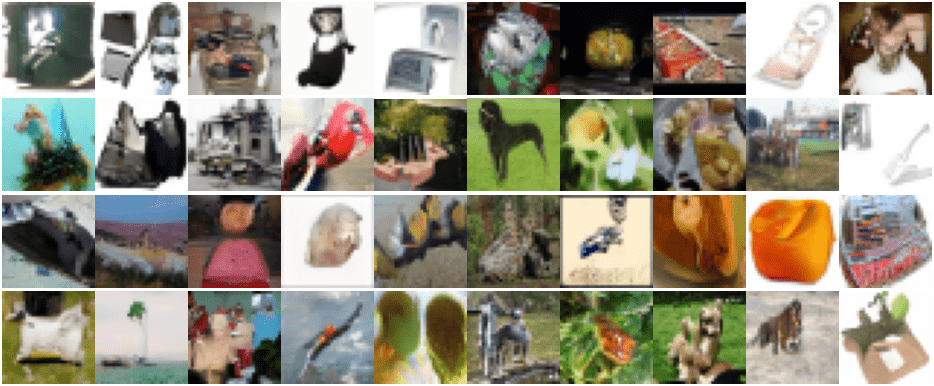}
        \subcaption{\lfm{} after distillation.}
    \end{minipage}
    \caption{Unconditional image generation on Imagenet-32 before and after distillation. We distill \lfm{} into a 5-NFE model.}

    \label{fig:image_distill}
\end{figure}
\begin{table}[!t]
\centering
\caption{
FID comparison of \lfm{} (1st column) 
and $K$-Rectified Flow  \cite{liu2023flow} (2nd-4th columns)
before and after distillation on Flowers $128 \times 128$. 
1-Rectified Flow is the same as FM \cite{lipman2023flow}.
See the first row for pre-distillation FIDs and the second row for FIDs after distillation.
The pre-distillation FID of $K$-Rectified Flow worsens with increasing $K$, as the training data is generated by $K-1$ many Rectified Flow.}\label{tab:reflow}
\setstretch{1.4}
\resizebox{.65\linewidth}{!}{
\begin{tabular}{lcccc}
\hline & \lfm{} & 1-Rectified Flow & 2-Rectified Flow & 3-Rectified Flow \\
\hline
Pre-distillation & 55.7 & 55.7 & 62.3 & 65.4 \\
Distilled@4NFEs & \textbf{76.9} & 84.6 & 82.8 & 82.7 \\
\hline
\end{tabular}}
\end{table}

\begin{figure*}
    \begin{minipage}{0.19\textwidth}
    \centering
        \includegraphics[width=0.75\linewidth]{ 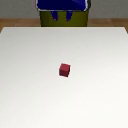}
        \includegraphics[width=0.75\linewidth]{ 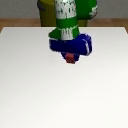}
        \subcaption{Lift}
        \label{fig:robomimic_lift}
    \end{minipage}
    \begin{minipage}{0.19\textwidth}
    \centering
        \includegraphics[width=0.75\linewidth]{ 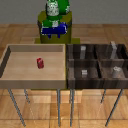}
        \includegraphics[width=0.75\linewidth]{ 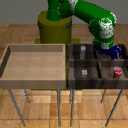}
        \subcaption{Can}
        \label{fig:robomimic_can}
    \end{minipage}
    \begin{minipage}{0.19\textwidth}
    \centering
        \includegraphics[width=0.75\linewidth]{ 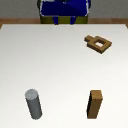}
        \includegraphics[width=0.75\linewidth]{ 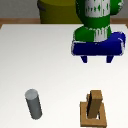}
        \subcaption{Square}
    \end{minipage}
    \begin{minipage}{0.19\textwidth}
    \centering
        \includegraphics[width=0.75\linewidth]{ 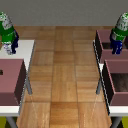}
        \includegraphics[width=0.75\linewidth]{ 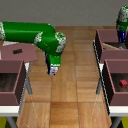}
        \subcaption{Transport}
    \end{minipage}
    \begin{minipage}{0.19\textwidth}
    \centering
        \includegraphics[width=0.75\linewidth]{ 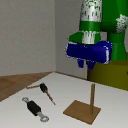}
        \includegraphics[width=0.75\linewidth]{ 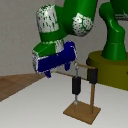}
        \subcaption{Toolhang}
    \end{minipage}

    \caption{Robotic manipulation on Robomimic  \cite{robomimic2021}. \textbf{Top row}: Initial conditions (IC). \textbf{Bottom row}: Successful completions. Each task starts from an IC and manipulates the robot arms sequentially to reach successful completion.}
    \label{fig:robomimic}

\end{figure*}

\subsection{Robotic manipulation policy learning}\label{sec:robomimic}

In the context of generative modeling, this task of robotic manipulation can be understood as performing sequential conditional generation. Specifically, at each time step $t\geq 1$, the goal is to model the conditional distribution $A_t|O_t$, where $O_t \in \R^O$ denotes the states of the robots at time $t$ and $A_t \in \R^A$ is the action that controls the robots. During inference, the robot is controlled as we iteratively sample from $A_t|O_t$ across time steps $t$.
Past works leveraging diffusion models have reached state-of-the-art performances on this task  \cite{chi2023diffusionpolicy}, where a neural network $v_{\theta}$ (e.g., CNN-based UNet  \cite{janner2022planning}) is trained to approximate the distribution $A_t|O_t$ via DDPM  \cite{ho2020denoising}. More recently, flow-based methods have also demonstrated competitive performance with faster inference  \cite{hu2024adaflow}.

We use the widely adopted \textit{success rate} to examine the performance of a robot manipulator:
\begin{equation}\label{eq:succ_rate}
    \text{Success rate}=\frac{\# \text{success rollouts}}{\# \text{rollouts}} \in [0,1].
\end{equation}
Specifically, starting from a given initial condition $O_1$ of the robot, each rollout denotes a trajectory $S=\{O_1, A_1, O_2, \ldots, A_T, O_T\}$ where $A_t|O_t$ is modeled by the generative. The rollout $S$ is a success if, at any $t \in {1,\ldots,T}$, the robotic state $O_t$ meets the success criterion (e.g., successfully pick up the square as in the task ``lift'' in Figure \ref{fig:robomimic_lift}).

\begin{table*}[!t]
\centering
\resizebox{\textwidth}{!}{
\begin{tabular}{lcccccc}
\hline & Rose & Fractal tree & POWER & GAS & MINIBOONE & BSDS300   \\
\hline Dimension & 2 & 2 & 6 & 8 & 43 & 63 \\
\# Training point & 2,000,000 & 2,000,000 &
{1,659,917} & 852,174 & 29,556 & 1,000,000 \\
\hline Batch Size & 10K & 10K & 30K & 50K & 1000 & {1000}  \\
Training Batches & 50K & 50K & 100K & 100K & 100K  & 30K \\
\makecell[l]{Hidden layer width \\ (per sub-flow)} & 256 & 256 & 256 & 362 & 362 & 512 \\
$\#$ Hidden layers & 3 & 3 & 4 & 5 & 4 & 4 \\
Activation & Softplus & Softplus & ReLU & ReLU & ReLU & ELU \\
$\#$ sub-flows $N$ & 9 & 9 & 4 & 2 & 2 & 4 \\
$(c,\rho)$
& (0.025, 1.25) & (0.025, 1.25) & (0.15, 1.3) & (0.05, 1) & (0.35, 1) & (0.25, 1) \\
\makecell[l]{Total $\#$ parameters in M\\(all sub-flows)} & 1.20 & 1.20 & 0.81 & 1.06 & 0.85 & 3.41 \\
Learning Rate (LR) & $0.0002$ & $0.0002$ & $0.005$ & $0.002$ & $0.005$ & $0.002$ \\
\makecell[l]{LR decay \\ (factor, frequency in batches)} & (0.99, 1000) & (0.99, 1000) & (0.99, 1000) & (0.99, 1000) & (0.9, 4000) & (0.8, 4000) \\
Beta $\alpha,\beta$, time samples & (1.0, 1.0)& (1.0, 1.0) & (1.0, 1.0) & (1.0, 0.5) & (1.0, 1.0) & (1.0, 1.0)  \\
\hline
\end{tabular}}
\caption{Hyperparameters and architecture for two-dimensional datasets and tabular datasets. The table is formatted similarly as  \cite[Table 3]{albergo2023building}.}
\label{tab:hyperparam_tabular}
\end{table*}

\begin{table*}
\centering
\resizebox{0.9\textwidth}{!}{
\begin{tabular}{lcccc}
\hline & CIFAR-10 & Imagenet-32 & Flowers & LSUN Churches   \\
\hline Dimension & 32$\times$32 & 32$\times$32 & 128$\times$128 & 128$\times$128 \\

\# Training point & 50,000 & 1,281,167 & 8,189 & 122,227 \\
\hline Batch Size 	& {256} & 256 & 40 & 40  \\
Training Batches 	& 50k 	& 200k & 40k  & 120k  \\
Hidden dim (per sub-flow) & 128 & 
 {128}  & {64} & 128 \\
$\#$ sub-flows $N$ & 4 &  {4}  & 4 & 4 \\\
$(c,\rho)$ 
&  {(0.1, 1.5)} & (0.3, 1.1) & (0.5, 1.5) & (0.4, 1.5) \\
Total $\#$ parameters in M (all sub-flows) & {143} &  {196} & {86} & 464  \\
Learning Rate (LR) & $0.0001$ & $0.0001$ & $0.0002$ & $0.0002$ \\
U-Net dim mult & [1,2,2,2,2] & [1,2,2,2] & [1,1,2,3,4] & [1,1,2,3,4]   \\
Beta $\alpha,\beta$, time samples & (1.0, 1.0) & (1.0, 1.0) & (1.0, 1.0) & (1.0, 1.0)   \\
Learned $t$ sinusoidal embedding & Yes & Yes & Yes & Yes\\
$\#$ GPUs & 1 & 1 & 1 & 1 \\
\hline
\end{tabular}}
\caption{Hyperparameters and architecture for image datasets. The table is formatted similarly as  \cite[Table 4]{albergo2023building}.}
\label{tab:hyperparam_image}
\end{table*}

\begin{table*}
\centering
\resizebox{0.9\textwidth}{!}{
\begin{tabular}{lccccc}
\hline & Lift & Can & Square & Transport & Toolhang   \\
\hline Batch Size & 256 & 256 & 256 & 256 & 256 \\
Training Epochs & 200 & 500 & 750 & 1500 & 200 \\
\makecell[l]{Hidden dims \\ (per sub-flow)} & [128,256,512] & [128,256,512] & [128,256,512] & [176,352,704] & [128,256,512] \\
$\#$ sub-flows $N$ & 4 & 4 & 4 & 2 & 4 \\\
$(c,\rho)$ 
& (0.15, 1.25) & (0.2, 1.25) & (0.2, 1) & (0.5, 1) & (0.25, 1.25) \\
\makecell[l]{Total $\#$ parameters in M \\ (all sub-flows)} & 66 & 66 & 66 & 67 & 67 \\
Learning Rate (LR) & $0.0001$ & $0.0001$ & $0.0001$ & $0.0001$ & $0.0001$ \\
$\#$ GPUs & 1 & 1 & 1 & 1 & 1\\
\hline
\end{tabular}}
\caption{Hyperparameters and architecture for robotic manipulation under a state-based environment on Robomimic  \cite{robomimic2021}.}
\label{tab:hyperparam_robot}
\end{table*}

We also describe details of each of the 5 Robomimic tasks below, including dimensions of observations $O_t$ and actions $A_t$, and the success criteria. Figure \ref{fig:robomimic} shows the initial condition and successful completion. Table \ref{tab:hyperparam_robot} contains the hyperparameter setting in each task, where we use the same network and training procedure as in  \cite{chi2023diffusionpolicy}.

\vspace{5pt}

\noindent
\textbf{Lift:} The goal is for the robot arm to lift a small cube in red. Each $O_t$ has dimension $16\times 19$, and each $A_t$ has dimension $16 \times 10$, representing state-action information for the next 16 time steps starting at $t$.

\vspace{5pt}

\noindent
\textbf{Can:} The robot aims to pick up a Coke can from a large bin and place it into a smaller target bin. Each $O_t$ has dimension $16\times 23$, and each $A_t$ has dimension $16 \times 10$, representing state-action information for the next 16 time steps starting at $t$.

\vspace{5pt}

\noindent
\textbf{Square:} The goal is for the robot to precisely pick up a square nut and place it onto a rod. Each $O_t$ has dimension $16\times 23$, and each $A_t$ has dimension $16 \times 10$, representing state-action information for the next 16 time steps starting at $t$.

\vspace{5pt}

\noindent
\textbf{Transport:} The goal is for the two robot arms to transfer a hammer from a closed container on one shelf to a target bin on another shelf. Before placing the hammer, one arm has to also clear the target bin by moving away a piece of trash to the nearby receptacle. The hammer must be picked up by one arm, and then handed over to the other. Each $O_t$ has dimension $16\times 59$, and each $A_t$ has dimension $16 \times 20$, representing state-action information for the next 16 time steps starting at $t$.

\vspace{5pt}

\noindent
\textbf{Toolhang:} The robot aims to assemble a frame that includes a base piece and a hook piece by inserting the hook into the base. The robot must then hang a wrench on the hook. Each $O_t$ has dimension $16\times 53$, and each $A_t$ has dimension $16 \times 10$, representing state-action information for the next 16 time steps starting at $t$.

\begin{table*}[t]
\centering
\begin{minipage}{0.45 \linewidth}
\centering
\begin{tabular}{   l r}
\hline
\lfm{} schedule &   \\
\hline
exponential (0.25,1) & -158.298 \\
exponential (0.5, 1) & -158.554 \\
exponential (0.15, 1.3) & -158.677 \\
diffusion linear & -157.080 \\
\hline
\end{tabular}
 \subcaption{(a) $N=4$}
\end{minipage}
\begin{minipage}{0.45 \linewidth}
\centering
\begin{tabular}{c c c}
\hline
$N$ & diffusion linear &  diffusion cosine \\
\hline
4 & -158.045 & -159.479\\ 
5 &  -157.3401 & -159.455\\
6 &  -157.4023 &  -159.191\\
8 &  -157.899 & -158.632\\
\hline
\end{tabular}
 \subcaption{(b) Varying $N$ using diffusion schedules}
\end{minipage}
\caption{
Test NLL (lower is better) on the BSDS300 dataset with varying schedules.}
\label{tab:ablation-bsds}
\end{table*}

\subsection{Ablation on discrete-time schedules}\label{app:schedule}

We study the effect of the discrete time discretization on model performance by varying both the number of blocks \(N\) and the block step sizes \(\{\gamma_n\}_n\).
Recall that in our formulation each block corresponds to an Ornstein-Uhlenbeck forward corruption with step size \(\gamma_n\), i.e., 
the target density $p_n^* = (\text{OU})_0^{\gamma_n} p_{n-1}$,
and the final block is followed by setting $p_N^* = \calN(0,I)$, thus our $\gamma_N = \infty$.
We consider three types of schedules:

\begin{itemize}
\item
Exponential schedule.
Our default choice is an exponential schedule
\[
\gamma_n = c\,\rho^{\,n-1}, \qquad n=1,\dots,N,
\]
with constants \(c>0\) and \(\rho \ge 1\).
This schedule allocates smaller steps at early blocks and increasingly larger steps at later blocks, resulting in a rapid decay of the signal-to-noise ratio (SNR) near the terminal blocks.
This includes the constant schedule by setting $\rho =1$.

\item
Cosine diffusion schedule.
We adapt the cosine noise schedule from \cite{nichol2021improved}, originally proposed for diffusion models, to our discrete-block setting.
Let
\[
\bar{\alpha}_n
= \frac{\cos^2\!\bigl(\tfrac{(n/N+s)\pi}{2(1+s)}\bigr)}
       {\cos^2\!\bigl(\tfrac{s\pi}{2(1+s)}\bigr)}, \qquad n=0,\dots,N,
\]
with offset \(s=0.008\), and define \(\alpha_n = \bar{\alpha}_n / \bar{\alpha}_{n-1}\).
We convert this to OU step sizes via \(\gamma_n = -\tfrac{1}{2}\log \alpha_n\).
This schedule concentrates most of the noise injection in later blocks while keeping early steps relatively mild.
For instance, when $N=4$,
\[
\{\gamma_n\}_{n=1}^{3} = \{0.0830,\,0.2697,\,0.6153\}, 
\]
and when $N=6$,
\[
\{\gamma_n\}_{n=1}^{5} = \{0.0374,\,0.1112,\,0.2042,\,0.3475,\,0.6590\}.
\]

\item Linear diffusion schedule.
The linear \(\beta\)-schedule commonly used in diffusion models  \cite{ho2020denoising,song2021scorebased} are primarily designed for large $N$, typically  $10^2 \sim 10^3$. 
While our $N$ is small (up to 10), we adapt the schedule here as another diffusion-inspired baseline.
Specifically, we linearly interpolate \(\beta_n \in [\beta_{\min}/N,\,\beta_{\max}/N]\), 
set \(\alpha_n = 1-\beta_n\), and again define \(\gamma_n = -\tfrac{1}{2}\log \alpha_n\).
This schedule increases noise approximately linearly in diffusion time.
We let $\beta_{max} = \min\{ 0.99N, 20\}$, and $\beta_{min} = 0.1$. When $N=4$, we have
\[
\{\gamma_n\}_{n=1}^{3} = \{0.0127,\,0.2128,\,0.5518\}, 
\]
and when $N=6$,
\[
\{\gamma_n\}_{n=1}^{5} = \{0.0084,\,0.1187,\,0.2604,\,0.4590,\,0.7932\}.
\]
\end{itemize}


To study the effect of the discrete-time schedule, we first conduct ablation experiments on the BSDS300 tabular dataset (tabular data), where all other training and model configurations are kept fixed.
We fix the number of blocks to \(N=4\) and compare different choices of schedules, and the results are reported in Table~\ref{tab:ablation-bsds}(a).
We further evaluate the two diffusion schedules (linear and cosine) with varying numbers of blocks \(N \in \{4,5,6,8\}\), and the results are shown in Table~\ref{tab:ablation-bsds}(b).
Across all settings, the resulting NLLs are close to those reported in Table~\ref{tab:tabular_results}, indicating that performance on this dataset is relatively insensitive to the choice of schedule.

\begin{table*}[t]
\begin{minipage}{0.375 \linewidth}
\centering
\begin{tabular}{l r}
\hline
\lfm{} schedule &  FID\\
\hline
exponential (0.1, 1.5) &            6.80 \\
exponential (0.2, 1.5) &            7.89\\
exponential (0.3, 1.1)          &  8.55  \\
diffusion cosine 	          & 6.07	\\
diffusion linear 	         &  	23.16 \\
\hline
\end{tabular}
\subcaption{(a) FID on CIFAR-10, $N=4$ }
\end{minipage}
\begin{minipage}{0.495 \linewidth}
\centering
\begin{tabular}{l c c }
\hline
Model &  Memory (allocate, peak) & Total \# parameters\\
\hline
LFM $N=4$  & 0.91, 14.45 & 142.99 \\
FM   ($N=1$) & 2.80, 28.03 & 142.86\\
\hline
\end{tabular}
\subcaption{(b) GPU memory print (GB) and model parameter count (M)}
\end{minipage}
\caption{\label{tab:cifar-more}
(a) FID (lower is better) on CIFAR10 with varying schedules.
(b) Memory print on a single GPU during training of a single block. 
The LFM model has 4 blocks, and each has a Unet hidden dimension of 128.
FM model has one block with Unet hidden dimension 256.}
\end{table*}

We additionally conduct experiments on the image dataset CIFAR-10 to examine the effect of the schedule. 
The results are summarized in Table~\ref{tab:cifar-more}(a).
We use $N=4$, the linear (OT) interpolation function, and the same setup as before.
Among all tested schedules,
the exponential schedule with parameters \((c,\rho)=(0.1,1.5)\) achieves the best FID of 6.8.
Other exponential variants and the cosine diffusion schedule yield slightly worse performance, with FID increases within 2.
In contrast, the linear diffusion schedule performs substantially worse at this small number of blocks.
We attribute this behavior to the interaction between coarse discretization and noise allocation.
When \(N\) is very small, the linear schedule concentrates a large amount of noise in later blocks, which appears to be unfavorable for image generation. The cosine schedule, by contrast, allocates noise more smoothly across blocks and exhibits better robustness under coarse discretization.
The GPU memory prints for \lfm{} (N=4) and FM (N=1, doubled Unet base channel) are given in Table~\ref{tab:cifar-more}(b).

\section*{Acknowledgment}

The authors would like to thank Edward Chen and Junghwan Lee for help with numerical experiments.
The authors also thank the anonymous referees and the Associate Editor for their constructive feedback. 
YX and CX acknowledge  NSF CAREER CCF-1650913, NSF DMS-2134037, CMMI-2015787, CMMI-2112533, DMS-1938106, DMS-1830210, ONR N000142412278, and the Coca-Cola Foundation.
XC acknowledges
NSF DMS-2237842
and 
Simons Foundation MPS-MODL-00814643.

\bibliographystyle{IEEEtran}
\bibliography{references}

@inproceedings{parmar2021cleanfid,
  title={On Aliased Resizing and Surprising Subtleties in GAN Evaluation},
  author={Parmar, Gaurav and Zhang, Richard and Zhu, Jun-Yan},
  booktitle={CVPR},
  year={2022}
}

@book{ambrosio2005gradient,
  title={Gradient flows: {I}n metric spaces and in the space of probability measures},
  author={Ambrosio, Luigi and Gigli, Nicola and Savar{\'e}, Giuseppe},
  year={2005},
  publisher={Springer Science \& Business Media}
}

@article{jordan1998variational,
  title={The variational formulation of the Fokker--Planck equation},
  author={Jordan, Richard and Kinderlehrer, David and Otto, Felix},
  journal={SIAM journal on mathematical analysis},
  volume={29},
  number={1},
  pages={1--17},
  year={1998},
  publisher={SIAM}
}

@article{johnson2019framework,
  title={A framework of composite functional gradient methods for generative adversarial models},
  author={Johnson, Rie and Zhang, Tong},
  journal={IEEE {T}ransactions on {P}attern {A}nalysis and {M}achine {I}ntelligence},
  volume={43},
  number={1},
  pages={17--32},
  year={2019},
  publisher={IEEE}
}

@article{geng2025mean,
  title={Mean Flows for One-step Generative Modeling},
  author={Geng, Zhengyang and Deng, Mingyang and Bai, Xingjian and Kolter, J Zico and He, Kaiming},
  journal={arXiv preprint arXiv:2505.13447},
  year={2025}
}

@article{mokrov2021large,
  title={Large-scale {W}asserstein gradient flows},
  author={Mokrov, Petr and Korotin, Alexander and Li, Lingxiao and Genevay, Aude and Solomon, Justin M and Burnaev, Evgeny},
  journal={NeurIPS},
  volume={34},
  pages={15243--15256},
  year={2021}
}

@article{vidal2023taming,
  title={Taming hyperparameter tuning in continuous normalizing flows using the {JKO} scheme},
  author={Vidal, Alexander and Wu Fung, Samy and Tenorio, Luis and Osher, Stanley and Nurbekyan, Levon},
  journal={Scientific Reports},
  volume={13},
  number={1},
  pages={4501},
  year={2023},
  publisher={Nature Publishing Group UK London}
}

@article{nflow_review,
  title={Normalizing flows: An introduction and review of current methods},
  author={Kobyzev, Ivan and Prince, Simon JD and Brubaker, Marcus A},
  journal={IEEE {T}ransactions on {P}attern {A}nalysis and {M}achine {I}ntelligence},
  volume={43},
  number={11},
  pages={3964--3979},
  year={2020},
  publisher={IEEE}
}

@inproceedings{song2023consistency,
  title={Consistency Models},
  author={Song, Yang and Dhariwal, Prafulla and Chen, Mark and Sutskever, Ilya},
  booktitle={ICML},
  year={2023}
}

@article{albergo2023stochastic,
  title={Stochastic interpolants: A unifying framework for flows and diffusions},
  author={Albergo, Michael S and Boffi, Nicholas M and Vanden-Eijnden, Eric},
  journal={arXiv:2303.08797},
  year={2023}
}

@inproceedings{
silveri2024theoretical,
title={Theoretical guarantees in {KL} for Diffusion Flow Matching},
author={Marta Gentiloni Silveri and Alain Oliviero Durmus and Giovanni Conforti},
booktitle={NeurIPS},
year={2024}
}

@inproceedings{benton2024nearly,
title={Nearly d-Linear Convergence Bounds for Diffusion Models via Stochastic Localization},
author={Joe Benton and Valentin De Bortoli and Arnaud Doucet and George Deligiannidis},
booktitle={ICLR},
year={2024}
}

@inproceedings{chen2023improved,
  title={Improved analysis of score-based generative modeling: User-friendly bounds under minimal smoothness assumptions},
  author={Chen, Hongrui and Lee, Holden and Lu, Jianfeng},
  booktitle={ICML},
  pages={4735--4763},
  year={2023},
  organization={PMLR}
}

@inproceedings{lee2023convergence,
  title={Convergence of score-based generative modeling for general data distributions},
  author={Lee, Holden and Lu, Jianfeng and Tan, Yixin},
  booktitle={International Conference on Algorithmic Learning Theory},
  pages={946--985},
  year={2023},
  organization={PMLR}
}

@inproceedings{chen2022sampling,
  title={Sampling is as easy as learning the score: {T}heory for diffusion models with minimal data assumptions},
  author={Chen, Sitan and Chewi, Sinho and Li, Jerry and Li, Yuanzhi and Salim, Adil and Zhang, Anru},
  booktitle={ICLR},
  year={2022}
}

@article{li2024sharp,
  title={A Sharp Convergence Theory for The Probability Flow {ODE}s of Diffusion Models},
  author={Li, Gen and Wei, Yuting and Chi, Yuejie and Chen, Yuxin},
  journal={arXiv:2408.02320},
  year={2024}
}

@article{huang2025convergence,
  title={Convergence analysis of probability flow ode for score-based generative models},
  author={Huang, Daniel Zhengyu and Huang, Jiaoyang and Lin, Zhengjiang},
  journal={IEEE Transactions on Information Theory},
  year={2025},
  publisher={IEEE}
}

@article{marzouk2024distribution,
  title={Distribution learning via neural differential equations: a nonparametric statistical perspective},
  author={Marzouk, Youssef and Ren, Zhi Robert and Wang, Sven and Zech, Jakob},
  journal={Journal of Machine Learning Research},
  volume={25},
  number={232},
  pages={1--61},
  year={2024}
}

@article{benton2024error,
  title={Error bounds for flow matching methods},
  author={Benton, J and Deligiannidis, G and Doucet, A},
  journal={Transactions on Machine Learning Research},
  year={2024},
  publisher={Transactions on Machine Learning Research}
}

@article{gao2024convergence,
  title={Convergence of Continuous Normalizing Flows for Learning Probability Distributions},
  author={Gao, Yuan and Huang, Jian and Jiao, Yuling and Zheng, Shurong},
  journal={arXiv:2404.00551},
  year={2024}
}

@inproceedings{
li2024towards,
title={Towards Non-Asymptotic Convergence for Diffusion-Based Generative Models},
author={Gen Li and Yuting Wei and Yuxin Chen and Yuejie Chi},
booktitle={ICLR},
year={2024}
}

@article{chen2024probability,
  title={The probability flow {ODE} is provably fast},
  author={Chen, Sitan and Chewi, Sinho and Lee, Holden and Li, Yuanzhi and Lu, Jianfeng and Salim, Adil},
  journal={NeurIPS},
  volume={36},
  year={2024}
}

@inproceedings{chen2023restoration,
  title={Restoration-degradation beyond linear diffusions: A non-asymptotic analysis for {DDIM}-type samplers},
  author={Chen, Sitan and Daras, Giannis and Dimakis, Alex},
  booktitle={ICML},
  pages={4462--4484},
  year={2023},
  organization={PMLR}
}

@article{cheng2024convergence,
  title={Convergence of flow-based generative models via proximal gradient descent in {W}asserstein space},
  author={Cheng, Xiuyuan and Lu, Jianfeng and Tan, Yixin and Xie, Yao},
  journal={IEEE Transactions on Information Theory},
  year={2024},
  publisher={IEEE}
}

@article{raginsky2016strong,
  title={Strong data processing inequalities and {$\Phi$}-{S}obolev inequalities for discrete channels},
  author={Raginsky, Maxim},
  journal={IEEE Transactions on Information Theory},
  volume={62},
  number={6},
  pages={3355--3389},
  year={2016},
  publisher={IEEE}
}

@article{papamakarios2017masked,
  title={Masked autoregressive flow for density estimation},
  author={Papamakarios, George and Pavlakou, Theo and Murray, Iain},
  journal={NeurIPS},
  volume={30},
  year={2017}
}

@inproceedings{
albergo2023building,
title={Building Normalizing Flows with Stochastic Interpolants},
author={Michael S Albergo and Eric Vanden-Eijnden},
booktitle={ICLR},
year={2023}
}

@inproceedings{esser2024scaling,
  title={Scaling rectified flow transformers for high-resolution image synthesis},
  author={Esser, Patrick and Kulal, Sumith and Blattmann, Andreas and Entezari, Rahim and M{\"u}ller, Jonas and Saini, Harry and Levi, Yam and Lorenz, Dominik and Sauer, Axel and Boesel, Frederic and others},
  booktitle={ICML},
  year={2024}
}

@article{heusel2017gans,
  title={{GAN}s trained by a two time-scale update rule converge to a local nash equilibrium},
  author={Heusel, Martin and Ramsauer, Hubert and Unterthiner, Thomas and Nessler, Bernhard and Hochreiter, Sepp},
  journal={NeurIPS},
  volume={30},
  year={2017}
}

@inproceedings{
    xu2023normalizing,
    title={Normalizing flow neural networks by {JKO} scheme},
    author={Chen Xu and Xiuyuan Cheng and Yao Xie},
    booktitle={NeurIPS},
    year={2023}
}

@article{chen2018neural,
  title={Neural ordinary differential equations},
  author={Chen, Ricky TQ and Rubanova, Yulia and Bettencourt, Jesse and Duvenaud, David K},
  journal={NeurIPS},
  volume={31},
  year={2018}
}

@inproceedings{rouxel2024flow,
  title={Flow matching imitation learning for multi-support manipulation},
  author={Rouxel, Quentin and Ferrari, Andrea and Ivaldi, Serena and Mouret, Jean-Baptiste},
  booktitle={2024 IEEE-RAS 23rd International Conference on Humanoid Robots (Humanoids)},
  pages={528--535},
  year={2024},
  organization={IEEE}
}

@inproceedings{guan2024lafma,
title={{LAFMA}: A Latent Flow Matching Model for Text-to-Audio Generation},
author={Guan, Wenhao and Wang, Kaidi and Zhou, Wangjin and Wang, Yang and Deng, Feng and Wang, Hui and Li, Lin and Hong, Qingyang and Qin, Yong},
booktitle={Interspeech 2024},
address={Kos, Greece},
year={2024}
}

@inproceedings{
gat2024discrete,
title={Discrete Flow Matching},
author={Itai Gat and Tal Remez and Neta Shaul and Felix Kreuk and Ricky T. Q. Chen and Gabriel Synnaeve and Yossi Adi and Yaron Lipman},
booktitle={NeurIPS},
year={2024}
}

@inproceedings{nichol2021improved,
  title={Improved denoising diffusion probabilistic models},
  author={Nichol, Alexander Quinn and Dhariwal, Prafulla},
  booktitle={ICML},
  pages={8162--8171},
  year={2021},
  organization={PMLR}
}

@article{yu2015lsun,
  title={{LSUN}: Construction of a large-scale image dataset using deep learning with humans in the loop},
  author={Yu, Fisher and Seff, Ari and Zhang, Yinda and Song, Shuran and Funkhouser, Thomas and Xiao, Jianxiong},
  journal={arXiv:1506.03365},
  year={2015}
}

@INPROCEEDINGS{Nilsback2008flower,
  author={Nilsback, Maria-Elena and Zisserman, Andrew},
  booktitle={2008 Sixth Indian Conference on Computer Vision, Graphics \& Image Processing}, 
  title={Automated Flower Classification over a Large Number of Classes}, 
  year={2008},
  volume={},
  number={},
  pages={722-729},
  keywords={Shape;Kernel;Distributed computing;Support vector machines;Support vector machine classification;object classification;segmentation},
  doi={10.1109/ICVGIP.2008.47}}

@inproceedings{deng2009imagenet,
  title={Imagenet: A large-scale hierarchical image database},
  author={Deng, Jia and Dong, Wei and Socher, Richard and Li, Li-Jia and Li, Kai and Fei-Fei, Li},
  booktitle={CVPR},
  pages={248--255},
  year={2009},
  organization={IEEE}
}

@inproceedings{de2020block,
  title={Block neural autoregressive flow},
  author={De Cao, Nicola and Aziz, Wilker and Titov, Ivan},
  booktitle={UAI},
  pages={1263--1273},
  year={2020},
  organization={PMLR}
}

@inproceedings{li2022adacat,
  title={AdaCat: Adaptive categorical discretization for autoregressive models},
  author={Li, Qiyang and Jain, Ajay and Abbeel, Pieter},
  booktitle={UAI},
  pages={1188--1198},
  year={2022},
organization={PMLR}
}

@inproceedings{fan2022variational,
  title={Variational {W}asserstein gradient flow},
  author={Fan, Jiaojiao and Zhang, Qinsheng and Taghvaei, Amirhossein and Chen, Yongxin},
  booktitle={ICML},
  pages={6185--6215},
  year={2022},
  organization={PMLR}
}

@article{langendorff2023normalizing,
  title={Normalizing flows as an avenue to studying overlapping gravitational wave signals},
  author={Langendorff, Jurriaan and Kolmus, Alex and Janquart, Justin and Van Den Broeck, Chris},
  journal={Physical Review Letters},
  volume={130},
  number={17},
  pages={171402},
  year={2023},
  publisher={APS}
}

@article{de2020normalizing,
  title={Normalizing {K}alman filters for multivariate time series analysis},
  author={de B{\'e}zenac, Emmanuel and Rangapuram, Syama Sundar and Benidis, Konstantinos and Bohlke-Schneider, Michael and Kurle, Richard and Stella, Lorenzo and Hasson, Hilaf and Gallinari, Patrick and Januschowski, Tim},
  journal={NeurIPS},
  volume={33},
  pages={2995--3007},
  year={2020}
}

@inproceedings{berry2023normalizing,
  title={Normalizing flow ensembles for rich aleatoric and epistemic uncertainty modeling},
  author={Berry, Lucas and Meger, David},
  booktitle={AAAI},
  volume={37},
  pages={6806--6814},
  year={2023}
}

@article{gilboa2021marginalizable,
  title={Marginalizable density models},
  author={Gilboa, Dar and Pakman, Ari and Vatter, Thibault},
  journal={arXiv:2106.04741},
  year={2021}
}

@inproceedings{chi2023diffusionpolicy,
	title={Diffusion Policy: Visuomotor Policy Learning via Action Diffusion},
	author={Chi, Cheng and Feng, Siyuan and Du, Yilun and Xu, Zhenjia and Cousineau, Eric and Burchfiel, Benjamin and Song, Shuran},
	booktitle={Proceedings of Robotics: Science and Systems (RSS)},
	year={2023}
}

@inproceedings{
salimans2022progressive,
title={Progressive Distillation for Fast Sampling of Diffusion Models},
author={Tim Salimans and Jonathan Ho},
booktitle={ICLR},
year={2022}
}

@inproceedings{
baranchuk2021distilling,
title={Distilling the Knowledge from Conditional Normalizing Flows},
author={Dmitry Baranchuk and Vladimir Aliev and Artem Babenko},
booktitle={ICML Workshop on Invertible Neural Networks, Normalizing Flows, and Explicit Likelihood Models},
year={2021}
}

@inproceedings{
liu2023flow,
title={Flow Straight and Fast: Learning to Generate and Transfer Data with Rectified Flow},
author={Xingchao Liu and Chengyue Gong and qiang liu},
booktitle={ICLR},
year={2023}
}

@inproceedings{janner2022planning,
  title={Planning with Diffusion for Flexible Behavior Synthesis},
  author={Janner, Michael and Du, Yilun and Tenenbaum, Joshua and Levine, Sergey},
  booktitle={ICML},
  pages={9902--9915},
  year={2022},
  organization={PMLR}
}

@inproceedings{
hu2024adaflow,
title={AdaFlow: Imitation Learning with Variance-Adaptive Flow-Based Policies},
author={Xixi Hu and qiang liu and Xingchao Liu and Bo Liu},
booktitle={NeurIPS},
year={2024}
}

@inproceedings{robomimic2021,
  title={What Matters in Learning from Offline Human Demonstrations for Robot Manipulation},
  author={Ajay Mandlekar and Danfei Xu and Josiah Wong and Soroush Nasiriany and Chen Wang and Rohun Kulkarni and Li Fei-Fei and Silvio Savarese and Yuke Zhu and Roberto Mart\'{i}n-Mart\'{i}n},
  booktitle={CoRL},
  year={2021}
}

@Techreport{krizhevsky2009learning,
 author = {Krizhevsky, Alex and Hinton, Geoffrey},
 address = {Toronto, Ontario},
 institution = {University of Toronto},
 publisher = {Technical report, University of Toronto},
 title = {Learning multiple layers of features from tiny images},
 year = {2009},
 title_with_no_special_chars = {Learning multiple layers of features from tiny images}
}

@article{bolley2012convergence,
  title={Convergence to equilibrium in {W}asserstein distance for {F}okker--{P}lanck equations},
  author={Bolley, Fran{\c{c}}ois and Gentil, Ivan and Guillin, Arnaud},
  journal={Journal of Functional Analysis},
  volume={263},
  number={8},
  pages={2430--2457},
  year={2012},
  publisher={Elsevier}
}

@inproceedings{grathwohl2018ffjord,
  title={{FFJORD}: Free-Form Continuous Dynamics for Scalable Reversible Generative Models},
  author={Grathwohl, Will and Chen, Ricky TQ and Bettencourt, Jesse and Sutskever, Ilya and Duvenaud, David},
  booktitle={ICLR},
  year={2018}
}

@inproceedings{onken2021ot,
  title={{OT-F}low: Fast and accurate continuous normalizing flows via optimal transport},
  author={Onken, Derek and Fung, Samy Wu and Li, Xingjian and Ruthotto, Lars},
  booktitle={AAAI},
  volume={35},
  pages={9223--9232},
  year={2021}
}

@inproceedings{finlay2020train,
  title={How to train your neural {ODE}: the world of {J}acobian and kinetic regularization},
  author={Finlay, Chris and Jacobsen, J{\"o}rn-Henrik and Nurbekyan, Levon and Oberman, Adam},
  booktitle={ICML},
  pages={3154--3164},
  year={2020},
  organization={PMLR}
}

@inproceedings{
xu2025computing,
title={Computing high-dimensional optimal transport by flow neural networks},
author={Chen Xu and Xiuyuan Cheng and Yao Xie},
booktitle={The 28th International Conference on Artificial Intelligence and Statistics},
year={2025}
}

@inproceedings{
    lipman2023flow,
    title={Flow Matching for Generative Modeling},
    author={Yaron Lipman and Ricky T. Q. Chen and Heli Ben-Hamu and Maximilian Nickel and Matthew Le},
    booktitle={ICLR},
    year={2023}
}

@article{
tong2023improving,
title={Improving and generalizing flow-based generative models with minibatch optimal transport},
author={Alexander Tong and Kilian FATRAS and Nikolay Malkin and Guillaume Huguet and Yanlei Zhang and Jarrid Rector-Brooks and Guy Wolf and Yoshua Bengio},
journal={Transactions on Machine Learning Research},
issn={2835-8856},
year={2024},
note={Expert Certification}
}

@article{liu2022rectified,
  title={Rectified flow: A marginal preserving approach to optimal transport},
  author={Liu, Qiang},
  journal={arXiv:2209.14577},
  year={2022}
}

@inproceedings{
nguyen2024bellman,
title={Bellman Optimal Stepsize Straightening of Flow-Matching Models},
author={Bao Nguyen and Binh Nguyen and Viet Anh Nguyen},
booktitle={ICLR},
year={2024}
}

@inproceedings{
    liu2024instaflow,
    title={Insta{F}low: One Step is Enough for High-Quality Diffusion-Based Text-to-Image Generation},
    author={Liu, Xingchao and Zhang, Xiwen and Ma, Jianzhu and Peng, Jian and Liu, Qiang},
    booktitle={ICLR},
    year={2024}
}

@inproceedings{
huang2021convex,
title={Convex Potential Flows: Universal Probability Distributions with Optimal Transport and Convex Optimization},
author={Chin-Wei Huang and Ricky T. Q. Chen and Christos Tsirigotis and Aaron Courville},
booktitle={ICLR},
year={2021}
}

@article{ho2020denoising,
  title={Denoising diffusion probabilistic models},
  author={Ho, Jonathan and Jain, Ajay and Abbeel, Pieter},
  journal={NeurIPS},
  volume={33},
  pages={6840--6851},
  year={2020}
}

@article{goodfellow2014generative,
  title={Generative adversarial nets},
  author={Goodfellow, Ian and Pouget-Abadie, Jean and Mirza, Mehdi and Xu, Bing and Warde-Farley, David and Ozair, Sherjil and Courville, Aaron and Bengio, Yoshua},
  journal={NeurIPS},
  volume={27},
  year={2014}
}

@article{saxena2021generative,
  title={Generative adversarial networks ({GAN}s) challenges, solutions, and future directions},
  author={Saxena, Divya and Cao, Jiannong},
  journal={ACM Computing Surveys (CSUR)},
  volume={54},
  number={3},
  pages={1--42},
  year={2021},
  publisher={ACM New York, NY, USA}
}

@inproceedings{Kingma2014,
  added-at = {2020-10-15T14:36:56.000+0200},
  author = {Kingma, Diederik P. and Welling, Max},
  booktitle = {ICLR},
  eprintclass = {stat.ML},
  eprinttype = {arXiv},
  keywords = {cs.LG stat.ML vae},
  title = {{Auto-Encoding Variational Bayes}},
  year = 2014
}

@article{
liu2021density,
author = {Qiao Liu  and Jiaze Xu  and Rui Jiang  and Wing Hung Wong },
title = {Density estimation using deep generative neural networks},
journal = {Proceedings of the National Academy of Sciences},
volume = {118},
number = {15},
pages = {e2101344118},
year = {2021},}

@inproceedings{
song2021scorebased,
title={Score-Based Generative Modeling through Stochastic Differential Equations},
author={Yang Song and Jascha Sohl-Dickstein and Diederik P Kingma and Abhishek Kumar and Stefano Ermon and Ben Poole},
booktitle={ICLR},
year={2021}
}

@article{huang2021variational,
  title={A variational perspective on diffusion-based generative models and score matching},
  author={Huang, Chin-Wei and Lim, Jae Hyun and Courville, Aaron C},
  journal={NeurIPS},
  volume={34},
  pages={22863--22876},
  year={2021}
}

@inproceedings{rombach2022high,
  title={High-resolution image synthesis with latent diffusion models},
  author={Rombach, Robin and Blattmann, Andreas and Lorenz, Dominik and Esser, Patrick and Ommer, Bj{\"o}rn},
  booktitle={CVPR},
  pages={10684--10695},
  year={2022}
}

@inproceedings{
kong2021diffwave,
title={{D}iff{W}ave: A Versatile Diffusion Model for Audio Synthesis},
author={Zhifeng Kong and Wei Ping and Jiaji Huang and Kexin Zhao and Bryan Catanzaro},
booktitle={ICLR},
year={2021}
}
\end{document}